\title{Hidden Progress in Deep Learning: \\ SGD Learns Parities Near the Computational Limit}
\author{Boaz Barak$^1$ \qquad Benjamin L. Edelman$^1$ \qquad Surbhi Goel$^{2,3}$\\ Sham Kakade$^{1,2}$ \qquad Eran Malach$^4$\qquad Cyril Zhang$^2$\\
\vspace{-2mm} \\
  \normalsize{$^1$Harvard University\quad $^2$Microsoft Research} \\ \normalsize{$^3$University of Pennsylvania \quad $^4$Hebrew University of Jerusalem}\\
   \vspace{-2mm} \\
\normalsize{ \texttt{b@boazbarak.org, bedelman@g.harvard.edu, surbhig@cis.upenn.edu}}\\\normalsize{  \texttt{sham@seas.harvard.edu, eran.malach@mail.huji.ac.il, cyrilzhang@microsoft.com} } }
\date{}
\begin{document}

\doparttoc 
\faketableofcontents 

\maketitle
\begin{abstract}
There is mounting evidence of \emph{emergent phenomena} in the capabilities of deep learning methods as we scale up datasets, model sizes, and training times. While there are some accounts of how these resources modulate statistical capacity, far less is known about their effect on the \emph{computational} problem of model training. This work conducts such an exploration through the lens of learning a $k$-sparse parity of $n$ bits, a canonical discrete search problem which is statistically easy but computationally hard. Empirically, we find that a variety of neural networks successfully learn sparse parities, with discontinuous phase transitions in the training curves. On small instances, learning abruptly occurs at approximately $n^{O(k)}$ iterations; this nearly matches SQ lower bounds, despite the apparent lack of a sparse prior. Our theoretical analysis shows that these observations are \emph{not} explained by a Langevin-like mechanism, whereby SGD ``stumbles in the dark'' until it finds the hidden set of features (a natural algorithm which also runs in $n^{O(k)}$ time). Instead, we show that SGD gradually amplifies the sparse solution via a \emph{Fourier gap} in the population gradient, making continual progress that is invisible to loss and error metrics.
\end{abstract}

\section{Introduction}

In deep learning, performance improvements are frequently observed upon simply scaling up resources (such as data, model size, and training time). While these improvements are often continuous in terms of these resources, some of the most surprising recent advances in the field have been \emph{emergent capabilities}: at a certain threshold, behavior changes qualitatively and \emph{discontinuously}.
Through a statistical lens, it is well-understood that larger models, trained with more data, can fit more complex and expressive functions. However, far less is known about the analogous \emph{computational} question: \emph{how does the scaling of these resources influence the success of gradient-based optimization?}

These \emph{phase transitions} cannot be explained via statistical capacity alone: they can appear even when the amount of data remains fixed, with only model size or training time increasing.
A timely example is the emergence of reasoning and few-shot learning capabilities when scaling up language models \citep{radford2019language,brown2020language,chowdhery2022palm,hoffmann2022training}; \citet{srivastava2022beyond} identify various tasks which language models are only able to solve if they are larger than a critical scale. \citet{power2022grokking} give examples of discontinuous improvements in population accuracy (``grokking'') when running time increases, while dataset and model sizes remain fixed.

In this work, we analyze the computational aspects of scaling in deep learning, in an elementary synthetic setting which already exhibits discontinuous improvements. Specifically, we consider the supervised learning problem of \emph{learning a sparse parity}: the label is the parity (XOR) of $k \ll n$ bits in a random length-$n$ binary string. This problem is computationally difficult for a range of algorithms, including gradient-based \citep{kearns1998efficient} and streaming \citep{kol2017time} algorithms. We focus on analyzing the resource measure of \emph{training time}, and demonstrate that the loss curves for sparse parities display a phase transition across a variety of architectures and hyperparameters (see Figure~\ref{fig:intro}, left). Strikingly, we observe that SGD finds the sparse subset (and hence, reaches 0 error) with a variety of activation functions and initialization schemes, even with \emph{no} over-parameterization.

\begin{figure}
    \centering %
    \includegraphics[width=0.62\linewidth]{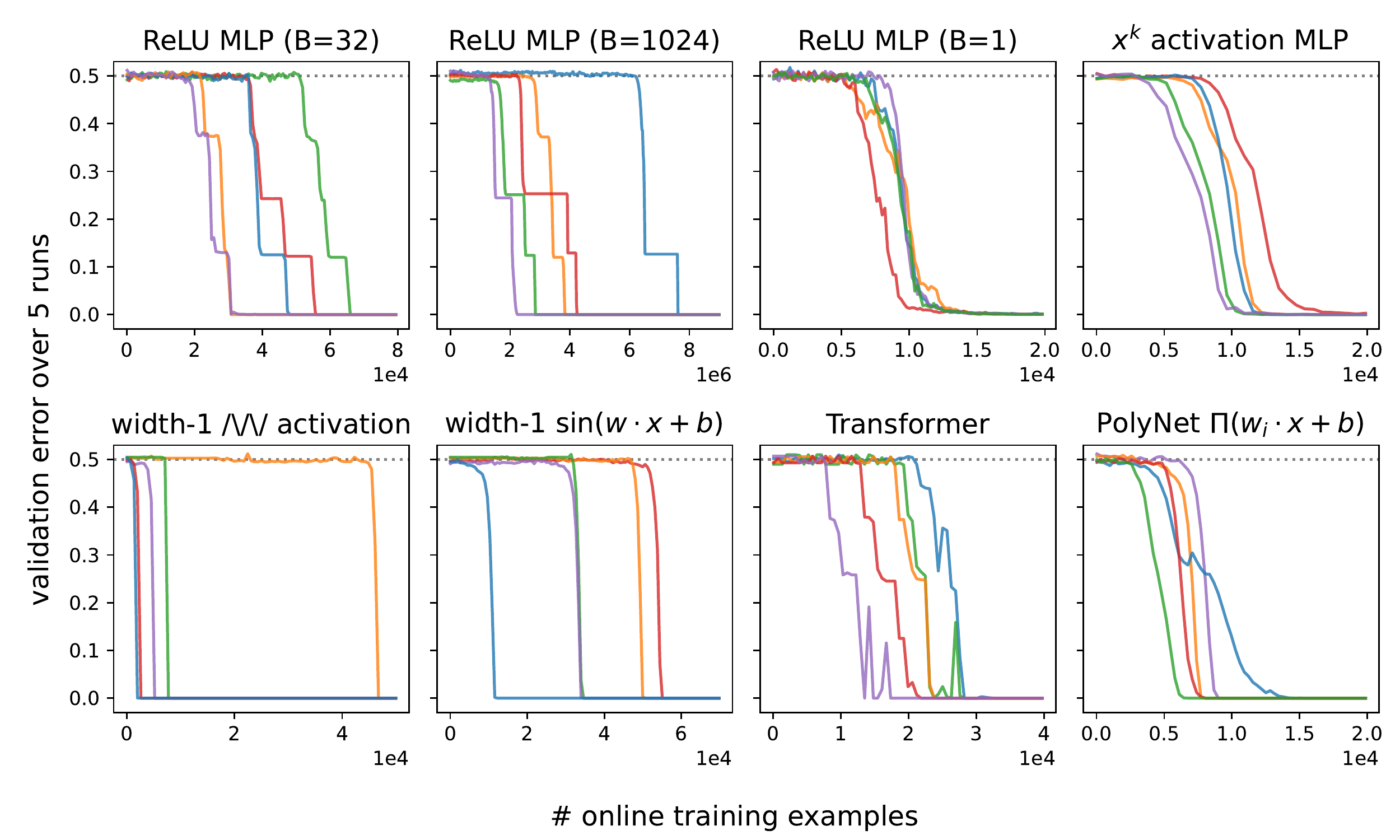}
    \includegraphics[width=0.373\linewidth]{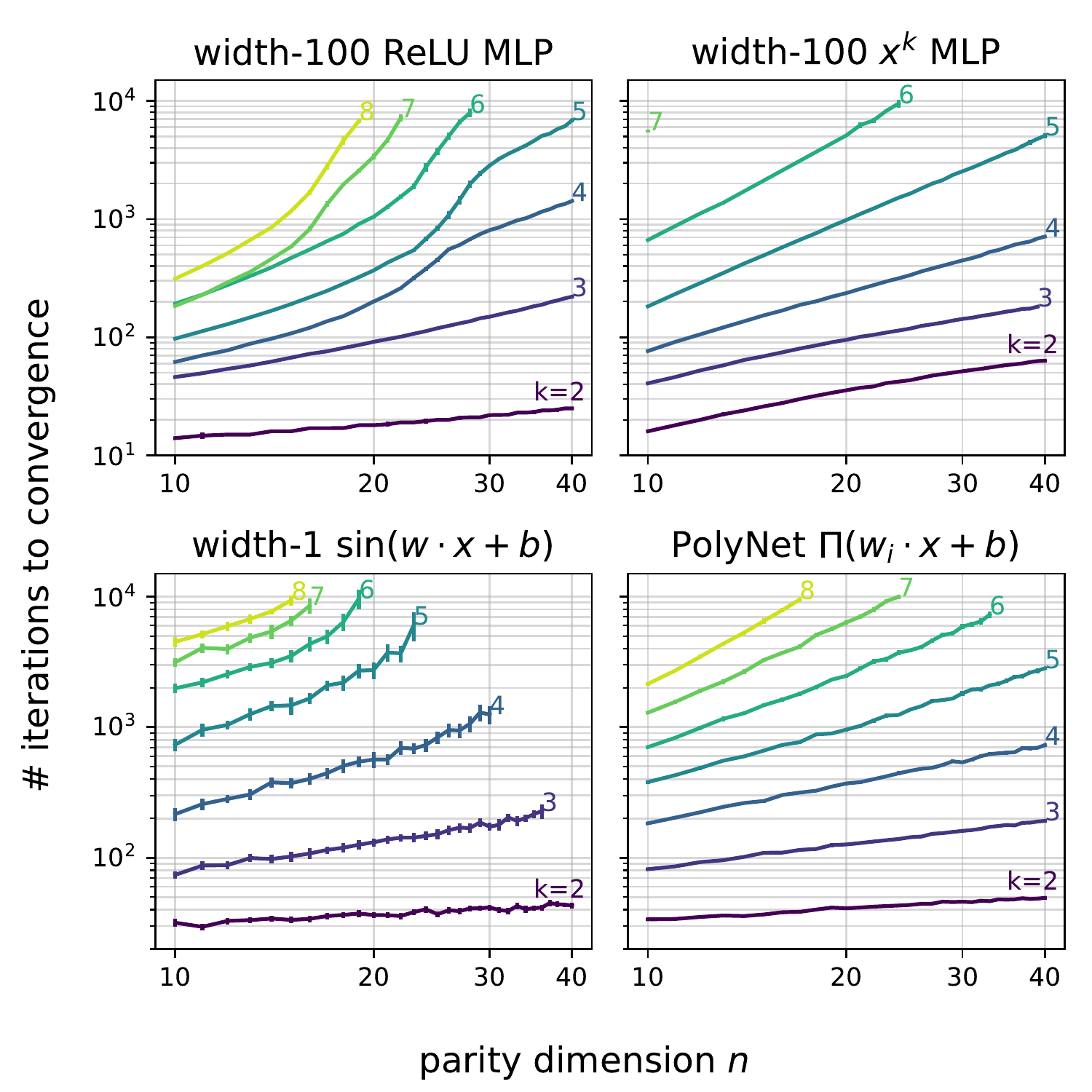}
    \vspace{-6mm}
    \caption{Main empirical findings at a glance. A variety of neural networks, with standard training and initialization, can solve the $(n,k)$-parity learning problem, with a number of iterations scaling as $n^{O(k)}$. \emph{Left:} Training curves under various algorithmic choices (architecture, batch size, learning rate) on the $(n=50, k=3)$-parity problem. \emph{Right:} Median convergence times for small $(n,k)$.}
    \vspace{-2mm}
    \label{fig:intro}
\end{figure}

A natural hypothesis to explain SGD's success in learning parities, with no visible progress in error and loss for most of training, would be that it simply ``stumbles in the dark'', performing random search for the unknown target (e.g. via stochastic gradient Langevin dynamics). If that were the case, we might expect to observe a convergence time of $2^{\Omega(n)}$, like a naive search over parameters or subsets of indices.
However, Figure~\ref{fig:intro} \emph{(right)}, already provides some evidence against this ``random search'' hypothesis: the convergence time adapts to the sparsity parameter $k$, with a scaling of $n^{O(k)}$ on small instances. Notably, such a convergence rate implies that SGD is closer to achieving the \emph{optimal} computation time among a natural class of algorithms (namely, statistical query algorithms).

Through an extensive empirical analysis of the scaling behavior of a variety of models, as well as theoretical analysis, we give strong evidence against the ``stumbling in the dark'' viewpoint. Instead, there is a \emph{hidden progress measure} under which SGD is steadily improving. Furthermore, and perhaps surprisingly, we show that SGD achieves a computational runtime much closer to the optimal SQ lower bound than simply doing (non-sparse) parameter search. More generally, our investigations reveal a number of notable phenomena regarding the dependence of SGD's performance on resources: we identify phase transitions when varying data, model size, and training time.

\subsection{Our contributions}

\paragraph{SGD learns sparse parities.}
It is known from SQ lower bounds that with a constant noise level, gradient descent on \emph{any} architecture requires at least $n^{\Omega(k)}$ computational steps to learn $k$-sparse $n$-dimensional parities (for background, see Appendix~\ref{app:preliminaries}). We first show a wide variety of positive empirical results, in which neural networks successfully solve the parity problem in a number of iterations which scales near this computational limit:

\begin{empfind}
For all small instances ($n \leq 30, k \leq 4$) of the sparse parity problem, architectures $\mathcal{A} \in \{$2-layer MLPs, Transformers\footnote{With a smaller range of hyperparameters.}, sinusoidal/oscillating neurons, PolyNets\footnote{A non-standard architecture introduced in this work; see Section~\ref{sec:empirics} for the definition.}$\}$, initializations in $\{$uniform, Gaussian, Bernoulli$\}$, and batch sizes $1 \leq B \leq 1024$, SGD on $\mathcal{A}$ solves the $(n,k)$-sparse parity problem (w.p. $\geq 0.2$) within at most $c \cdot n^{\alpha k}$ steps, for small constants $c, \alpha$.
\end{empfind}

\paragraph{Theoretical analyses of sparse feature emergence.} Our empirical results suggest that, in a number of computational steps matching the SQ limit, SGD is able to solve the parity problem and identify the influential coordinates, without an explicit sparse prior. We give a theoretical analysis which validates this claim.
\begin{inftheorem}
On 2-layer MLPs of width $2^{\Theta(k)}$, and with batch size $n^{O(k)}$, SGD converges with high probability to a solution with at most $\eps$ error on the $(n,k)$-parity problem in at most $2^{O(k)} \cdot \poly(1/\eps)$ iterations.
\end{inftheorem}

We also present a stronger analysis for an idealized architecture (which we call the \emph{disjoint-PolyNet}), which allows for any batch size, and captures the phase transitions observed in the error curves.

\begin{inftheorem}
On disjoint-PolyNets, SGD (with any batch size $B \geq 1$) converges with high probability to a solution with at most $\eps$ error on the $(n,k)$-parity problem in at most $n^{O(k)} \cdot \log(1/\eps)$ iterations. Continuous-time gradient flow exhibits a phase transition: it spends a $1 - o(1)$ fraction of its time before convergence with error $\geq 49\%$.
\end{inftheorem}

Our theoretical and empirical results hold in non-overparameterized regimes (including with a width-$1$ sinusoidal neuron), in which no fixed kernel, including the neural tangent kernel (NTK) \citep{jacot2018neural}, is sufficiently expressive to fit all sparse parities with a large margin. Thus, our findings comprise an elementary example of \emph{combinatorial feature learning}: SGD can only successfully converge by learning a low-width sparse representation.

\paragraph{Further empirical explorations.} Building upon our core positive results, we provide a wide variety of preliminary experiments, showing sparse parity learning to be a versatile testbed for understanding the challenges and surprises in solving combinatorial problems with neural networks. These include quantities which reveal the continual \emph{hidden progress} behind uninformative training curves (as predicted by the theory), experiments at small sample sizes which exhibit \emph{grokking} \citep{power2022grokking}, as well as an example where greedy layer-wise learning is impossible but end-to-end SGD can learn the layers jointly.

\subsection{Related work} \label{sec:related}
We present the most directly related work on feature learning, and learning parities with neural nets. A broader discussion can be found in Appendix \ref{app:relwork}.

\paragraph{SGD and feature learning.}
Theoretical analysis of gradient descent on neural networks is notoriously hard, due to the non-convex nature of the optimization problem. That said, it has been established that in some settings, the dynamics of GD keep the weights close to their initialization, thus behaving like convex optimization over the Neural Tangent Kernel (see, for example, \citep{jacot2018neural, allen2019convergence, du2018gradient}). In contrast, it has been shown that in various tasks, moving away from the fixed features of the NTK is essential for the success of neural networks trained with GD (for example \citep{yehudai2019power, allen2019can, wei2019regularization} and the review in \citep{malach2021quantifying}). These results demonstrate that feature learning is an important part of the GD optimization process. Our work also focuses on a setting where feature learning is essential for the target task. In our theoretical analysis, we show that the initial population gradient encodes the relevant features for the problem. The importance of the first gradient step for feature learning has been recently studied in \citep{ba2022high}.

\paragraph{Learning parities with neural networks.}
The problem of learning parities using neural networks has been investigated in prior works from various perspectives. It has been demonstrated that parities are hard for gradient-based algorithms, using similar arguments as in the SQ analysis \citep{shalev2017failures, abbe2020poly}. One possible approach for overcoming the computational hardness is to make favorable assumptions on the input distribution. Indeed, recent works show that under various assumptions on the input distribution, neural networks can be efficiently trained to learn parities (XORs) \citep{daniely2020learning,shi2021theoretical,frei2022random,malach2021quantifying}. In contrast to these results, this work takes the approach of intentionally focusing on a hard benchmark task, without assuming that the distribution has some favorable (namely, non-uniform) structure. This setting allows us to probe the performance of deep learning at a known computational limit. Notably, the work of \cite{andoni2014learning} provides analysis for learning polynomials (and in particular, parities) under the uniform distribution. However, their main results require a network of size $n^{O(k)}$ (i.e., extremely overparameterized network), and provides only partial theoretical and empirical evidence for the success of smaller networks. 
Studying a related subject, some works have shown that neural networks display a spectral bias, learning to fit low-frequency coefficients before high-frequency ones \citep{rahaman2019spectral,cao2019towards}.

\section{Preliminaries}

We provide an expanded discussion of background and related work in Appendix~\ref{app:preliminaries}.

\paragraph{Sparse parities.} For integer $n\geq 1$ and non-empty set $S \subseteq [n]$, the $(n,S)$-\emph{parity function} $\chi_S:\{ \pm 1 \}^n \rightarrow \{ \pm 1 \}$ is defined as $\chi_S(x) = \prod_{i\in S}x_i$. We define the $(n,S)$-\emph{parity distribution} $\Dcal_S$ as the joint distribution over $(x,y)$\footnote{Our theoretical analyses and experiments can tolerate noisy parities, that is, random flipping of the label; see Appendix~\ref{subsec:noise-experiments}. For ease of presentation, we state the noiseless setting in the main paper.} where $x$ is drawn from $\mathrm{Unif}(\{\pm 1\}^n)$, the uniform distribution over random length-$n$ sign vectors, and $y := \chi_S(x)$ is the product of the inputs at the indices given by the ``relevant features'' $S$ (thus, $\pm 1$, depending on whether the number of relevant $-1$ inputs is even or odd).
We define the $(n,k)$-\emph{parity learning problem} as the task of recovering the $S$ using samples from $\Dcal_S$, where $S$ is chosen at random from $\binom{[n]}{k}$.

A key fact about parities is that they are orthogonal under the correlation inner product: for $S' \subseteq [n]$,
\[
\E_{x \sim \mathrm{Unif}(\{\pm 1\}^n)} \left[\chi_S(x) \chi_{S'}(x)\right] = \E_{(x,y) \sim \Dcal_S} \left[ \chi_{S'}(x) \, y \right] =
\begin{cases}
1 & S' = S \\
0 & \text{otherwise}
\end{cases}.
\]
That is, a learner who guesses indices $S'$ cannot use correlations (equivalently, the accuracy of the hypothesis $\chi_{S'}$) as feedback to reveal which indices in $S'$ are correct, unless $S'$ is \emph{exactly} the correct subset. This notion of indistinguishability leads to a \emph{computational} lower bound in the statistical query (SQ) model \citep{kearns1998efficient}: $\Omega(n^k)$ constant-noise queries are necessary, which is far greater than the statistical limit of $\Theta(\log \binom{n}{k}) \approx k \log n$ samples. The hardness of parity has been used to derive computational hardness results for other settings, like agnostically learning halfspaces \citep{klivans2014embedding} and MLPs \citep{goel2019time}. Beyond the restricted computational model of statistical queries, noiseless parities can be learned in $\poly(n)$ time via Gaussian elimination. However, learning sparse \emph{noisy} parities, even at a very small noise level (i.e., $o(1)$ or $n^{-\delta}$), is believed to inherently require $n^{\Omega(k)}$ computational steps.\footnote{This was first explicitly conjectured by \citet{alekhnovich2003more}, and has been the basis for several cryptographic schemes (e.g., \citep{ishai2008cryptography, applebaum2009fast, applebaum2010public, bogdanov2019xor}).}
In all, learning sparse parities is a well-studied combinatorial problem which exemplifies the computational difficulty of learning a joint dependence on multiple relevant features.

\paragraph{Notation for neural networks and training.} Our main results are presented in the \emph{online learning} setting, with a stream of i.i.d. batches of examples. At each iteration $t = 1, \ldots, T$, a learning algorithm $\mathcal{A}$ receives a batch of $B$ examples $\{(x_{t,i},y_{t,i})\}_{i = 1}^{B}$ drawn i.i.d. from $\Dcal_S$, then outputs a classifier $\hat y_t : \{\pm 1\}^n \rightarrow \{\pm 1\}$. We say that $\mathcal{A}$ solves the parity task in $t$ steps (with error $\epsilon$) if
\[\Pr_{(x,y) \sim \Dcal_S}[\hat y_t(x) = y] \geq 1 - \epsilon.\]
We will focus on the case that $\hat y_t = \sgn(f(x;\theta_t))$ for some parameters $\theta_t$ in a continuous domain $\Theta$ and for a continuous function   $f : \{\pm 1\}^n \times \Theta \rightarrow \RR$\footnote{When $f(x;\theta) = 0$ in practice (e.g. with sign initialization), we break the tie arbitrarily. We ensure in the theoretical analysis that this does not happen.}, updated with the ubiquitous online learning algorithm of gradient descent (GD), whose update rule is given by
\begin{equation}
\theta_{t+1} \leftarrow (1-\lambda_t)\theta_t - \eta_t \cdot \nabla_\theta \left( \frac{1}{B} \sum_{i=1}^B \ell( y_{t,i}, f(x_{t,i}; \theta_t) ) \right),
\end{equation}
for a loss function $\ell : \{\pm 1\} \times \RR \rightarrow \RR$, learning rate schedule $\{\eta_t\}_{t=1}^T$, and weight decay schedule $\{\lambda_t\}_{t=1}^T$\footnote{We allow different layers to have different learning rate and weight decay schedules.}. The initialization $\theta_0$ is drawn randomly from a chosen distribution.

\section{Empirical findings} 
\label{sec:empirics}

\begin{figure}
    \centering
    \includegraphics[width=0.34\linewidth]{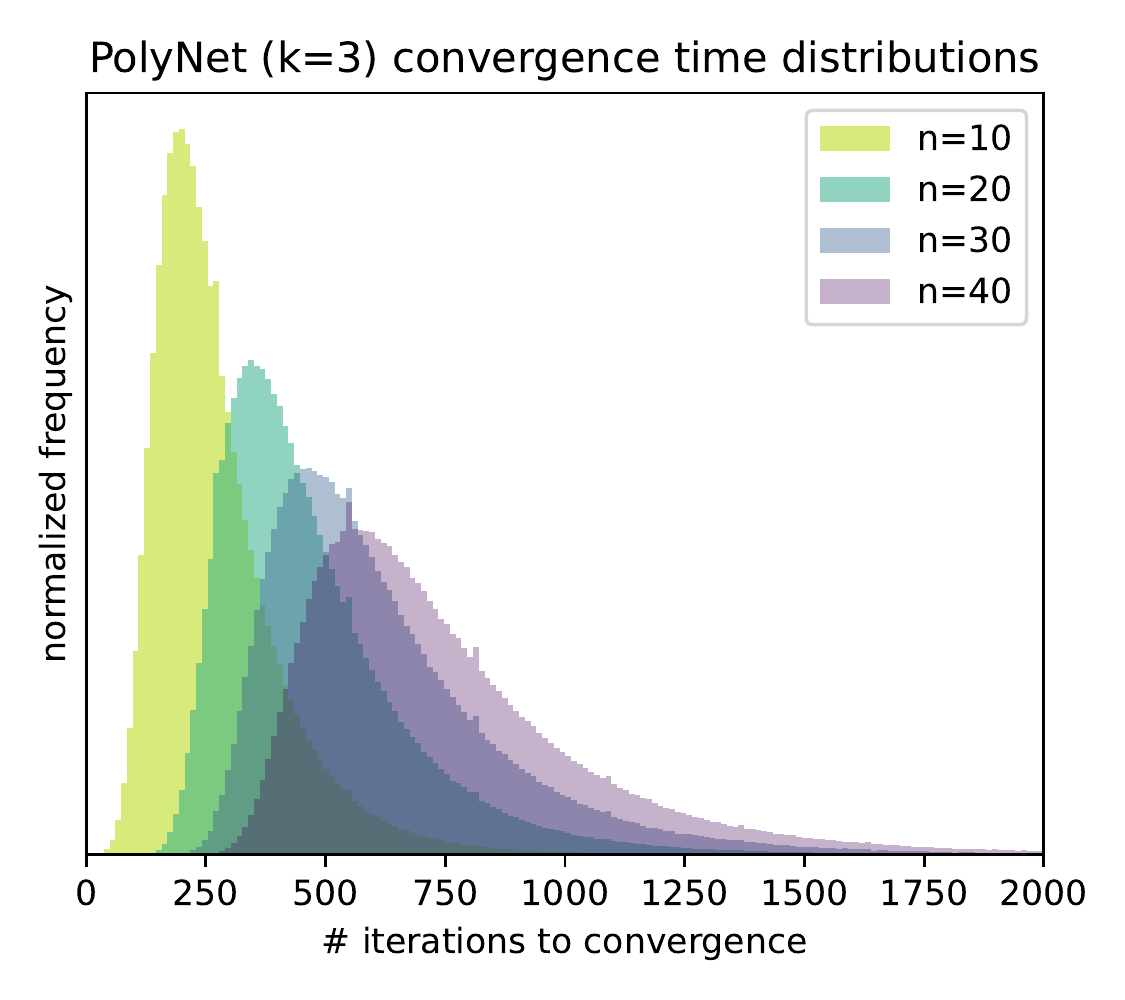}
    \hspace{-5mm}
    \includegraphics[width=0.34\linewidth]{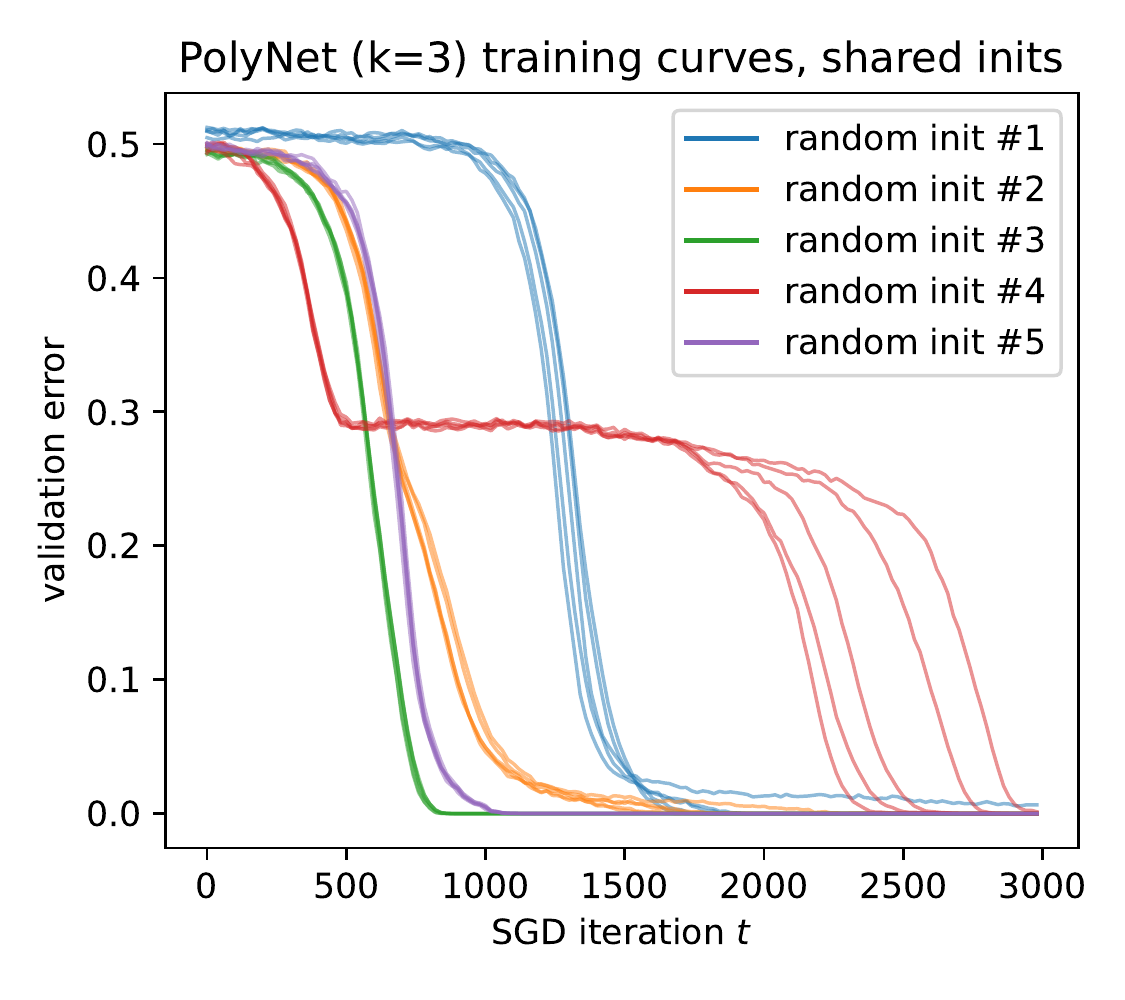}
    \includegraphics[width=0.305\linewidth]{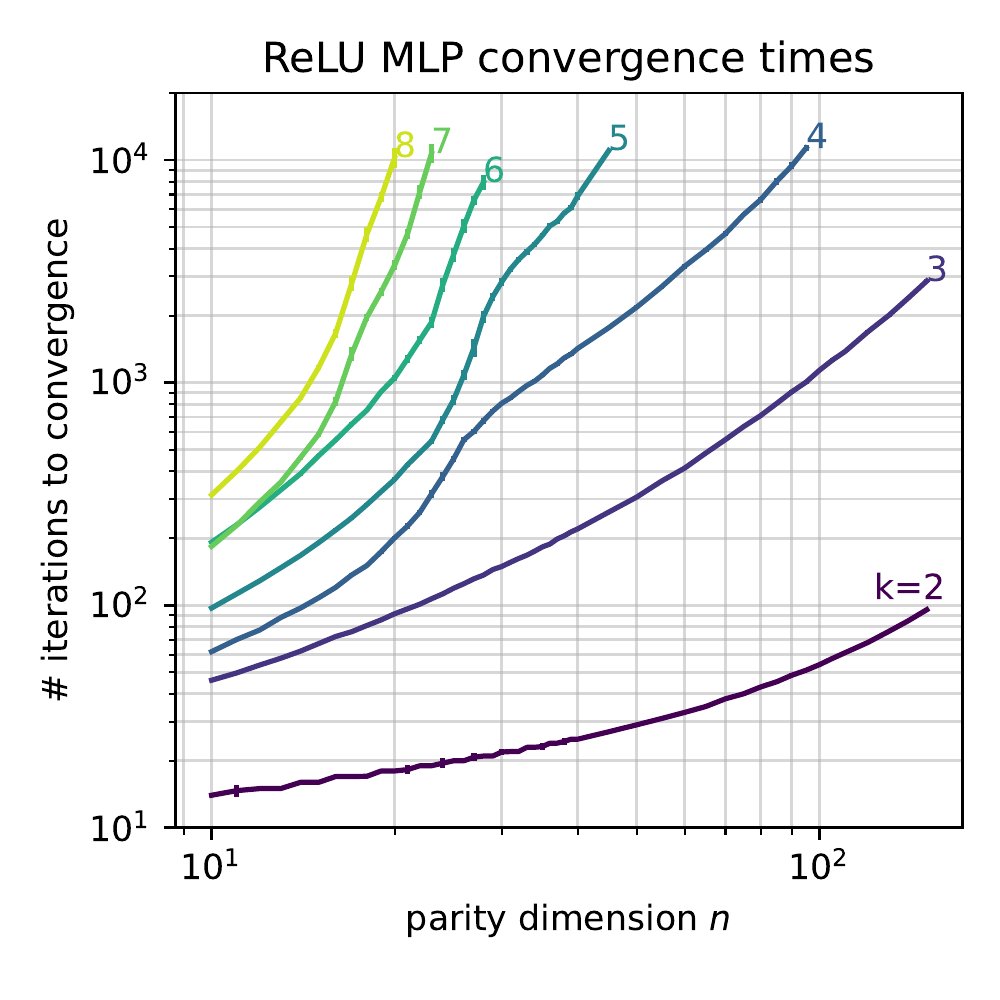}
    \vspace{-4mm}
    \caption{Black-box observations on the training dynamics. \emph{Left:} Histograms of convergence times over $10^6$ random trials, with heavy upper tails but no observed successes near $t = 0$ (unlike random search). \emph{Center:} Loss curves (and thus, convergence time) depend heavily on initialization, not the randomness of SGD; $B = 128, \eta = 0.01$ are shown here. \emph{Right:} The power-law exponent ($\alpha$ such that $t_c \propto n^{\alpha}$) eventually worsens on larger problem instances. }
    \label{fig:scaling-laws}
\end{figure}

\subsection{SGD on neural networks learns sparse parities}
\label{subsec:empirics-main}

The central phenomenon of study in this work is the empirical observation that neural networks, with standard initialization and training, can solve the $(n,k)$-parity problem in a number of iterations scaling as $n^{O(k)}$ on small instances. We observed robust positive results for randomly-initialized SGD on the following architectures, indexed by Roman numerals:
\begin{itemize}[leftmargin=2em]
    \vspace{-1.5ex}\item \textbf{2-layer MLPs:} ReLU $(\sigma(z) = (z)_+)$ or polynomial ($\sigma(z) = z^k$) activation, in a wide variety of width regimes $r \geq k$. Settings (i), (ii), (iii) (resp. (iv), (v), (vi)) use $r = \{10, 100, 1000\}$ ReLU (resp. polynomial) activations. We also consider $r = k$ (exceptional settings (*i), (*ii) ), the minimum width for representing a $k$-wise parity for both activations.
    \item \textbf{1-neuron networks:} Next, we consider non-standard activation functions $\sigma$ which allow a one-neuron architecture $f(x; w) = \sigma(w^\top x)$ to realize $k$-wise parities. The constructions stem from letting $w^* = \sum_{i \in S} e_i$, and constructing $\sigma(\cdot)$ to interpolate (the appropriate scaling of) $\frac{k - {w^*}^\top x}{2} \text{ mod } 2$ with a piecewise linear \emph{$k$-zigzag} activation (vii), or a degree-$k$ polynomial (viii). Going a step further, a single \emph{$\infty$-zigzag} (ix) or \emph{sinusoidal} (x) neuron can represent \emph{all} $k$-wise parities. In settings (xi), (xii), (xiii), (xiv), we remove the second trainable layer (setting $u = 1$). We find that wider architectures with these activations also train successfully.
    \item \textbf{Transformers:} There is growing interest in using parity as a benchmark for combinatorial function learning, long-range dependency learning, and length generalization in Transformers \citep{lu2021pretrained,edelman2021inductive,hahn2020theoretical,anil2022exploring,liu2022transformers}. Motivated by these recent theoretical and empirical works, we consider a simplified specialization of the Transformer architecture to this sequence classification problem.
    This is the less-robust setting (*iii); the architecture and optimizer are described in Appendix~\ref{subsubsec:parity-transformer}.
    \item \textbf{PolyNets:} Our final setting (xv) is the PolyNet, a slightly modified version of the parity machine architecture. Parity machines have been studied extensively in the statistical mechanics of ML literature (see the related work section) as well as in a line of work on `neural cryptography' \citep{rosen2002mutual}. A parity machine outputs the sign of the product of $k$ linear functions of the input. A PolyNet simply outputs the product itself. Both architectures can clearly realize $k$-sparse parities. The PolyNet architecture was originally motivated by the search for an idealized setting where an end-to-end optimization trajectory analysis is tractable (see Section~\ref{subsec:theory-theorems}); we found in these experiments that this architecture trains very stably and sample-efficiently.
\end{itemize}

\paragraph{Robust space of positive results.}
All of the networks listed above were observed to successfully learn sparse parities in a variety of settings. We summarize our findings as follows:
for all combinations of $n \in \{10, 20, 30\}$, $k \in \{2, 3, 4\}$, batch sizes $B \in \{1, 2, 4, \ldots, 1024\}$, initializations $\{$uniform, Gaussian, Bernoulli$\}$, loss functions $\{$hinge, square, cross entropy$\}$, and architecture configurations $\{ \text{(i)}, \text{(ii)}, \ldots, \text{(xv)} \}$, SGD solved the parity problem (with $100\%$ accuracy, validated on a batch of $2^{13}$ samples) in at least $20\%$ of $25$ random trials, for at least one choice of learning rate $\eta \in \{0.001, 0.01, 0.1, 1\}$. The models converged in $t_c \leq c \cdot n^{\alpha k} \leq 10^5$ steps, for small architecture-dependent constants $c, \alpha$ (see Appendix~\ref{sec:appendix-experiments}). Figure~\ref{fig:intro} \emph{(left)} shows some representative training curves.

\paragraph{Less robust configurations.} Settings (*i) and (*ii), where the MLP just barely represents a $k$-sparse parity, and the Transformer setting (*iii), are less robust to small batch sizes. In these settings, the same positive results as above only held for sufficiently large batch sizes: $B \geq 16$. Also, setting (*iii) used the Adam optimizer (which is standard for Transformers); see Appendix~\ref{subsubsec:parity-transformer} for details.

\paragraph{Phase transitions in training curves.} For almost all of the architectures, we find that that the training curves exhibit phase transitions in terms of running time (and thus, in the online learning setting, dataset size as well): long durations of seemingly no progress, followed by periods of rapid decrease in the validation error. Strikingly, for architectures (v) and (vi), this plateau is absent: the error in the initial phase appears to decrease with a linear slope. See Appendix~\ref{subsec:appendix-poly-no-plateau} for more plots.

\subsection{Random search or hidden progress?}
\label{subsec:empirics-random-or-progress}

The remainder of this paper seeks to answer the question:
\emph{``By what mechanism does deep learning solve these emblematic computationally-hard optimization problems?''}

A natural hypothesis would be that SGD somehow implicitly performs Monte Carlo random search, ``bouncing around'' the loss landscape in the absence of a useful gradient signal. Upon closer inspection, several empirical observations clash with this hypothesis:

\begin{itemize}[leftmargin=2em]
\item \textbf{Scaling of convergence times:} Without an explicit sparsity prior in the architecture or initialization, it is unclear how to account for the runtimes observed in experiments, which adapt to the sparsity $k$. The initializations, which certainly do not prefer sparse functions\footnote{Indeed, under all standard architectures and initialization, the probability that a random network is $\Omega(1)$-correlated with a sparse parity would be $2^{-\Omega(n)}$, since with that probability $1-o(1)$ of the total influence would be accounted by the $n-k$ irrelevant features.}, are close to the correct solutions with probability $2^{-\Omega(n)} \ll n^{-k}$.
\item \textbf{No early convergence:} Over a large number of random trials, no copies of this randomized algorithm get ``lucky'' (i.e. solve the problem in significantly fewer than the median number of iterations); see Figure~\ref{fig:scaling-laws} \emph{(left)}. The success times of random exhaustive search would be distributed as $\mathrm{Geom}(1/\binom{n}{k})$, whose probability mass is highest at $t=0$ and decreases monotonically with $t$.
\item \textbf{Sensitivity to initialization, not SGD samples:} Running these training setups over multiple stochastic batches from a common initialization, we find that loss curves and convergence times are highly correlated with the architecture's random initialization, and are quite concentrated conditioned on initialization; see Figure~\ref{fig:scaling-laws} \emph{(center)}.
\item \textbf{Elbows in the scaling curves:} For larger $n$, the power-law scaling ceases to hold: the exponent worsens (see Figure~\ref{fig:scaling-laws} \emph{(right)}, as well as the discussion in Appendix~\ref{subsec:building-blocks}). This would not be true for random exhaustive search.
\end{itemize}

Even these observations, which do not probe the internal state of the algorithm, suggest that
exhaustive search is an insufficient picture of the training dynamics, and a different mechanism is at play.

\begin{figure}
    \centering
    \includegraphics[width=0.408\linewidth]{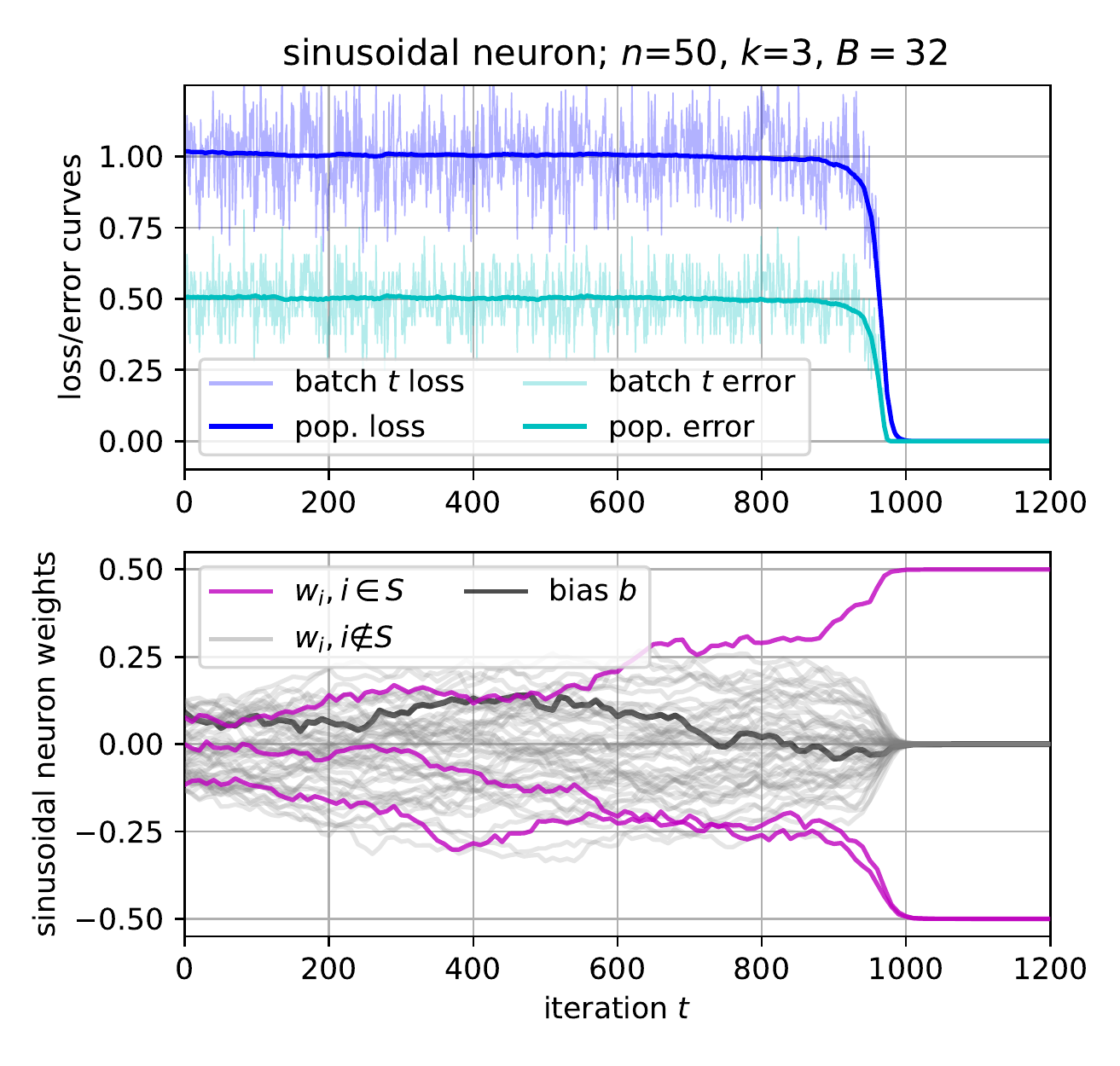}
    \hspace{-4mm}
    \includegraphics[width=0.37\linewidth]{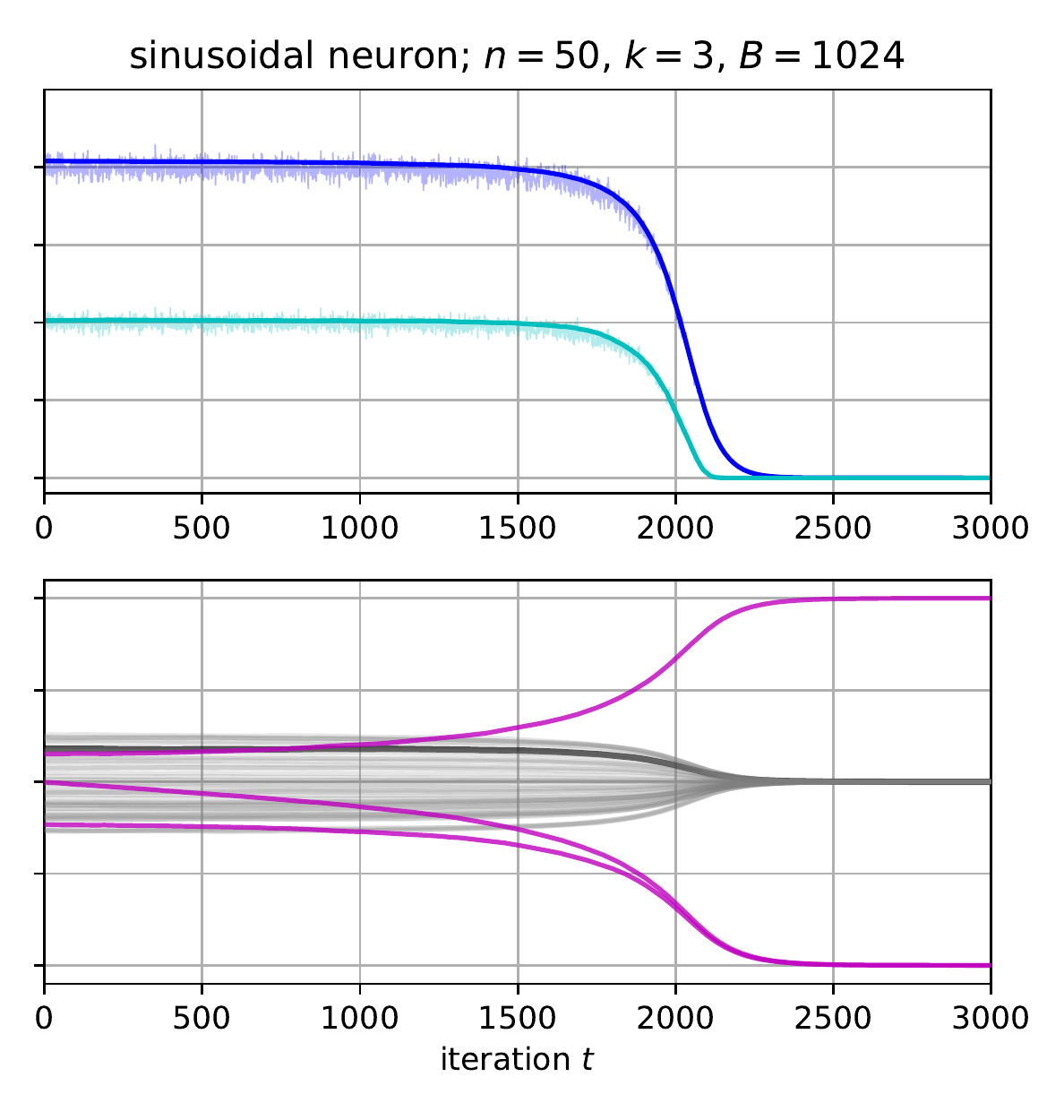}
    \hspace{-3mm}
    \includegraphics[width=0.232\linewidth]{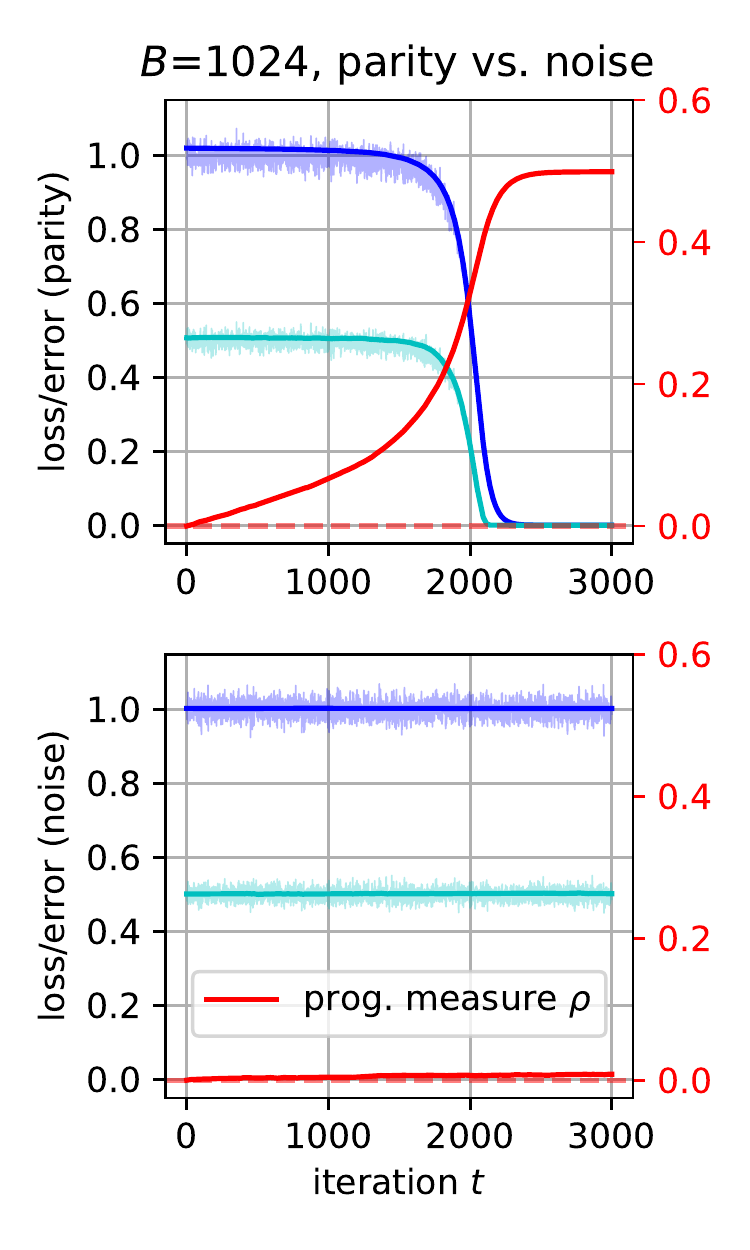}
    \vspace{-5mm}
    \caption{Hidden progress when learning parities with neural networks. \emph{Left, center:} Black-box losses and accuracies exhibit a long plateau and sharp phase transition (top), hiding gradual progress in the SGD iterates (bottom). \emph{Right:} A hidden progress measure which distinguishes gradual feature amplification (top) from training on noise (bottom). }
    \label{fig:hidden-progress}
\end{figure}
\section{Theoretical analyses}
\label{sec:theory}

\subsection{Provable emergence of the parity indices in high-precision gradients}
\label{subsec:theory-theorems}

We now provide a theoretical account for the success of SGD in solving the $(n,k)$-parity problem. Our main theoretical observation is that, in many cases, the \emph{population} gradient of the weights at initialization contains enough ``information'' for solving the parity problem. That is, given an accurate enough estimate of the initial gradient (by e.g. computing the gradient over a large enough batch size), the relevant subset $S$ can be found.

As a warm-up example, consider training a single ReLU neuron $\hat y(x;w) = (w^\top x)_+$ with the correlation loss $\ell(y,\hat{y}) = -y\hat{y}$ over $\Dcal_S$, from an all-ones initialization $w = [1 \; \ldots \; 1] \in \RR^n$. While a single neuron cannot \emph{express} the parity, we observe that the correct subset can be extracted from the population gradient at initialization:
\begin{equation*}
\E_{(x,y) \sim \Dcal_S}\left[\nabla_{w_i} \ell(y, \hat y(x;w))\right]
= \E_{(x,y) \sim \Dcal_S}\left[-y  \, \nabla_{w_i} (w^\top x)_+\right] = \E_{(x,y) \sim \Dcal_S}\left[-\chi_S x_i  \mathbbm{1}\left[\sum_{i}x_i \ge 0\right]\right].
\end{equation*}
The key insight is that each coordinate in the above expression is a correlation between a parity and the function $x \mapsto -\mathbbm{1}[\sum_i x_i \ge 0]$, and thus a Fourier coefficient of this Boolean function.
At each relevant coordinate ($i \in S$), the population gradient is the order-($k-1$) Fourier coefficient $S\setminus \{ i \}$; for the irrelevant features ($i \not\in S$), it is instead the order-$(k+1)$ coefficient $S \cup \{i\}$. All we require is a detectable \emph{gap} between these quantities.
Formally, letting $f(x;w) = \sigma(w^\top x)$, letting $\hat{f}(S) := \E \left[f(x)\chi_S(x)\right]$ denote the Fourier coefficient of $f$ at $S$, we isolate the desired property:
\begin{definition}[Fourier gap]
\label{def:fourier-gap}
For a function $f : \{\pm 1\}^n \to \RR$ and $S \subseteq [n]$ of size $k$, we say that $f$ has a $\gamma$-\emph{Fourier gap} at $S$ if, for every $(k-1)$-element subset $S^- \subset S$ and $(k+1)$-element superset $S^+ \supset S$, it holds that $|\hat{f}(S^-)| \geq |\hat{f}(S^+)| + \gamma$.
\end{definition}

For the all-ones initialization, observe $\mathbbm{1}[\sum_ix_i \ge 0] = \frac{1 + \sgn\left(\sum_i x_i\right)}{2}$ is just an affine transformation of the majority function of $x$, for which a Fourier gap can be established, with $\gamma = \Theta(n^{-(k-1)/2})$. This arises from closed-form formulas for the Fourier spectrum of majority (see Lemma \ref{lem:maj_fourier_gap} in Appendix~\ref{subsec:appendix-mlp-proof}), a landmark result from the harmonic analysis of Boolean functions \citep{titsworth1962correlation,o2014analysis}. Thus, the coordinates in $S$ can be recovered from $\widetilde O(1/\gamma^2) = \widetilde O(n^{k-1})$ samples; see Proposition~\ref{prop:mlp-from-gap} in Appendix~\ref{subsec:appendix-mlp-from-gap} for a formal argument.

Carefully extending this insight, we obtain an end-to-end convergence result for ReLU-activation MLP networks with a particular symmetric choice of $\pm 1$ initialization, trained with the hinge loss:
\begin{theorem}[SGD on MLPs learns sparse parities]
\label{thm:mlp-main}
Let $\epsilon \in (0,1)$. Let $k \geq 2$ an even integer, and let $n = \Omega(k^4 \log(nk/\epsilon))$ be an odd integer. Then, there exist a random initialization scheme, $\eta_t$, and $\lambda_t$ such that for every $S \subseteq [n]$ of size $k$, SGD on a ReLU MLP of width $r = \Omega(2^k k \log(k/\epsilon))$, with batch size $B = \Omega(n^k \log(n/\epsilon) )$ on $\Dcal_S$ with the hinge loss, outputs a network $f(x;\theta_t)$ with expected\footnote{The expectation is over the randomness of initialization, training and sampling $(x,y) \sim \Dcal_S$.} loss
$
\E \left[\ell(f(x;\theta_t),y)\right] \le \epsilon
$
in at most $O(k^3 r^2 n/\epsilon^2)$ iterations.
\end{theorem}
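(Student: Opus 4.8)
The plan is to decompose training into two phases, matching the ``feature emergence, then convex fitting'' picture suggested by the Fourier--gap discussion: (1) a single large-batch gradient step at $t=0$ that drives each first-layer weight vector onto (a scaled sign pattern on) the relevant set $S$, plus a small perturbation; (2) freezing the first layer and running the remaining $T-1$ steps as online convex optimization over the second layer.

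\textbf{Phase 1 (feature emergence).} I would use a symmetrized $\pm1$ initialization: hidden units come in pairs $(w_j^{(0)},a_j^{(0)})$ and $(w_j^{(0)},-a_j^{(0)})$ with $w_j^{(0)}$ uniform in $\{\pm1\}^n$, so that $f(x;\theta_0)\equiv 0$; the hinge loss then sits on its linear branch and the $t=0$ update coincides with one step of the correlation loss $-yf$. Taking first-layer weight decay $\lambda_0=1$, the updated weight is exactly (a rescaling of) the batch gradient, $w_j^{(1)} = -\eta_0\,\widehat{\nabla}_{w_j}$. By the computation in the excerpt, the $i$-th population-gradient coordinate of neuron $j$ equals, up to the sign $a_j^{(0)}\prod_{\ell\in S\triangle\{i\}}w^{(0)}_{j,\ell}$ and a factor $\tfrac12$, the Fourier coefficient $\widehat{\mathrm{Maj}_n}(S\triangle\{i\})$; since $\mathrm{Maj}_n$ is symmetric this depends only on $|S\triangle\{i\}|$, so the coordinate has magnitude $\Theta(n^{-(k-1)/2})$ for $i\in S$ and $\Theta(k\,n^{-(k+1)/2})$ for $i\notin S$, a $\gamma=\Theta(n^{-(k-1)/2})$ gap (Lemma~\ref{lem:maj_fourier_gap}; this uses $k$ even, so $|S\setminus\{i\}|$ is odd and the coefficient is nonzero, and $n$ odd, so $w_j^{(0)\top}x\neq0$ a.s.). A Hoeffding bound with $B=\Omega(n^k\log(n/\epsilon))$ keeps the empirical gradient within the gap, so $w_j^{(1)} = \gamma_j\,\sigma_j\,(w_j^{(0)}|_{S}) + \nu_j$ with $|\gamma_j|=\Theta(\eta_0 n^{-(k-1)/2})$, $\sigma_j=\pm1$, and $\nu_j$ a small vector supported essentially on $[n]\setminus S$; and $r=\Omega(2^k k\log(k/\epsilon))$ units guarantee (w.h.p.\ over the initialization) that every sign pattern on $S$ occurs among the $w_j^{(1)}$, with multiplicity.

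\textbf{Phase 2 (convex second-layer training).} From $t\geq1$ I would set the first-layer learning rate to $0$, so $f(x;\theta)=\sum_j a_j(w_j^{(1)\top}x)_+$ is linear in $a$ and the hinge objective is convex and Lipschitz in $a$. Then I would (i) exhibit a bounded-norm $a^\star$ with small hinge loss, and (ii) invoke a standard SGD/OCO convergence bound. For (i): the ``clean'' features $\{(p^\top x|_{S})_+:p\in\{\pm1\}^S\}$ realize $\chi_S$ exactly, via the biasless-ReLU identity $2^{-k}\sum_{p\in\{\pm1\}^S}\chi_S(p)(p^\top x|_{S})_+=\lambda_k\,\chi_S(x)$ with $|\lambda_k|=\Theta(1/\sqrt k)$ (simply the statement that $\chi_S$ is a piecewise-linear function of the $\pm1$-linear forms $p^\top x|_{S}$); the perturbations $\nu_j$ then change $f$ by a quantity I would control using (a) that the ReLU does not ``cross over'' except on the small set where some $p_j^\top x|_{S}=0$, so that the error reduces to $\sum_j a^\star_j\langle\nu_j,x\rangle\,\mathbbm{1}[\mathrm{active}_j]$ plus a crossover correction, and (b) that this is a sum of (conditionally) independent, mean-zero terms whose signed coefficients $a^\star_j\propto\chi_S(p_j)$ induce cancellation, so under the theorem's hypotheses on $n$ and $B$ the deviation is $o(1)$ of the margin on a $1-\epsilon$ fraction of inputs; scaling $a^\star$ up by $\poly(k)$ then gives $\E[\ell]\le\epsilon/2$ for the fit. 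For (ii): with second-layer weight decay (yielding strong convexity and hence last-iterate convergence) or by averaging iterates, SGD attains excess hinge loss $\le\epsilon/2$ in $O(\|a^\star\|_2^2 L^2/\epsilon^2)$ steps, which with the relevant diameter and Lipschitz estimates is the claimed $O(k^3 r^2 n/\epsilon^2)$; a union bound with the $O(\delta)$ failure probability of Phase~1 (on which the loss is $O(1)$) then bounds the overall expected loss.

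\textbf{Main obstacle.} The hard part is the noise accounting in step (i). The first-layer perturbation $\nu_j$ has two pieces --- the irrelevant-coordinate \emph{population} mass ($\Theta(k\,n^{-(k+1)/2})$ per coordinate) and the \emph{finite-batch fluctuation} ($\Theta(B^{-1/2})$ per coordinate, which at $B=n^k$ is even the dominant one) --- and one must show these propagate through the non-smooth ReLU to the network output while still leaving a usable margin, \emph{with only} $n=\Omega(k^4\log)$ and $B=\Omega(n^k\log)$, rather than the exponentially large $n$ that a naive $\ell_1$ bound $|(w_j^{(1)\top}x)_+-(\gamma_j\sigma_j w_j^{(0)\top}x|_{S})_+|\le\|\nu_j\|_1$ would demand (the irrelevant mass is comparable in $\ell_1$ to the signal, and $|\nu_j^\top x|$ is comparable to $|\gamma_j|$). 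This forces one to genuinely use the $\approx\sqrt{2^k}$-fold averaging over neurons and the sign cancellation from $a^\star_j\propto\chi_S(p_j)$, to handle the $\Theta(1/\sqrt k)$-probability crossover set separately, and to thread a single consistent choice of $(\eta_0,\lambda_0)$ and the Phase-2 step size through both phases --- together with the (standard but not short) estimates of $\|a^\star\|_2$ and the Lipschitz constant that produce the stated iteration count.
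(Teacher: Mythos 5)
Your proposal follows the same two-phase blueprint as the paper's proof: a symmetrized $\pm1$ initialization so that $f(\cdot;\theta_0)\equiv 0$, a single large-batch gradient step at $t=0$ with first-layer weight decay $\lambda_0=1$ so that $w_j^{(1)}$ is exactly a rescaled stochastic gradient, the Fourier gap of majority (Lemma~\ref{lem:maj_fourier_gap}) to separate relevant from irrelevant coordinates, and finally frozen-first-layer convex SGD on the second layer via Theorem~\ref{thm:sgd}. Where you genuinely depart is in the representation used to exhibit a low-norm $a^\star$. The paper also draws biases from the discrete grid $\{-1+1/k,\dots,1-1/k\}$, and Claim~5 of Lemma~\ref{lem:mlp_partial_step_one} realizes $\chi_{[k]}$ as a ``staircase'' of $k$ shifted ReLUs $h_i(x)=\sigma\bigl(\tfrac{1}{2k}\sum_{j\in[k]}x_j+b_i\bigr)$ with a \emph{single common} sign pattern on $S$ and distinct biases $b_i$. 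You instead use a biasless representation over all $2^k$ sign patterns, $2^{-k}\sum_{p\in\{\pm1\}^S}\chi_S(p)(p^\top x|_S)_+=\lambda_k\chi_S(x)$, with $|\lambda_k|=\tfrac12\binom{k-2}{k/2-1}2^{-(k-2)}=\Theta(1/\sqrt k)$. This identity is correct for even $k$ (and degenerates to $0$ for odd $k$, so the evenness hypothesis is load-bearing here as well, albeit for a different reason than in the paper, where it makes $\xi_{k-1}\neq0$ and the bias gradient $\xi_k=0$). The after-step patterns $\pm w_j^{(0)}|_S$ are indeed uniform, and the coupon-collector count $r=\Omega(2^k k\log(k/\epsilon))$ is the right order, so the biasless route is viable; one trade-off is that the union bound over $2^k$ (rather than $k$) neurons costs an extra $\sqrt{k}$ in the tail, compensated by the $\Theta(2^{-k})$ magnitude of each $a^\star_p$.

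Where you go astray is in what you call the ``main obstacle.'' You contrast your plan only against a worst-case $\ell_1$ bound $|\nu_j^\top x|\le\|\nu_j\|_1$ and conclude you must ``genuinely use $\sqrt{2^k}$-fold averaging over neurons and the sign cancellation from $a^\star_j\propto\chi_S(p_j)$,'' plus a separate crossover correction on the $\Theta(1/\sqrt k)$-probability set $\{p_j^\top x|_S=0\}$. Neither is needed, and the cancellation as sketched is delicate: the $\nu_j$'s all come from the \emph{same} gradient batch and their population parts are deterministic and correlated with $\chi_S(p_j)$ through $w_j^{(0)}$, so treating $\sum_j a^\star_j\nu_j^\top x\,\mathbbm{1}[\mathrm{active}_j]$ as a sum of conditionally independent mean-zero terms requires careful conditioning on the batch and on each neuron's activation event, which itself depends on $x$ and $w_j^{(0)}$. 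The paper sidesteps all of this by never linearizing the ReLU: it uses $|\sigma(a+b)-\sigma(a)|\le|b|$ (so there is no crossover case to isolate) and then a \emph{per-neuron} Hoeffding bound on $\nu_j^\top x$ over the random test input $x$, giving the $\|\nu_j\|_2\sqrt{\log}$ scale rather than $\|\nu_j\|_1$. That argument transfers essentially verbatim to your representation: summing $|a^\star_p|\cdot|\nu_p^\top x|$ over the $2^k$ patterns gives $\Theta(k^{3/2}\max_p|\nu_p^\top x|)$, the union-bounded Hoeffding tail costs only a $\sqrt{k+\log(1/\epsilon)}$ factor, and the hypothesis $n=\Omega(k^4\log(nk/\epsilon))$ absorbs it. So the fix is simple: replace the cancellation/crossover program with the 1-Lipschitz-plus-Hoeffding estimate (and keep the relevant-coordinate estimation error, which your decomposition of $\nu_j$ into ``population'' and ``finite-batch'' pieces implicitly drops); then your Phase~2 closes exactly as in the paper, via Theorem~\ref{thm:sgd} with $M=\|a^\star\|_2$ and $\rho=O(\sqrt r\,n)$.
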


This does not capture the full range of settings in which we empirically observe successful convergence. First, it requires a sign vector initialization, while we observe convergence with other random initialization schemes (namely, uniform and Gaussian).
Second, it requires the batch size to scale with $n^{\Omega(k)}$\footnote{In fact, at this batch size, the correct parity indices emerge in a single SGD step.}, while we also obtain positive results when $B$ is small (even $B=1$). Analogous statements for these cases (as well as other activations and losses) would require Fourier gaps for population gradient functions other than majority; lower bounds on the degree-$(k-1)$ coefficients (\emph{``Fourier anti-concentration''}) are particularly elusive in the literature, and we leave it as an open challenge to establish them in more general settings.
We provide preliminary empirics in
Appendix~\ref{subsec:bool}, suggesting that the Fourier gaps in our empirical settings are sufficiently large.\footnote{Interestingly, we observe that the Fourier gap tends to \emph{increase} over the course of training. This is not captured by our current theoretical analysis.}

\paragraph{Low width necessitates feature learning.} We note that in the low-width (non-overparameterized) regimes considered in this work, no fixed kernel (including the neural tangent kernel \citep{jacot2018neural}, whose dimensionality is the network's parameter count) can solve the sparse parity problem. The following is a consequence of results in \citep{kamath2020approximate,malach2022hardness}:
\begin{theorem}[Low-width NTK cannot fit all parities]
\label{thm:ntk}
Let $\Psi : \{\pm 1\}^n \to \RR^D$ be any $D$-dimensional embedding with $\sup_x \|\Psi(x)\|_2 \le 1$. Let $R, \epsilon > 0$, and let $\ell$ denote the 0-1 loss or hinge loss. If $D R^2 < \epsilon^2 \cdot \binom{n}{k}$, then there exists some $S \subseteq [n]$ of size $k$ such that
\[\inf_{\|w\| \le R} \E_{(x,y) \sim \Dcal_S} \left[\ell(\Psi(x)^\top w, y)\right] > 1-\epsilon.\]
\end{theorem}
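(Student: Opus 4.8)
The plan is to prove the contrapositive via a dimension/counting argument exploiting the near-orthogonality of parities under the correlation inner product. Suppose, for contradiction, that for \emph{every} $S \in \binom{[n]}{k}$ there exists $w_S$ with $\|w_S\| \le R$ achieving $\E_{(x,y)\sim\Dcal_S}[\ell(\Psi(x)^\top w_S, y)] \le 1-\epsilon$. The key observation is that both the hinge loss and the $0$-$1$ loss upper-bound the behavior of the correlation: if $\ell(\Psi(x)^\top w_S, y) \le 1 - \epsilon$ in expectation, then $\E[y \cdot \Psi(x)^\top w_S] \ge \epsilon$ (for the hinge loss $\ell(z,y) = \max(0, 1-yz)$ this is immediate since $yz \ge 1 - \ell(z,y)$; for $0$-$1$ loss one passes through hinge by noting $\ell_{01}\le \ell_{\mathrm{hinge}}$, or argues directly that accuracy $\ge 1-\epsilon$ forces correlation $\ge 1-2\epsilon$ after rescaling — I would state the cleanest version). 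So each $S$ gives a vector $w_S$ in the ball of radius $R$ whose correlation with the ``feature-labeled'' distribution is at least $\epsilon$.

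\medskip

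The core of the argument is then to bound the number of such well-correlated directions. Define for each coordinate $j \in [D]$ the function $\psi_j(x) = \Psi(x)_j$, and note $\sum_j \psi_j(x)^2 = \|\Psi(x)\|_2^2 \le 1$ pointwise. Now $\E[y\,\Psi(x)^\top w_S] = \sum_j (w_S)_j \,\E_{\Dcal_S}[y\,\psi_j(x)] = \sum_j (w_S)_j \,\widehat{\psi_j}(S)$, where $\widehat{\psi_j}(S)$ is the degree-$k$ Fourier coefficient of $\psi_j$ at $S$, using $\E_{\Dcal_S}[y\,\psi_j(x)] = \E_x[\chi_S(x)\psi_j(x)]$. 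By Cauchy--Schwarz, $\epsilon \le \|w_S\|_2 \cdot \big(\sum_j \widehat{\psi_j}(S)^2\big)^{1/2} \le R \cdot \big(\sum_j \widehat{\psi_j}(S)^2\big)^{1/2}$, hence $\sum_j \widehat{\psi_j}(S)^2 \ge \epsilon^2/R^2$ for every $S \in \binom{[n]}{k}$. Summing over all $\binom{n}{k}$ subsets and exchanging the order of summation gives
\[
\binom{n}{k}\cdot \frac{\epsilon^2}{R^2} \;\le\; \sum_{S \in \binom{[n]}{k}} \sum_{j=1}^D \widehat{\psi_j}(S)^2 \;=\; \sum_{j=1}^D \sum_{S \in \binom{[n]}{k}} \widehat{\psi_j}(S)^2 \;\le\; \sum_{j=1}^D \|\psi_j\|_2^2 \;=\; \E_x\|\Psi(x)\|_2^2 \;\le\; D,
\]
where the middle inequality is Parseval applied to each $\psi_j$ (the sum of squared Fourier coefficients over degree-$k$ sets is at most the total, which is $\|\psi_j\|_2^2$). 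This yields $D R^2 \ge \epsilon^2 \binom{n}{k}$, contradicting the hypothesis $D R^2 < \epsilon^2\binom{n}{k}$, and completes the proof.

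\medskip

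The main obstacle — really the only delicate point — is the reduction from a loss bound to a correlation lower bound, and getting the constants/quantifiers to match the statement exactly (the paper writes $> 1-\epsilon$ and $DR^2 < \epsilon^2\binom{n}{k}$, with no stray factors of $2$). For the hinge loss this is clean as sketched. For the $0$-$1$ loss, a hypothesis of the form $\sgn(\Psi(x)^\top w)$ — note the theorem's infimum is over the \emph{real-valued} predictor $\Psi(x)^\top w$ composed with $0$-$1$ loss, so one should be careful about whether ``$\ell$ is the $0$-$1$ loss'' means $\ell(z,y) = \mathbbm{1}[\sgn(z)\neq y]$ — one uses that $\mathbbm{1}[\sgn(z) \neq y] \ge \tfrac{1}{2}(1 - yz)$ whenever $|z| \le 1$, together with the fact that WLOG we may clip $\Psi(x)^\top w$ to $[-1,1]$ since $\|\Psi(x)\|_2\|w\|_2 \le R$ — or, more simply, cite \citep{kamath2020approximate,malach2022hardness} for the precise form and present the hinge-loss case in full. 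Everything else (Parseval, Cauchy--Schwarz, swapping finite sums) is routine. I would present the hinge-loss case as the main argument and remark that the $0$-$1$ case follows by the standard surrogate inequality, matching the cited references.
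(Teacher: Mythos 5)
The paper does not actually prove Theorem~\ref{thm:ntk} itself; it is stated as a consequence of \citep{kamath2020approximate,malach2022hardness}, so there is no in-paper proof to compare against. Your self-contained argument via Parseval and Cauchy--Schwarz is the right route and is correct for the hinge loss. One small point of slack: with the stated normalization $\sup_x \|\Psi(x)\|_2 \le 1$, you in fact have $\sum_j \|\psi_j\|_2^2 = \E_x\|\Psi(x)\|_2^2 \le 1$, not merely $\le D$. The chain then gives $\binom{n}{k}\,\epsilon^2/R^2 \le 1$, i.e.\ $R^2 \ge \epsilon^2\binom{n}{k}$; the factor of $D$ in the theorem's hypothesis is superfluous under the $\ell_2$ normalization (it would be needed under the weaker $\|\Psi(x)\|_\infty \le 1$). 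This only makes your conclusion stronger than what is claimed, so it is harmless, but worth noting since you wrote $\le D$.

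The genuine gap is in the $0$-$1$ loss case. The surrogate inequality you sketch, $\mathbbm{1}[\sgn(z)\ne y] \ge \tfrac12(1-yz)$ for $|z|\le 1$, is false: take $y=1$, $z=1/2$, giving $0 \ge 1/4$. No pointwise inequality of that shape can hold, since the $0$-$1$ loss vanishes whenever $yz>0$ no matter how small $yz$ is, while $\tfrac12(1-yz)$ stays positive until $yz=1$. More fundamentally, the contrapositive cannot run through a correlation bound for the $0$-$1$ loss: $\Pr[\sgn(\Psi(x)^\top w)\ne y]\le 1-\epsilon$ does not force $\E[y\,\Psi(x)^\top w]$ to be large, because the classifier can be correct on a set of measure $\epsilon$ while $\Psi(x)^\top w$ is strongly anti-correlated with $y$ elsewhere. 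Indeed the theorem's conclusion is essentially vacuous for the $0$-$1$ loss when $\epsilon<1/2$: for any $w$ with $\Psi(x)^\top w\ne 0$ almost surely, $\E[\ell_{01}(w)]+\E[\ell_{01}(-w)]=1$, so $\inf_{\|w\|\le R}\E[\ell_{01}]\le 1/2 < 1-\epsilon$ (excluding degenerate $\Psi$ that vanish on a positive-measure set). Your instinct to present the hinge-loss case in full and be cautious about $0$-$1$ is the right one, but the proposed surrogate patch does not close the gap; the correct reading is that the margin/hinge version is what the cited results actually give, and the $0$-$1$ clause requires a different error threshold (of the form $1/2-\epsilon$) or a different convention to be meaningful.
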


Thus, our low-width results lie outside the NTK regime, which requires far larger models (size $n^{\Omega(k)}$) to express parities. However, we note that better sample complexity bounds are possible in the NTK regime, with an algorithm more similar to standard SGD (see \citep{telgarsky2022feature} and Appendix \ref{app:relwork}).

\subsection{Disjoint-PolyNet: exact trajectory analysis for an idealized architecture}
\label{subsec:polynet}
In this section, we present an architecture (a version of PolyNets (xv)) which empirically exhibits similar behavior to MLPs and bypasses the difficulty of analyzing Fourier gaps. The \emph{disjoint-PolyNet} takes a product over $k$ linear functions of an equal-sized\footnote{We assume for simplicity that $n$ is divisible by $k$.} partition $P_1, \ldots, P_k$ of the input coordinates: $f(x;w_{1:k}) := \prod_{i=1}^k \langle w_i, x_{P_i} \rangle$. As noted in the Section~\ref{sec:related}, this is equivalent to a tree parity machine, with real-valued (instead of $\pm 1$) outputs.

This architecture also requires us to assume that the set $S$ of size $k$ in the $(n,k)$-parity problem is selected such that exactly one index belongs to each disjoint partition, that is, for all $i \in [k]$, $S\cap P_i = 1$. We refer to this problem as the $(n,k)$-disjoint parity problem. Note that there are still $(n')^k = (n/k)^k$ different possibilities for set $S$ under this restriction. For fixed $k$, these represent a constant fraction of the $\binom{n}{k} \approx (ne/k)^k$ (by Stirling's approximation) possibilities for $S$ in the general non-disjoint case.

Consider training a disjoint-PolyNet w.r.t. the correlation loss. Without loss of generality, assume that each relevant coordinate in $S$ is the first element $P_i$. Then, the population gradient is non-zero only at indices $i\in S$:
\[g_i(w_{1:k})= \E\left[\nabla_{w_i} \ell(f(x;w_{1:k}), y)\right] = - \E \left[ y \left(\prod_{j \ne i} \langle w_j, x_{P_j} \rangle \right) x_{P_i} \right]=  -\left(\prod_{j \ne i} w_{j,1} \right)e_1.\]

This allows us to analyze the gradient flow dynamics of the disjoint-PolyNet, without needing to establish Fourier gaps. For each $i \in [k]$, in this section we treat $w_i$ as a function from $\RR_{\geq 0} \to \RR^{n'}$ which satisfies the following differential equation: $\dot{w_i} = -g_i(w_{1:k}(t))$. For clarity of exposition, assume all-ones initialization.\footnote{Results for Bernoulli and Gaussian initializations are similar, and can be found in the appendix.
} Then, all of the relevant weights $\{w_{i,1}: i \in [k]\}$ follow the same trajectory.
By analyzing the resulting differential equations, we can formally exhibit ``phase transition''-like behavior in the fully deterministic gradient flow setting.

\begin{theorem}[Loss plateau for gradient flow on disjoint-PolyNets]
\label{thm:fraction}
Suppose $k \geq 3$. Let $T(\eps)$ denote the smallest time at which the error is at most $\eps$. Then,
\[
\frac{T(0.49)}{T(0)} \geq 1 - O\left((n')^{1-k/2}\right).\]
\end{theorem}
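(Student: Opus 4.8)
The plan is to collapse the gradient flow to one autonomous scalar ODE, solve it in closed form, and read off the $0$--$1$ error as an explicit monotone function of that scalar; the phase transition is then a comparison of two level-crossing times. Concretely, as recorded in the text, for the disjoint-PolyNet trained with the correlation loss the population gradient is $g_i(w_{1:k}) = -\big(\prod_{j\neq i}w_{j,1}\big)e_1$, so under gradient flow every ``irrelevant'' coordinate stays frozen at its initial value $1$, and from the all-ones start all the ``relevant'' coordinates $w_{i,1}$ satisfy the same equation $\dot a = a^{k-1}$, $a(0)=1$. Integrating, $a(t) = \big(1-(k-2)t\big)^{-1/(k-2)}$ — this is where the hypothesis $k\ge 3$ is used: the solution is increasing and blows up at the finite time $T_\star = 1/(k-2)$. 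Writing $Z_i := \sum_{m=2}^{n'} x_{i,m}$ for the sum of the $n'-1$ irrelevant Rademacher coordinates in block $i$, the margin on a fresh example is $y\,f(x;w(t)) = \prod_{i=1}^{k}\big(a(t)+Z_i x_{i,1}\big)$; since the $Z_i x_{i,1}$ are i.i.d.\ and symmetric, this has the distribution of $\prod_i(a(t)+Z_i)$. Hence the error depends on $t$ only through $a = a(t)$, and (up to ties at isolated values of $a$) $\mathrm{err}(a) = \tfrac12\big(1 - \Pr[\,|Z_1|\le a\,]^{k}\big)$, using that $\prod_i(a+Z_i)<0$ iff an odd number of its factors are negative and that $\Pr[Z_1<-a] = \tfrac12\big(1-\Pr[|Z_1|\le a]\big)$ by symmetry.

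\emph{The two crossing times.} The map $a\mapsto\mathrm{err}(a)$ is non-increasing, and since $Z_1$ is supported on $\{-(n'-1),\dots,n'-1\}$ with $\Pr[Z_1 = \pm(n'-1)] = 2^{-(n'-1)} > 0$, we have $\mathrm{err}(a) = 0$ exactly for $a > n'-1$. So, as $a(t)$ is increasing, $\mathrm{err}(a(t))$ is non-increasing in $t$ and $T(0)$ is the time $t_2$ with $a(t_2) = n'-1$. For the other level, fix a constant $c=c(k)>0$ small enough that $\erf(c/\sqrt 2)^{k} < \tfrac{1}{50}$; by the CLT (made effective via a Berry--Esseen bound for Rademacher sums, with a little care for the integer lattice of $Z_1$) one gets $\Pr[|Z_1|\le c\sqrt{n'}] \le \erf(c/\sqrt2) + o(1)$, so $\mathrm{err}(c\sqrt{n'}) > 0.49$ for all sufficiently large $n'$. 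Since the error is $>0.49$ whenever $a(t) < c\sqrt{n'}$, we conclude $T(0.49) \ge t_1$ where $a(t_1) = c\sqrt{n'}$.

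\emph{Conclusion.} Inverting the closed form, the time to reach a level $A$ is $\big(1-A^{-(k-2)}\big)/(k-2)$, so $t_2 = \big(1-(n'-1)^{-(k-2)}\big)/(k-2)$ and $t_1 = \big(1 - c^{-(k-2)}(n')^{-(k-2)/2}\big)/(k-2)$, whence
\[
\frac{T(0.49)}{T(0)} \;\ge\; \frac{t_1}{t_2} \;=\; \frac{1 - c^{-(k-2)}(n')^{-(k-2)/2}}{1 - (n'-1)^{-(k-2)}} \;\ge\; 1 - c^{-(k-2)}(n')^{-(k-2)/2} \;=\; 1 - O\big((n')^{1-k/2}\big),
\]
using $x/y\ge x$ for $0\le x\le1$ and $0<y\le1$, together with $1-k/2 = -(k-2)/2$; for the small $n'$ where the right-hand side is $\le 0$ there is nothing to prove.

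\emph{Main obstacle.} Everything outside one estimate is bookkeeping around the explicit ODE solution. The only genuine input is the anti-concentration step: that $\Pr[|Z_1|\le c\sqrt{n'}]$ stays bounded away from $1$, so the error remains essentially $\tfrac12$ until $a$ has grown to order $\sqrt{n'}$ — which is exactly where the $(n')^{1-k/2}$ rate originates. A soft CLT suffices for the asymptotic statement, but pinning down an explicit $c$ and an explicit threshold on $n'$ calls for a quantitative Berry--Esseen bound for a sum of $n'-1$ Rademacher variables, plus a small amount of care with its integer support. A secondary, purely cosmetic subtlety is the behavior at the single instant $a = n'-1$ (ties, and whether the infimum defining $T(0)$ is attained), which does not affect the ratio.
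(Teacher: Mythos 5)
Your proof is correct and follows essentially the same route as the paper's (Lemma~\ref{lem:flow}, Lemma~\ref{lem:err}, Corollary~\ref{cor:pm1}): collapse to a single scalar ODE for the tied relevant weight $a(t)$, use Berry--Esseen anti-concentration of the irrelevant Rademacher sum $Z_1$ to show the error stays near $1/2$ until $a \gtrsim \sqrt{n'}$, and compare that crossing time to the time the error hits zero. The only minor difference is that you pin $T(0)$ to the exact level-crossing $a = n'-1$ (using $Z_1 \ge -(n'-1)$ a.s.), whereas the paper upper-bounds $T(0)$ by the finite blow-up time $T_i(\infty) = 1/(k-2)$ of the ODE; both give the same $O((n')^{1-k/2})$ rate, and for all-ones initialization your closed-form $a(t) = (1-(k-2)t)^{-1/(k-2)}$ is just the specialization of Lemma~\ref{lem:flow} in which Maclaurin's inequality is an equality.
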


Informally, the network takes much longer to reach slightly-better-than-trivial accuracy than it takes to go from slightly better than trivial to perfect accuracy.
Returning to discrete time, we also analyze the trajectory of disjoint-PolyNets trained with online SGD at any batch size, confirming that a neural network can learn $k$-sparse disjoint parities within $n^{O(k)}$ iterations.

\begin{theorem}[SGD on disjoint-PolyNets learns disjoint parities]
\label{thm:sgd-disjoint}
Suppose we train a disjoint-PolyNet, initialized as above, with online SGD. Then there exists an adaptive learning rate schedule such that for any $\eps > 0$, with probability 0.99, the error falls below $\eps$ within $\tilde{O}\left((n')^{(2k-1)} \log(1/\eps)\right)$ steps.
\end{theorem}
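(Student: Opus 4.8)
The plan is to project the $k$ coupled SGD trajectories onto a low-dimensional stochastic process and then run a drift-plus-concentration argument whose continuous-time shadow is exactly Theorem~\ref{thm:fraction}. For the all-ones initialization (the other initializations being analogous), write $w_i = a_i e_1 + v_i$ with $a_i := w_{i,1}$ the coordinate of block $P_i$ lying in $S$ and $v_i \perp e_1$ the $n'-1$ irrelevant coordinates; set $M_i := \|v_i\|^2$ and $\Pi_t := \prod_i a_i$. The population-gradient identity of Section~\ref{subsec:polynet} says the drift of $a_i$ is $\eta_t\prod_{j\ne i}a_j$ and that $v_i$ has zero population drift, so $M_i$ moves only through SGD noise; moreover $\E[\Delta M_i \mid \mathcal{F}_t] = \eta_t^2(n'-1)\prod_{j\ne i}(a_j^2+M_j)$, the per-coordinate gradient variance is at most $\prod_{j\ne i}(a_j^2+M_j)$, and $\Pi_t$ is a submartingale with drift $\eta_t\sum_i(\prod_{j\ne i}a_j)^2 \ge 0$. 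I would first record an error certificate: if at some iterate $\min_i a_i > 0$ and $\min_i a_i^2 \ge C(\max_i M_i)\log(k/\epsilon)$, then the error is $\le\epsilon$. Indeed, conditioning on the relevant bits, $f(x;w) = \prod_i\big(a_i x_{(P_i)_1} + \langle v_i, x_{P_i}\rangle\big)$, each $\langle v_i, x_{P_i}\rangle$ is a mean-zero sum of $n'-1$ bounded terms with variance $M_i$, so by Hoeffding it is smaller in magnitude than $a_i$ with probability $\ge 1-\epsilon/k$; on the intersection of these events $\sgn(f(x;w)) = (\prod_i\sgn a_i)\,\chi_S(x) = \chi_S(x)$, where $\Pi_t>0$ is precisely what rules out the anti-correlated ($100\%$-error) solution.

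Next I would fix an adaptive schedule $\{\eta_t\}$ and carry the following invariants forward by induction on $t$, each via a martingale maximal inequality: (a) $\Pi_t \ge 1/2$, that is, no $a_i$ ever crosses $0$ — this holds because each $a_i$ starts at $1$ with nonnegative drift and, if $\eta_t$ is small enough, its accumulated quadratic variation $\sum_s\eta_s^2\prod_{j\ne i}(a_j^2+M_j)$ stays $o(1)$; (b) the relevant weights stay balanced, $\max_i a_i \le 2\min_i a_i$, using that a lagging coordinate receives a strictly larger drift so the ratio is self-correcting; and (c) the irrelevant norms stay controlled, $\max_i M_i \le n'\cdot\mathrm{polylog}$, again by bounding $\sum_s\eta_s^2\prod_{j\ne i}(a_j^2+M_j)$ — this is where the powers of $n'$ enter the final rate. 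The argument is insensitive to the batch size $B$: larger $B$ only divides every variance by $B$, so $B=1$ is the worst case.

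With the invariants in force I would track the signal-to-noise potential $\rho_t := \min_i a_i^2/\max_i M_i$, which is $\Theta(1/n')$ at initialization, and note that $\rho_t \ge C\log(k/\epsilon)$ is exactly the certificate above. Using the drift formulas one shows $\E[\Delta\rho_t\mid\mathcal{F}_t]$ is strictly positive with an explicit lower bound of the form $\mathrm{poly}(\rho_t)/n'$ under the chosen $\eta_t$; bounding the hitting time of the level $C\log(k/\epsilon)$ by summing the reciprocal drift over the dyadic scales $\rho\in\{2^{-m}\}$, and then adding a final geometric-convergence phase that contributes the $\log(1/\epsilon)$ factor, gives the claimed $\tilde O((n')^{2k-1}\log(1/\epsilon))$ iteration bound.

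The hard part is the plateau, that is, the regime $\rho_t \ll 1$, in which essentially all of the iterations are spent. There the per-step drift of $\rho_t$ is a factor $\rho_t^{(k-1)/2}\ll1$ smaller than its per-step fluctuation, so there is no pointwise monotone progress and one must compare accumulated drift against accumulated quadratic variation with an $\eta_t$ that is simultaneously small enough to keep the variances of $a_i$, $\Pi_t$, and $M_i$ under control over the entire $\tilde O((n')^{2k-1})$-step horizon and large enough for the signal to emerge. Threading this needle — keeping $\Pi_t$ positive, the $a_i$ balanced, and the $M_i$ from blowing up (which would inflate the gradient variance $\prod_{j\ne i}(a_j^2+M_j)$ and stall progress) all simultaneously with probability $\ge0.99$ — is the real obstacle; the certificate, the geometric tail, and the one-index-per-block population-gradient identity are comparatively routine.
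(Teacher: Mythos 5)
Your skeleton is right — population gradient gives a pure signal in the relevant coordinate and zero drift on the irrelevant ones, you control the noise via a martingale concentration bound, and you finish with an error certificate equivalent to Lemma~\ref{lem:err}. But the plan stops exactly where the proof's one nontrivial idea lives, and the "threading the needle" paragraph is a description of a difficulty rather than a resolution of it.

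The paper's resolution is a specific adaptive learning rate that you did not identify: take $\eta^{(t)}_i \propto \bigl(\sqrt{T\log(nT/\delta)}\cdot\prod_{l\ne i}\|w_l^{(t)}\|_1\bigr)^{-1}$, i.e.\ normalize each block's step size by the product of the other blocks' current $\ell_1$-norms. Since the stochastic gradient of $w_i$ has magnitude at most $\prod_{l\ne i}\|w_l^{(t)}\|_1$, this normalization makes \emph{every} martingale increment deterministically $O(1/\sqrt{T\log})$, so a single Azuma--Hoeffding application gives $|s^{(t)}_{i,j}|\le 1/2$ uniformly in $t,i,j$ — no need to track $\|v_i\|^2$, balance the $a_i$, or worry about fluctuations inflating the variance. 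Irrelevant coordinates then stay in $[-3/2,3/2]$ for free, and the signal increment in the relevant coordinate becomes $\frac{1}{2\sqrt{2T\log}}\prod_{j\ne i}\frac{|w_{j,1}^{(\tau)}|}{\|w_j^{(\tau)}\|_1}\ge \frac{1}{2\sqrt{2T\log}}\bigl(\frac{1}{3n'-2}\bigr)^{k-1}$, a \emph{constant per step}. Accumulating this deterministic lower bound linearly over $T$ steps and invoking the error certificate gives the $\tilde O((n')^{2k-1}\log(1/\eps))$ rate directly, with no dyadic scales and no potential function.

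Your proposed alternative — tracking $\rho_t=\min_i a_i^2/\max_i M_i$ by its drift, summing reciprocal drifts over dyadic levels of $\rho$ — would require a clean lower bound on the drift of a ratio of two non-monotone stochastic quantities while simultaneously controlling both numerator and denominator over a $\tilde O((n')^{2k-1})$-step horizon. As you note, in the plateau regime the drift of $\rho_t$ is much smaller than its fluctuation; without the $\ell_1$-normalization you have not shown this is workable, and I do not think it is, at least not without importing the same idea in disguise. Concretely: your invariant (a) ($\Pi_t\ge 1/2$) requires $\sum_s\eta_s^2\prod_{j\ne i}(a_j^2+M_j)=o(1)$, and the \emph{only} way to achieve this while also accumulating $\Theta(\sqrt{n'})$ of signal in the relevant coordinates within the claimed number of steps is to make $\eta_t$ shrink as $\prod_{l\ne i}\|w_l\|_1$ grows — which is the paper's normalization. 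So the gap is not cosmetic: the adaptive schedule is the theorem's content, and it is the piece your proposal leaves open.

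Two minor points: the theorem refers to the $\pm 1$ initialization of the preceding subsection, under which $\prod_i v_i(0)>0$ only with probability $1/2$ (the all-ones initialization you assume sidesteps this); and the paper bounds the irrelevant weights coordinatewise ($|w_{i,j}|\le 3/2$ for $j>1$) rather than via $\|v_i\|^2$, which is both tighter and simpler once the normalization is in place.
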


Extended versions of these theorems, along with proofs, can be found in Appendix~\ref{app:polynet}.
\section{Hidden progress: discussion and additional experiments}
\label{sec:discussion}

\begin{figure}
    \centering
    \includegraphics[width=0.36\linewidth]{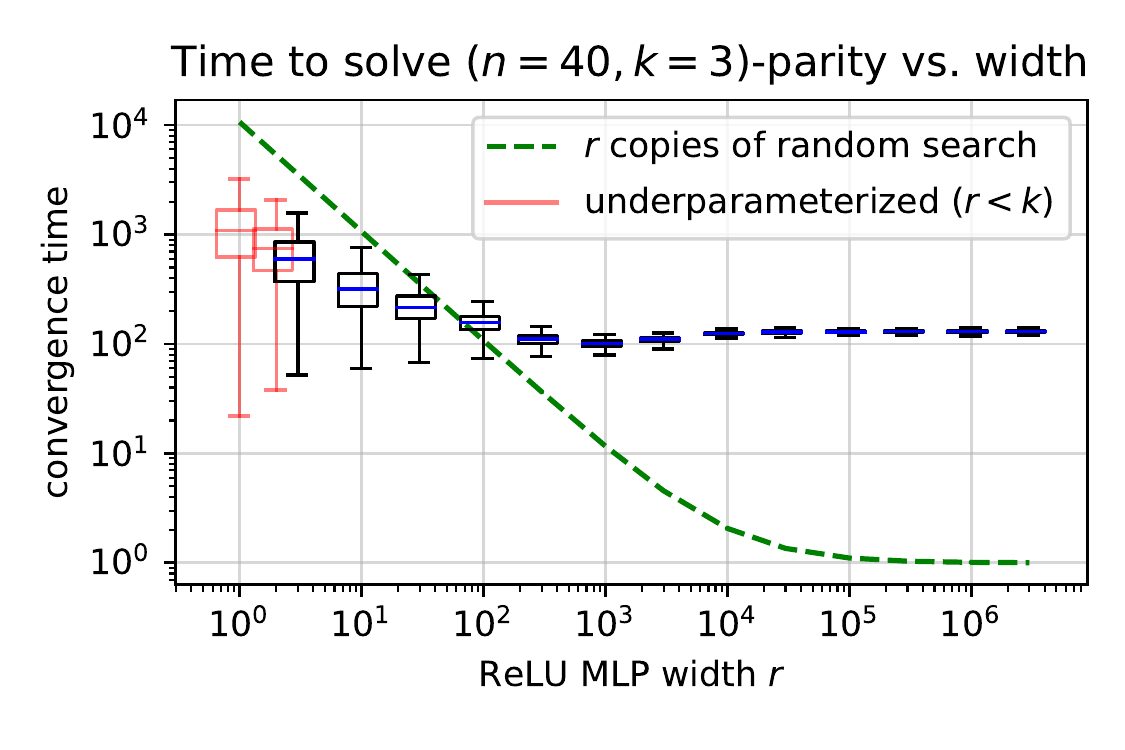}
    \includegraphics[width=0.63\linewidth]{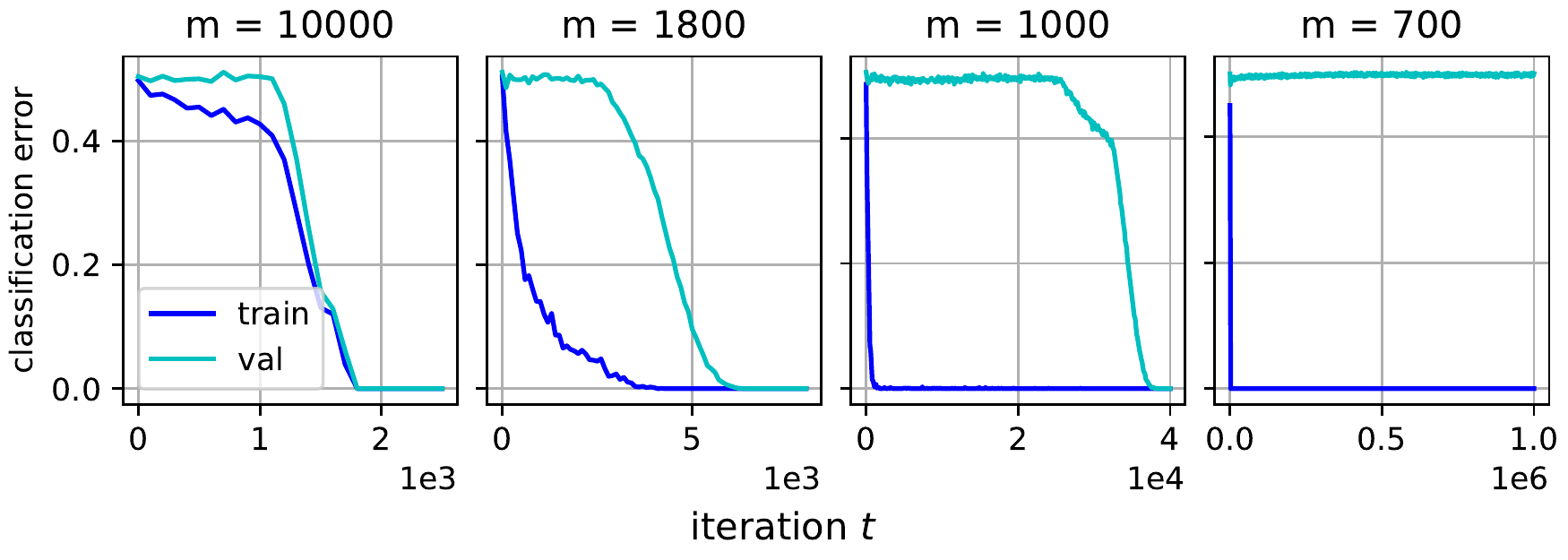}
    \vspace{-5mm}
    \caption{Parity as a sandbox for understanding the effects of model size and dataset size. \emph{Left:} Success times vs. network width $r$ on a fixed $(40,3)$-parity task: in accordance with the theory, parallelization experiences diminishing returns (unlike expected success times for random search, shown in green). Underparameterized models ($r = 1, 2$) were considered successful upon reaching $55\%$ accuracy. \emph{Right:} Training curves where only the sample size $m$ is varied. The two center panels display \emph{``grokking''}: a large gap between the time to zero train error vs. zero test error. }
    \label{fig:extra-experiments}
\end{figure}

So far, we have shown that sparse parity learning provides an idealized setting in which neural networks successfully learn sparse combinatorial features, with a mechanism of continual progress hiding behind discontinuous training curves. In this section, we outline preliminary explorations on a broader range of interesting phenomena which arise in this setting. Details are provided in Appendix~\ref{sec:appendix-experiments}, while more systematic investigations are deferred to future work.

\paragraph{Hidden progress measures for learning parities.} The theoretical and (black-box) empirical results suggest that SGD does \emph{not} learn parities via the memoryless process of random exhaustive search. This suggests the existence of \emph{progress measures}: scalar quantities which are functions of the training algorithm's state (i.e. the model weights $w_t$) and are predictive of the time to successful convergence.
We provide some \emph{white-box} investigations which further support the hypothesis of hidden progress, by examining the gradual improvement in quantities other than the training loss.
In Appendix~\ref{subsec:bool}, we directly plot the Fourier gaps of the population gradient, as a function of $t$, finding that they are large (within a small constant factor of those of majority) in practice.
In Figure~\ref{fig:hidden-progress} and Appendix~\ref{subsec:appendix-hidden-progress}, we examine the weight movement norm $\rho(w_{0:t}) := \left\lVert w_t - w_0 \right\rVert_\infty$ to reveal hidden progress, motivated by the fact that $w_t - w_0$ is a linearized estimate for the initial population gradient.

\paragraph{Roles of overparameterization vs. \textit{oversampling}.}
An interesting consequence of our analysis is that it illuminates scaling behaviors with respect to a third fundamental resource parameter: \emph{model size}, which we study in terms of network width $r$.
If SGD operated by a ``random search'' mechanism, one would expect width to provide a parallel speedup. Instead, we find that SGD sequentially amplifies progress. The sharp lower tails in Figure~\ref{fig:scaling-laws} \emph{(left)} imply that running $r$ identical copies of SGD does \emph{not} give $(1/r)\times$ speedups; more directly, in Appendix~\ref{subsec:appendix-width} (previewed in Figure~\ref{fig:extra-experiments} \emph{(left)}), we find that convergence times for sparse parities empirically plateau at large model sizes.\looseness=-1

\paragraph{Emergence of grokking in the finite-sample (multi-pass) setting.}
Our main results are presented in the \emph{online learning} setting (fresh minibatches from $\Dcal_S$ at each iteration). While this mitigates the confounding factor of overfitting, it couples the resources of training time and independent samples in a suboptimal way, due to the computational-statistical gap for parity learning.
In Appendix~\ref{subsec:appendix-grok},
we find empirically that minibatch SGD (with weight decay) can learn sparse parities, even with smaller sample sizes $m \ll n^k$. We reliably observe the \emph{grokking} phenomenon \citep{power2022grokking}: an initial overfitting phase, then a \emph{delayed} phase transition in the generalization error; see the two center panels of Figure~\ref{fig:extra-experiments} \emph{(right)}. These results complement and corroborate the findings of \citet{nanda2022mechanistic}, who analyze the hidden progress of Transformers trained on arithmetic tasks (a setting which also exhibits grokking). \looseness=-1

\paragraph{Deeper networks.}
It is a significant challenge (and generally outside the scope of this work) to understand the interactions between network \emph{depth} and computational/statistical efficiency. In Appendix~\ref{subsec:deep-quad}, we show that learning parities with deeper polynomial-activation MLPs comprises a simple counterexample to the \emph{``deep only works if shallow is good''} principle of \cite{malach2019deeper}: a deep network can get near-perfect accuracy, even when greedy layer-wise training (e.g. \citep{belilovsky2019greedy}) cannot beat trivial performance. By providing positive theory and empirics which elude these simplified explanations of SGD, we hope to point the way to a more complete understanding of learning dynamics in the challenging cases where no apparent progress is made for extended periods of time.
\section{Conclusion}

This work puts forward sparse parity learning as an elementary test case to explore the puzzling features of the role of computational (as opposed to statistical) resources in deep learning. In particular, we have shown that a variety of neural architectures solve this combinatorial search problem, with a number of computational steps nearly matching the sparsity-dependent SQ lower bound. Furthermore, we have shown that despite abrupt phase transitions in the loss and accuracy curves, SGD works by gradually amplifying the sparse features ``under the hood''.

Even in this simple setting, there are several open experimental and theoretical questions. This work largely focuses on the online learning case, which couples training iterations with fresh i.i.d. samples. We believe it would be instructive to investigate parity learning when the three resources of samples, time, and model size are scaled separately. Some very preliminary findings along these lines are presented in Section~\ref{sec:empirics}.
It is an open problem to extend our theoretical results to the small-batch setting, as well as to the full range of architectures and losses in our experiments. Resolving these questions would require a better understanding of the anti-concentration behavior of Boolean Fourier coefficients, which is much less studied than the analogous concentration phenomena.

Another important follow-up direction is understanding the extent to which these insights extend from parity learning to more complex (including real-world) combinatorial problem settings, as well as the extent to which non-synthetic tasks (in, e.g., natural language processing and program synthesis) embed within them parity-like subtasks of exhaustive combinatorial search. We hope that our results will lead to further progress towards understanding and improving the optimization dynamics behind the recent slew of dramatic empirical successes of deep learning in these types of domains.

\paragraph{Broader impact.} This work seeks to contribute to the foundational understanding of computational scaling behaviors in deep learning. Our theoretical and empirical analyses are in a heavily-idealized synthetic problem setting. Hence, we see no direct societal impacts of the results in this study.

\paragraph{Acknowledgements.} We would like to thank Lenka Zdeborová for providing us with references to the statistical physics literature on phase transitions in the learning curves of neural networks, and Matus Telgarsky for bringing to our attention the better sample complexity guarantees of 2-sparse parity learning in the NTK regime. Sham Kakade acknowledges funding from the Office of Naval Research under award N00014-22-1-2377.

\bibliographystyle{plainnat}
\bibliography{bib}

\newpage

\appendix

\addcontentsline{toc}{section}{Appendix} 
\part{Appendix} 
\parttoc 

\newpage

\section{Additional background, preliminaries, and related work} \label{app:preliminaries}

\subsection{Parities: orthogonality and computational hardness}
For each integer $n \geq 1$ and nonempty subset of indices $S \subseteq [n]$, define the $(n,S)$-\emph{parity function} $\chi_S(x) = \prod_{i \in S} x_i$, i.e. the parity of the bits at the indices in $S$. We define the $(n,S)$-\emph{parity distribution} $\Dcal_S$ over examples $(x,y)$ as follows: the features $x \sim \mathrm{Unif}(\{\pm 1\}^n)$ are uniform random bits, with labels $y \in \{\pm 1\}$ given by the parity function $y = \chi_S(x)$. For $0 \leq k \leq n$, the corresponding $(n,k)$-\emph{parity learning problem} is the task of identifying an unknown size-$k$ set $S$ (chosen at random), using samples from $\Dcal_S$. With knowledge of $k$ but not $S$, a learner must use the labels to distinguish between $\binom{n}{k}$ possible ``relevant feature sets''; thus, the statistical limit for this problem is $\log \binom{n}{k} = \Theta(k \log n)$ samples.

This work leverages the parity problem as a ``computationally hard case'' for identifying the features $S$ which are relevant to the label. Observe that for any $S' \subseteq [n]$, it holds that
\begin{equation}
\E_{x \sim \mathrm{Unif}(\{\pm 1\}^n)} \left[\chi_S(x) \chi_{S'}(x)\right] = \E_{(x,y) \sim \Dcal_S} \left[ \chi_{S'}(x) \, y \right] =
\begin{cases}
1 & S' = S \\
0 & \text{otherwise}
\end{cases}.
\label{eq:parity-orth}
\end{equation}
That is, a learner who guesses indices $S'$ cannot use correlations as feedback to reveal which (or how many) indices in $S'$ are correct, unless $S'$ is \emph{exactly} the correct subset. In this sense, for the $(n,k)$-parity problem, the $\binom{n}{k} - 1$ wrong answers are indistinguishable from each other. Thus, the structure of this problem forces this form of learner (but not necessarily all learning algorithms) to perform exhaustive search over subsets.

Property~\ref{eq:parity-orth} (a.k.a. the orthogonality of parities under the correlation inner product) implies that any function $f : \{\pm 1\}^n \rightarrow \RR$ has a unique Fourier expansion (see, e.g. \citep{o2014analysis}):
\begin{equation}
f(x) = \sum_{S \subseteq [n]} \hat f(S) \chi_S(x), \qquad \hat f_S = \E_{x \sim \mathrm{Unif}(\{\pm 1\}^n)} \left[f(x) \chi_S(x) \right].
\label{eq:fourier}
\end{equation}
In the statistical query (SQ) model \citep{kearns1998efficient}, Property~(\ref{eq:parity-orth}) implies computational lower bounds. In this model, a learner, rather than having access to examples drawn from the distribution, can query an oracle, which responds with noisy estimates of the query over the distribution. Namely, each iteration the learner outputs a query $q_i : \{\pm 1\}^n \to [-1,1]$, and the oracle returns some value $v_i$ satisfying $|v_i - \E_{(x,y) \sim \Dcal}\left[q_i(x,y)\right]| \le \tau$, for some tolerance parameter $\tau > 0$. It can be shown that Equation~\ref{eq:parity-orth} implies that each query will have a non-trivial correlation only with a small fraction (namely, $1/\tau^2$) of the possible parities, which then implies that an SQ algorithm which solves the $(n,k)$-parity problem using $T$ queries of tolerance $\tau$ must satisfy $T / \tau^2 \ge \Omega(n^k)$. This constitutes a lower bound on the number of queries (and/or on the tolerance), which indicates that essentially, SQ algorithms cannot do much better than exhaustive search (going over all the possible choices of size-$k$ subsets). 

It should be mentioned that the $(n,k)$-parity problem can be solved efficiently by a learning algorithm that has access to examples (i.e., an algorithm that does not operate in the SQ framework). Specifically, this problem can be solved by the Gaussian elimination algorithm. Moreover, it has been shown that the (Stochastic) Gradient Descent algorithm, discussed in the next section, can also be utilized for solving parities, given accurate enough estimates of the gradient and a very particular choice of neural network architecture \cite{abbe2020poly}. That said, when the accuracy of the gradients is not sufficient, GD suffers from the same SQ lower bound mentioned above (i.e., GD is essentially an SQ algorithm \cite{abbe2021power}).

Learning sparse \emph{noisy} parities, even at a very small noise level (i.e., $o(1)$ or $n^{-\delta}$) is believed to be computationally hard. This was first explicitly conjectured by \citet{alekhnovich2003more}, and has been the basis for several cryptographic schemes (e.g., \citep{ishai2008cryptography, applebaum2009fast, applebaum2010public}). For noiseless sparse parities, \citet{kol2017time} show time-space hardness in the setting where $k = \omega(1)$. We present some experiments with noisy parities in Appendix~\ref{subsec:noise-experiments}, finding that our empirical results (and theoretical analysis) are robust to $\Theta(1)$ noise.

\subsection{Neural networks and standard training}
\label{subsec:prelims-nn}

Next, we establish notation for the standard neural network training pipeline. Our main results are presented in the online learning setting, with a stream of i.i.d. batches of examples. At each iteration $t = 1, \ldots, T$, a learning algorithm receives a batch of $B$ examples $\{(x_{t,i},y_{t,i})\}_{i = 1}^{B}$ drawn i.i.d. from $\Dcal_S$, then outputs a classifier $\hat y_t : \{\pm 1\}^n \rightarrow \{\pm 1\}$. If $\E_{(x,y) \sim \Dcal_S}[ \mathbbm{1}[\hat y_{t}(x) \neq y] ] \leq \eps$ (i.e. $\hat y_{t}$ agrees with the correct parity on at least a $(1-\eps)$ fraction of inputs), the learner is said to have solved the parity problem with error $\eps$ in $t$ iterations ($tB$ samples); the smallest $t$ for which this is true is the \emph{convergence time} $t_c$. A learner may also output an initial classifier $\hat y_0$ before observing any data.

This formulation permits \emph{improper} function classes (i.e. other than parities over subsets $S'$) for the parity learning problem. In particular, we will focus on hypothesis classes of continuous functions $f : \{\pm 1\}^n \times \Theta \rightarrow \RR$, which map to classifiers $\hat y(x) = \sgn(f(x;\theta))$\footnote{When $f(x;\theta) = 0$ in practice (e.g. with sign initialization), we break the tie arbitrarily. We ensure in the theoretical analysis that this does not happen.}. When $\Theta$ is a vector space over $\RR$, a ubiquitous online learning algorithm is gradient descent (GD). For a choice of loss function $\ell : \{\pm 1\} \times \RR \rightarrow \RR$, initialization $\theta_0$,
learning rate schedule $\{\eta_t\}_{t=1}^T \subseteq \RR$ and weight-decay schedule $\{\lambda_t\}_{t=1}^T \subseteq \RR$, GD defines iterative updates
\begin{equation}
\theta_{t+1} \leftarrow (1-\lambda_t)\theta_t - \eta_t \nabla_\theta \left( \frac{1}{B} \sum_{i=1}^B \ell( y_{t,i}, f(x_{t,i}; \theta_t) ) \right),
\label{eq:gd}
\end{equation}
where $f$ (the \emph{architecture}) and $\ell$ are assumed to be such that this gradient (more generally, subgradient) is well-defined.
In this context, \emph{online} and \emph{stochastic} gradient descent (OGD/SGD) are equivalent names for the update rule (\ref{eq:gd}).

A fundamental object of study in deep learning is the multi-layer perceptron (MLP). In this setting, a 2-layer MLP with \emph{width} $h$ and activation function $\sigma : \RR \rightarrow \RR$, parameterized by $W \in \RR^{r \times n}, b \in \RR^r, u \in \RR^{r}$, specifies the function
\[ f(x ; W, b, u) = u^\top \sigma(Wx+b), \]
where $\sigma(\cdot)$ is applied entrywise. It is standard to use GD to jointly update the network's parameters.
Our results include positive results about ``single neurons'': MLPs with width $r=1$. We note that for our theoretical analysis, when training MLPs with GD, we allow for different learning rate and weight decay schedule for the different layers.

Finally, we will analyze randomized learning algorithms, such as GD with random initialization $\theta_0$, whose iterates $\theta_t$ (and thus classifiers $\hat y$) are random variables even when the samples are not. A learning algorithm has \emph{permutation symmetry} if, for all sequences of data $\{(x_{t,i},y_{t,i})\}$, the classifiers $\hat y_t \circ \pi$ resulting from feeding $\{(\pi(x_{t,i}),y_{t,i})\}$ to the learner have identical distributions as $\pi$ ranges over all permutations of indices. The neural architectures and initializations (and thus, SGD) considered in this work are permutation-symmetric; for this reason, it is convenient for notation to choose $S = [k]$ as the canonical $(n,k)$-parity learning problem, without loss of generality.

\subsection{Additional related work}\label{app:relwork}

\paragraph{Feature learning using GD on neural networks.} A line of recent work has focused on understanding the feature learning ability of gradient descent dynamics on neural networks. These analyses go beyond the Neural Tangent Kernel (NTK) regime, where they show a separation between learning with fixed features versus GD on neural networks, for these problems. Several of these works assume structure (often "sparse") in the input data which is useful for the prediction task, and helps avoid computational hardness. In contrast, our work focuses on studying hard problems at their computational limit. Here we discuss the most relevant works in detail:

A line of work (\cite{diakonikolas2020approximation,yehudai2020learning, frei2020agnostic}) focuses on learning a single non-linearity $y = \sigma(w^\top x)$ (typically $\sigma(\cdot)$ is the ReLU or sigmoid) using gradient-based methods. These works obtain polynomial-time convergence guarantees when the distribution satisfies a \emph{spread} condition. These results do not extend to the Boolean hypercube.

\cite{daniely2020learning} also study the problem of learning sparse parities using neural networks. One key difference from our work is that they assume a modified version of the problem, where the input distribution is not uniform over the hypercube, but instead leaks information about the label. In particular, the distribution ensures that the relevant parity bits always have the same value. \cite{shi2021theoretical} generalize this setting by considering a setting where labels are generated based on certain class specific patterns and the data itself is generated using these patterns with some extra background patterns. This also embeds information in the data itself regarding the label, unlike our setting, where the labels are uncorrelated with the input features. Under these structural assumption, the papers study how GD on a two-layer network can learn useful features in polynomial time. Both these analysis also exploits the first gradient step to find useful features. \cite{shi2021theoretical} additionally require a second step to refine the features.

\cite{ba2022high} show how the first gradient step is important for feature leaning. In particular, they show that first update is essentially rank-1 and aligns with the linear component of the underlying function. The functions we consider (parity) do not have a linear component.

\cite{abbe2022initial} define a notion of initial alignment between the network at initialization and the target function and show that it is essential to get polynomial time learnability with noisy gradients on a fully connected network. Our MLP results also exploit the correlation between the gradient and the label to ensure that the gradient update gives us signal. 

\cite{frei2022random} also study learnability of a parity-like function with $k=2$ under noisy labels. The paper analyzes early stopping GD for learning the underlying labeling function. Our setup is quite different from theirs and can handle $k>2$.

In concurrent work, \cite{damian2022neural} consider the problem of learning polynomials which depend on few relevant directions using gradient descent on a two-layer network. They assume that the distributional Hessian of the ground truth function spans exactly the subspace of the relevant direction. Using this, they show that gradient descent can learn the relevant subspace with sample complexity scaling as $d^2$ and not $d^p$ where $p$ is the degree of the underlying polynomial as long as the number of relevant directions is much less than $d$. Their proof technique is similar to our two-layer MLP result which also exploits correlation in the first gradient step. However, for our setting, the distributional Hessian has rank 0 and does not satisfy their assumptions.

\paragraph{Statistical mechanics of machine learning.}
An extensive body of work originating in the statistical physics community has studied phase transitions in the learning curves of neural networks \citep{gardner1989three, watkin1993statistical, engel2001statistical}. These works typically focus on student-teacher learning in the ``thermodynamic limit’’, in which the number of training examples is $\alpha$ times larger than the input dimension and both are taken to infinity. One of the classic toy architectures analyzed in this literature is the \emph{parity machine} \citep{mitchison1989bounds, hansel1992memorization, opper1994learning, kabashima1994perfect, simonetti1996line}. In our work, we introduce PolyNets, a variant of parity machines in which the output is real-valued rather than $\pm 1$; and we theoretically analyze disjoint-Polynets, which are the real-output analogue of the oft-considered parity machines with tree architecture. While much of the statistical mechanics of ML literature focuses on an idealized training limit in which the weights reach a Gibbs distribution equilibrium, there is a strand of the literature that aims to characterize the trajectory of SGD training in the high-dimensional limit with constant-sized sets of ordinary differential equations \citep{saad1995line, saad1995dynamics, goldt2019dynamics}. These papers discuss cases, including problems that share aspects with 2-sparse parities \cite{refinetti2021classifying}, where the network gets stuck in (and then escapes from) a plateau of suboptimal generalization error. Recently, \citet{arous2021online} studied (for rank-one parameter estimation problems) the relative amount of time spent by SGD in an initial high-error ``search'' phase versus a final ``descent'' phase, which is reminiscent of the framing of Theorem~\ref{thm:fraction}. However, to our knowledge prior work has not shown $k$-sparse parities can be learned with a number of iterations that nearly matches known lower bounds, nor has it specifically studied phase transitions in $k$-sparse parity learning during gradient descent.

\paragraph{Learning parities with the NTK.} Another relevant line of work studies learning the parity problem using the neural tangent kernel (NTK) \citep{jacot2018neural}. Namely, in some settings, when the network's weights stay close to their initialization throughout the training, SGD converges to a solution that is given by a linear function over the initial features of the NTK. As shown in Theorem \ref{thm:ntk}, learning parities over a fixed set of features requires the size of the model to be $\Omega(n^k)$. In contrast, the model size (number of hidden neurons) considered in this paper does not depend at all on the input dimension $n$. Nevertheless, the NTK analysis does give better sample complexity guarantees than the ones presented in this work, with a somewhat more natural version of SGD. For example, the work of \cite{ji2019polylogarithmic} demonstrates learning 2-sparse parities using NTK analysis with a sample complexity of $O(n^2)$, which matches the sample complexity lower bound for learning this problem with NTK (see \cite{wei2019regularization}). Concurrent work by \cite{telgarsky2022feature} shows that this sample complexity can be improved to $O(n)$ once the optimization leaves the NTK regime. However, this analysis is given for networks of size $O(n^n)$, much larger than the networks considered in this paper. We refer the reader to Table 1 in \cite{telgarsky2022feature} for a complete comparison of the sample-complexity, run-time and model-size bounds achieved by different works studying 2-sparse parities.
\section{Proofs}

\subsection{Global convergence for SGD on MLPs}
\label{subsec:appendix-mlp-proof}

For some even number $r$, consider a ReLU MLP of size $r$:
\[
f(x;\theta) = \sum_{i=1}^r u_i\sigma(w_i^\top x+b_i)
\]
Where $\sigma$ is the ReLU activation $\sigma(x) = \max \{x,0\}$, $w_1, \dots, w_r \in \RR^{r \times n}, b \in \RR^r, u \in \RR^r$ and we denote the set of parameters by $\theta = \{w_1, \dots, w_r, b, u\}$. We denote by $u_i^{(t)}, w_i^{(t)}, b^{(t)}$ and $\theta_t$ the value of the relevant parameters at iteration $t$ of gradient-descent. For brevity, we sometimes denote $w_i = w_i^{(0)}, b_i = b_i^{(0)}, u_i = u_i^{(0)}$. 
W.l.o.g., we assume that $S = [k]$, and so $y = \chi_{[k]}(x) = \prod_{i=1}^k x_i$. Indeed, since the weights' initialization we consider is permutation symmetric, this does not limit the generality of the results. We take $\ell$ to be the \textbf{hinge loss}: $\ell(y, \hat{y}) = \max \{1-y\hat{y}, 0\}$. We use the following unbiased initialization:
\begin{itemize}
    \item For all $1 \le i \le r/2$, randomly initialize
    \[
    w^{(0)}_i \sim \mathrm{Unif} \left(\{\pm 1\}^n\right), u_i^{(0)} \sim \mathrm{Unif} \left(\{\pm 1\}\right), b_i^{(0)} \sim \mathrm{Unif}(\left\lbrace-1+1/k, \dots, 1-1/k\right\rbrace)
    \]
    \item For all $r/2 < i \le r$, initialize
    \[
    w^{(0)}_i = w^{(0)}_{i-r/2}, b_i^{(0)} = b_{i-r/2}^{(0)}, u^{(0)}_i = -u^{(0)}_{i-r/2}
    \]
\end{itemize}

We start by computing the population gradient at initialization. Using the fact that $\ell'(0,y) = -y$ we get the following:
\begin{equation}
\E\left[\nabla_{w_{i,j}} \ell(f(x;\theta_0), y)\right] =
\E \left[ -y\nabla_{w_{i,j}} f(x;\theta_0) \right] \label{eq:pop-grad}
\end{equation}
\[= \E \left[ -y u_i 1\left\lbrace{w_{i} \cdot x+b_i > 0}\right\rbrace x_j\right].\]

For $j \in [k]$ we have:
\[
\E\left[\nabla_{w_{i,j}} \ell(f(x;\theta_0), y)\right] = -u_i\E \left[\left(\prod_{j' \in [k] \setminus \{j\}} x_{j'} \right) 1\left\lbrace{w_{i} \cdot x+b_i > 0}\right\rbrace\right]
\]
For $j \notin [k]$ we have:
\[
\E\left[\nabla_{w_{i,j}} \ell(f(x;\theta_0), y)\right] = -u_i\E \left[\left(\prod_{j' \in [k] \cup \{j\}} x_{j'}\right) 1\left\lbrace{w_{i} \cdot x+b_i > 0}\right\rbrace\right]
\]
Finally, we have:
\[
\E\left[\nabla_{b_i} \ell(f(x;\theta_0), y)\right] = -u_i\E \left[\left(\prod_{j' \in [k]} x_{j'}\right) 1\left\lbrace{w_{i} \cdot x+b_i > 0}\right\rbrace\right]
\]

Denote $$g_{i,j} = \mathbb{E} \left[\nabla_{w_{i,j}} \ell(f(x;\theta_0), y)\right],~ \gamma_i = \mathbb{E} \left[\nabla_{b_i} \ell(f(x;\theta_0), y)\right]$$

For some function $f$ and some subset $S \subseteq [n]$, denote $\hat{f}(S) = \E[f(x)\chi_S(x)]$.

Denote $\mathrm{LTF}_{w,b}(x) = 1\{w \cdot x+b > 0\}$ and
let $\mathrm{Maj}(x) = \sgn \left(\sum_{i=1}^n x_i\right)$ and observe that, if $|b| < 1$,
$$\mathrm{LTF}_{w,b}(x) = \frac{1}{2}+\frac{1}{2}\mathrm{Maj}(w \odot x)$$
where $w \odot x = (w_1 x_1, \dots, w_n x_n) \in \{\pm 1\}$.
Since $\{\chi_S\}_{S \subseteq [n]}$ is a Fourier Basis, we can write $\mathrm{Maj} = \sum_{S \subseteq [n]}\hat{\mathrm{Maj}}(S) \chi_S$ and therefore:
\begin{align*}
\mathrm{LTF}_{w,b}(x) &= \frac{1}{2}+\frac{1}{2}\mathrm{Maj}(w \odot x)
= \frac{1}{2}+\frac{1}{2}\sum_{S \subseteq [n]} \hat{\mathrm{Maj}}(S) \chi_S(w \odot x)\\
&= \frac{1}{2}+\frac{1}{2}\sum_{S \subseteq [n]} \hat{\mathrm{Maj}}(S) \chi_S(w) \chi_S(x)
\end{align*}
So, for every $S \subseteq [n]$ with $|S| \ge 1$ we have $\hat{\mathrm{LTF}}_{w,b}(S) = \frac{1}{2}\hat{\mathrm{Maj}}(S) \chi_S(w)$ and so $|\hat{\mathrm{LTF}}_{w,b}(S)| = \frac{1}{2}|\hat{\mathrm{Maj}}(S)|$.

\begin{lemma}[\cite{o2014analysis}, Section 5.3]
\label{lem:maj_bound}
Fix some $k$, and assume that $n \ge 2k^2$. Then, for every $S \subseteq [n]$ s.t. $|S| = k$ it holds that:
\begin{itemize}
    \item If $k$ is even: $\hat{\mathrm{Maj}}(S) = 0$.
    \item If $k$ is odd:
    \[
    c_1^2 k^{-2/3} \le \binom{n}{k} \hat{\mathrm{Maj}}(S)^2 \le c^2_2 k^{-2/3}
    \]
    for some universal constants $c_1, c_2$. More precisely,
    \[
    \hat{\mathrm{Maj}}(S) = (-1)^\frac{k-1}{2}\frac{\binom{\frac{n-1}{2}}{\frac{k-1}{2}}}{\binom{n-1}{k-1}} \cdot 2^{-(n-1)} \binom{n-1}{\frac{n-1}{2}}
    \]
\end{itemize}
\end{lemma}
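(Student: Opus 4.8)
The plan is to first reduce the statement to computing one number, then pin down that number exactly for odd $k$ by differentiating the majority function, and finally read off the two-sided bound via Stirling's approximation.

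Since $\mathrm{Maj}$ is invariant under permuting its inputs, $\hat{\mathrm{Maj}}(S)$ depends only on $k=|S|$; write $\alpha_k$ for this common value, so it suffices to take $S=[k]$. The even case is immediate: as $n$ is odd there are no ties, hence $\mathrm{Maj}(-x)=-\mathrm{Maj}(x)$, while $\chi_{[k]}(-x)=(-1)^k\chi_{[k]}(x)=\chi_{[k]}(x)$ for even $k$; applying the change of variables $x\mapsto-x$ to $\alpha_k=\E[\mathrm{Maj}(x)\chi_{[k]}(x)]$ gives $\alpha_k=-\alpha_k=0$.

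For odd $k$ the main step is to pass to the first discrete derivative $D_1$, using $\widehat{D_1 g}(T)=\hat g(T\cup\{1\})$ for $1\notin T$, so that $\alpha_k=\widehat{D_1\mathrm{Maj}_n}(\{2,\dots,k\})$. Flipping coordinate $1$ changes the majority exactly when the remaining $n-1$ (an even number, as $n$ is odd) coordinates are balanced, so $D_1\mathrm{Maj}_n=\mathbbm{1}[x_2+\dots+x_n=0]$, a permutation-symmetric function of those coordinates; all that remains is its level-$(k-1)$ Fourier coefficient. Conditioning on the number $a$ of $+1$'s among $x_2,\dots,x_k$ expresses it as $2^{-(n-1)}\sum_a(-1)^a\binom{k-1}{a}\binom{n-k}{(n-1)/2-a}=2^{-(n-1)}\,[z^{(n-1)/2}]\,(1-z)^{k-1}(1+z)^{n-k}$; equivalently, inserting $\mathbbm{1}[\sum_l x_l=0]=\tfrac1{2\pi}\int_{-\pi}^{\pi}\prod_l e^{\mathrm i\theta x_l}\,d\theta$ and taking the expectation against $x_2\cdots x_k$ yields $(-1)^{(k-1)/2}\tfrac1{2\pi}\int_{-\pi}^{\pi}\sin^{k-1}\theta\cos^{\,n-k}\theta\,d\theta$. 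Either route lands on the Beta integral $\int_0^{\pi/2}\sin^{k-1}\phi\cos^{\,n-k}\phi\,d\phi=\tfrac12 B(\tfrac k2,\tfrac{n-k+1}2)$; evaluating the Gamma factors with the half-integer duplication formula $\Gamma(m+\tfrac12)=\tfrac{(2m)!}{4^m m!}\sqrt\pi$ and rearranging — cross-checking the overall constant at the base case $k=1$, where $\alpha_1=2^{-(n-1)}\binom{n-1}{(n-1)/2}$ — delivers the asserted closed form $(-1)^{(k-1)/2}\binom{(n-1)/2}{(k-1)/2}\binom{n-1}{k-1}^{-1}\,2^{-(n-1)}\binom{n-1}{(n-1)/2}$.

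For the quantitative bound I would substitute this into $\binom nk\alpha_k^2$ and apply Stirling. Two elementary estimates suffice: $2^{-(n-1)}\binom{n-1}{(n-1)/2}=\Theta(n^{-1/2})$ unconditionally, and the ratio $\binom{(n-1)/2}{(k-1)/2}\big/\binom{n-1}{k-1}$ telescopes to $\frac{(k-1)!}{((k-1)/2)!\,2^{(k-1)/2}\prod_{i=1}^{(k-1)/2}(n-2i+1)}$, where the hypothesis $n\ge 2k^2$ forces $\prod_{i=1}^{(k-1)/2}(n-2i+1)=\Theta\!\big(n^{(k-1)/2}\big)$ with absolute constants (bound $\prod_i(1-\tfrac{2i-1}{n-1})$ between a positive constant and $1$). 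The powers of $n$ then cancel against $\binom nk=\Theta(n^k/k!)$, leaving $\binom nk\alpha_k^2=\Theta\!\big(\tfrac1k\binom{k-1}{(k-1)/2}2^{-(k-1)}\big)=\Theta(k^{-3/2})$ after a final Stirling estimate — i.e. the two-sided $\mathrm{poly}(1/k)$ bound recorded in the lemma.

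The step I expect to be the real work is matching the Beta-integral value to the precise product-of-binomials expression: it is pure bookkeeping, but one must track the parity conventions ($k-1$, $n-1$, $(n-1)/2$ being integers of the right parity) and several competing factorial cancellations, and verify the sign $(-1)^{(k-1)/2}$. Everything else — the symmetry reduction, the even case, and the Stirling estimates under $n\ge 2k^2$ — is short and routine once the exact formula is in hand.
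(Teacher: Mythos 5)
Your proof is correct. The paper does not supply its own argument for this lemma --- it cites O'Donnell, Section~5.3, and your derivation is the standard one given there: symmetry for even $k$; the discrete-derivative reduction $\hat{\mathrm{Maj}}([k]) = \widehat{D_1\mathrm{Maj}}(\{2,\dots,k\})$ with $D_1\mathrm{Maj}=\mathbbm{1}[x_2+\cdots+x_n=0]$ for odd $n$; then either the combinatorial convolution sum or the Beta-integral route to the closed form. I verified the final product-of-binomials expression against the $\Gamma$-function evaluation and it matches, including the sign.

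Two remarks. First, you conclude $\binom{n}{k}\hat{\mathrm{Maj}}(S)^2 = \Theta(k^{-3/2})$, and you are right --- this is the classical asymptotic for the level-$k$ Fourier weight of majority, namely $(2/\pi)^{3/2}k^{-3/2}$ for fixed odd $k$ as $n\to\infty$. The exponent $k^{-2/3}$ in the lemma statement is a transposition typo for $k^{-3/2}$; in particular, the stated lower bound $c_1^2 k^{-2/3}\le\binom{n}{k}\hat{\mathrm{Maj}}(S)^2$ cannot hold with a universal $c_1>0$ for all odd $k$. The corrected exponent does not affect what the rest of the paper actually needs: the ratio step in the proof of the MLP theorem only uses that $(k-1)^{\beta}/(k+1)^{\beta}\le 1$, which holds for any $\beta\ge 0$. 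Second, a small arithmetic slip: when you ``telescope'' the ratio $\binom{(n-1)/2}{(k-1)/2}\big/\binom{n-1}{k-1}$, the denominator product should be $\prod_{i=1}^{(k-1)/2}(n-2i)$, not $\prod_{i=1}^{(k-1)/2}(n-2i+1)$. (Sanity check at $n=5,k=3$: the true ratio is $\binom{2}{1}/\binom{4}{2}=1/3$, while your expression yields $1/4$.) Since you only use this for a $\Theta(n^{(k-1)/2})$ estimate under $n\ge 2k^2$, the slip is harmless, but worth fixing if the computation is written out in full.
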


Observe that for all $S \subseteq [n]$ s.t. $|S| = k$ it holds that $\hat{\mathrm{Maj}}(S) = \hat{\mathrm{Maj}}([k])$ and denote $\xi_k := \hat{\mathrm{Maj}}([k])$.

Therefore, by the previous lemma we get that
for every even $k$ the following holds:
\begin{equation}
\label{eq:fourier_ratio}
\frac{\xi_{k+1}^2}{\xi_{k-1}^2} \le \frac{c^2_2 (k-1)^{2/3}\binom{n}{k-1}}{c^2_1(k+1)^{2/3}\binom{n}{k+1}} \le \frac{c_2}{c_1} \frac{(k+1)k}{(n-k)(n-k+1)} \le \frac{8c_2}{c_1} \frac{k^2}{n^2}
\end{equation}

Also, observe that
\begin{lemma}[Fourier gap for majority]\label{lem:maj_fourier_gap}
Fix some $k$ and assume that $n \ge 4 k$. Then, Majority has a $\gamma_k$-Fourier gap at $S$ of size $k$ with $\gamma_k = 0.03 (n-1)^{-\frac{k-1}{2}}$.
\end{lemma}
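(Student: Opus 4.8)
The plan is to read everything off the exact formula for $\hat{\mathrm{Maj}}(S)$ in Lemma~\ref{lem:maj_bound}. As there, take $n$ odd, and note that for the gap to be meaningful we need $k$ even, so that $k-1$ and $k+1$ are odd and neither coefficient vanishes (this is the regime used in Theorem~\ref{thm:mlp-main}). By permutation-symmetry $\hat{\mathrm{Maj}}(S)$ depends only on $|S|$; write $\xi_m$ for its value on an $m$-set and $N := n-1$. Since all $(k-1)$-subsets of a fixed $S$ have the same coefficient magnitude $|\xi_{k-1}|$, and all $(k+1)$-supersets the same magnitude $|\xi_{k+1}|$, the Fourier-gap condition at $S$ is the single scalar inequality $|\xi_{k-1}| - |\xi_{k+1}| \ge \gamma_k$. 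It therefore suffices to show (i) $|\xi_{k+1}|$ is smaller than $|\xi_{k-1}|$ by a constant factor, and (ii) $|\xi_{k-1}| \ge \frac{1}{\sqrt 2}\, N^{-(k-1)/2}$.

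For (i), substitute the closed form $|\xi_m| = \frac{\binom{N/2}{(m-1)/2}}{\binom{N}{m-1}}\, 2^{-N}\binom{N}{N/2}$ into the ratio: the central-binomial factor cancels, and the four remaining binomials simplify (via the elementary rules for ratios of binomials with a shifted argument) to the clean identity
\[
\frac{|\xi_{k+1}|}{|\xi_{k-1}|} = \frac{k-1}{\,N-k+1\,}.
\]
The hypothesis $n \ge 4k$ gives $N - k + 1 \ge 3k$, so this ratio is at most $(k-1)/(3k) < 1/3$, hence $|\xi_{k-1}| - |\xi_{k+1}| \ge \frac{2}{3}\,|\xi_{k-1}|$.

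For (ii), I would rewrite $\binom{N/2}{j}/\binom{N}{2j}$ with $j = k/2 - 1$ as the product $\prod_{i=0}^{j-1} \frac{2i+1}{N-2i-1}$ (obtained by expanding the falling factorials and splitting the denominator into its even and odd parts), which is at least $(N-1)^{-j}$ after dropping the numerators (each $\ge 1$) and bounding each denominator by $N-1$. Combined with the standard central-binomial estimate $2^{-N}\binom{N}{N/2} \ge (2N)^{-1/2}$, this yields $|\xi_{k-1}| \ge (N-1)^{-(k/2-1)} (2N)^{-1/2} \ge \frac{1}{\sqrt 2}\, N^{-(k-1)/2}$. Putting (i) and (ii) together, $|\xi_{k-1}| - |\xi_{k+1}| \ge \frac{2}{3} \cdot \frac{1}{\sqrt 2}\, (n-1)^{-(k-1)/2} = \frac{\sqrt 2}{3}\, (n-1)^{-(k-1)/2} > 0.03\,(n-1)^{-(k-1)/2} = \gamma_k$, with ample slack. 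No step is a real obstacle: the work is bookkeeping with binomial coefficients. The one conceptually delicate point is the lower bound in (ii) --- it is a Fourier \emph{anti}-concentration estimate for majority, available only because Lemma~\ref{lem:maj_bound} supplies an exact formula, and (as the paper stresses) with no known analogue for the population-gradient functions arising from other architectures, initializations, or losses.
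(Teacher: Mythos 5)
Your proof is correct, and it takes a genuinely different (and cleaner) route than the paper for the key lower bound. Both you and the paper begin with the same exact ratio $|\xi_{k-1}|/|\xi_{k+1}| = (n-k)/(k-1)$ obtained from the closed form in Lemma~\ref{lem:maj_bound}. The two then diverge. The paper writes the gap as $|\xi_{k-1}|-|\xi_{k+1}| = \frac{n-2k+1}{k-1}|\xi_{k+1}|$, lower-bounds $\binom{(n-1)/2}{k/2}/\binom{n-1}{k}$ by the Stirling-type estimate $\left(\frac{n-1}{k}\right)^{-k/2}e^{-k}$, and finishes by verifying numerically that the $k$-dependent constant $\frac{\sqrt{2\pi}\,k^{k/2}}{(k-1)e^{k+2}}$ stays above $0.03$ (a claim that takes a moment of calculus, since that quantity first decreases and then diverges as $k$ grows). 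You instead observe that $n \ge 4k$ forces $|\xi_{k+1}| \le \frac{1}{3}|\xi_{k-1}|$, so the gap is at least $\frac{2}{3}|\xi_{k-1}|$, and then bound $|\xi_{k-1}|$ from below by expanding $\binom{N/2}{j}/\binom{N}{2j}$ as the odd-term product $\prod_{i=0}^{j-1}\frac{2i+1}{N-2i-1}$ (an \emph{exact} identity, easily checked by splitting the falling factorial in $\binom{N}{2j}$ into even and odd factors) and dropping each factor to $1/(N-1)$. Combined with the exact central-binomial inequality $\binom{2m}{m}\ge 4^m/(2\sqrt m)$, this gives $|\xi_{k-1}| \ge \frac{1}{\sqrt 2}(n-1)^{-(k-1)/2}$ and a final constant $\sqrt 2/3 \approx 0.47$, comfortably above the paper's $0.03$. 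The payoff of your approach is twofold: the constant in step (i) is immediate (no $k$-dependent numeric to check), and step (ii) replaces approximations with an exact product followed by a termwise bound, so the whole argument is self-contained elementary algebra. One small thing worth making explicit if you polish this up: the identity in (ii) and the formula from Lemma~\ref{lem:maj_bound} both implicitly take $n$ odd and $k$ even (so that $k\pm1$ are odd and neither coefficient vanishes) --- you flag this at the start, which is the right thing to do and is the same implicit convention the paper uses in Theorem~\ref{thm:mlp-main}.
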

\begin{proof}
First we establish a simple relationship between $|\xi_{k-1}|$ and $|\xi_{k+1}|$.
\begin{align*}
|\xi_{k-1}| &= \frac{\binom{\frac{n-1}{2}}{\frac{k}{2}-1}}{\binom{n-1}{k-2}}  2^{-(n-1)} \binom{n-1}{\frac{n-1}{2}}\\
&= \frac{n-k}{k-1} \cdot \frac{\binom{\frac{n-1}{2}}{\frac{k}{2}}}{\binom{n-1}{k}} \cdot   2^{-(n-1)} \binom{n-1}{\frac{n-1}{2}}\\
&= \frac{n-k}{k-1} \cdot |\xi_{k+1}|.
\end{align*}
Here, the first equation follows from Lemma \ref{lem:maj_bound}, and the second equation follows by simple algebra using the following equality: $\binom{m}{r} = \frac{m - r +1}{r} \binom{m}{r-1}$.

Now, we can bound the difference,
\begin{align*}
|\xi_{k-1}| - |\xi_{k+1}|
&= \frac{n-2k + 1}{k-1} \cdot |\xi_{k+1}|\\
&= \frac{n-2k + 1}{k-1} \cdot \frac{\binom{\frac{n-1}{2}}{\frac{k}{2}}}{\binom{n-1}{k}} \cdot   2^{-(n-1)} \binom{n-1}{\frac{n-1}{2}}\\
&\ge \frac{n-2k+1}{k-1} \cdot \left(\frac{n-1}{k}\right)^{-k/2} \cdot  e^{-k} \cdot   \frac{2 \sqrt{2\pi}}{e^2 \sqrt{n-1}}\\
&\ge  0.03 (n-1)^{-\frac{k-1}{2}}.
\end{align*}
Here, the first equality holds from above, the second by Lemma \ref{lem:maj_bound}, the third inequality holds from standard approximations of the binomial coefficients, and the last inequality follows from the following inequalities: $n-2k + 1 \ge(n-1)/2$ (by assumption on $n$) and $\frac{\sqrt{2\pi} k^{k/2}}{(k-1)e^{k+2}} > 0.03$ (by standard calculus). This gives us the desired result.

\end{proof}

\begin{lemma}
\label{lem:grad}
Assume that $k$ is even and that $n \ge 2(k+1)^2$.
Then, the following hold:
\begin{enumerate}
    \item If $j \in \{1, \dots, k\}$ then:
    \[
    g_{i,j} = -\frac{1}{2} u_i \xi_{k-1} \cdot \chi_{[k]\setminus \{j\}}(w_i)
    \]
    \item If $j \in \{k+1, \dots, n\}$ then:
    \[
    g_{i,j} = -\frac{1}{2} u_i \xi_{k+1} \cdot \chi_{[k] \cup \{j\}}(w_i)
    \]
    \item $\gamma_i = 0$.
\end{enumerate}
\end{lemma}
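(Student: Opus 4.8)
The plan is to identify each coordinate of the population gradient at initialization as a Fourier coefficient of the neuron's halfspace indicator, and then read off its value from the known Fourier spectrum of the majority function. I would begin from the three expressions for $g_{i,j}$ (for $j\in[k]$ and for $j\notin[k]$) and for $\gamma_i$ that were already derived just above the lemma statement. In each case the integrand is $\chi_S(x)\cdot\mathbbm{1}\{w_i\cdot x+b_i>0\}$ for $S=[k]\setminus\{j\}$, $S=[k]\cup\{j\}$, and $S=[k]$ respectively, where I have used $\prod_{j'\in S}x_{j'}=\chi_S(x)$ and written $\mathbbm{1}\{w_i\cdot x+b_i>0\}=\mathrm{LTF}_{w_i,b_i}(x)$.

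Next I would use that the initialization forces $|b_i|<1$, since $b_i\in\{-1+1/k,\dots,1-1/k\}$, so that the identity recorded above the lemma, $\mathrm{LTF}_{w_i,b_i}(x)=\tfrac12+\tfrac12\,\mathrm{Maj}(w_i\odot x)$, is valid. Expanding $\mathrm{Maj}$ in the parity basis and using $\chi_T(w_i\odot x)=\chi_T(w_i)\chi_T(x)$ gives $\hat{\mathrm{LTF}}_{w_i,b_i}(S)=\tfrac12\hat{\mathrm{Maj}}(S)\chi_S(w_i)$ for every nonempty $S$; hence $\E[\chi_S(x)\,\mathrm{LTF}_{w_i,b_i}(x)]=\tfrac12\hat{\mathrm{Maj}}(S)\chi_S(w_i)$, where the constant $\tfrac12$ term disappears because $\E[\chi_S(x)]=0$ whenever $|S|\ge1$.

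It then remains only to substitute. Because $\mathrm{Maj}$ is a symmetric function, $\hat{\mathrm{Maj}}(S)$ depends only on $|S|$, so it equals $\xi_{k-1}$ for $S=[k]\setminus\{j\}$ and $\xi_{k+1}$ for $S=[k]\cup\{j\}$; multiplying back by the leading factor $-u_i$ yields parts (1) and (2). For part (3), $S=[k]$ has even size, so Lemma~\ref{lem:maj_bound} gives $\hat{\mathrm{Maj}}([k])=\xi_k=0$, whence $\gamma_i=0$. The hypothesis $n\ge 2(k+1)^2$ enters only to guarantee that Lemma~\ref{lem:maj_bound} applies to all subsets of size up to $k+1$. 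I do not anticipate a genuine obstacle: this lemma is a bookkeeping computation whose whole content is the reduction to the majority spectrum, and the only points needing care are verifying $|b_i|<1$ so that the $\mathrm{LTF}$--$\mathrm{Maj}$ identity holds, and keeping the size-$(k+1)$ case inside the valid range of the majority Fourier formulas.
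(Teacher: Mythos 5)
Your proposal is correct and follows essentially the same route as the paper's proof: rewrite each gradient coordinate as $-u_i\,\hat{\mathrm{LTF}}_{w_i,b_i}(S)$ for the appropriate $S$, pass to majority via $\hat{\mathrm{LTF}}_{w_i,b_i}(S)=\tfrac12\hat{\mathrm{Maj}}(S)\chi_S(w_i)$, and read off the values from Lemma~\ref{lem:maj_bound}. If anything you are slightly more explicit than the paper in verifying $|b_i|<1$ from the initialization and in explaining why the constant term drops out.
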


\begin{proof}
If $j \in [k]$ then:
\begin{align*}
g_{i,j} &= -u_i \E \left[\chi_{[k] \setminus \{j\}}(x) \mathrm{LTF}_{w_i,b_i}(x)\right] = -u_i \hat{\mathrm{LTF}}_{w_i,b_i}([k] \setminus \{j\}) \\
&= - \frac{1}{2} u_i \hat{\mathrm{Maj}}([k]\setminus \{j\})\chi_{[k]\setminus \{j\}}(w_i) = -\frac{1}{2} u_i \xi_{k-1} \cdot \chi_{[k]\setminus \{j\}}(w_i)
\end{align*}
Similarly, if $j \notin [k]$ we have:
\begin{align*}
g_{i,j} &= -u_i \E \left[\chi_{[k] \cup \{j\}}(x) \mathrm{LTF}_{w_i,b_i}(x)\right] = -u_i \hat{\mathrm{LTF}}_{w_i,b_i}([k] \cup \{j\}) \\
&= - \frac{1}{2} u_i \hat{\mathrm{Maj}}([k]\cup \{j\})\chi_{[k]\cup \{j\}}(w_i) = -\frac{1}{2} u_i \xi_{k+1} \cdot \chi_{[k]\cup \{j\}}(w_i)
\end{align*}
Finally, we have:
\[
\gamma_i = -u_i \hat{\mathrm{LTF}}_{w_i,b_i}([k]) = -u_i \hat{\mathrm{Maj}}([k]) = 0
\]
\end{proof}

\begin{lemma}
\label{lem:mlp_partial_step_one}
Let $\tau > 0$ be some tolerance parameter, fix $\epsilon \in (0,1)$ and let $\eta = \frac{1}{k|\xi_{k-1}|}$. Assume that $k$ is an even number.
Fix some $w_1, \dots, w_k \in \{\pm 1\}^n$, $b_1, \dots, b_r \in (-1,1)$ and $u_1, \dots, u_k \in \{\pm 1\}$.
Let $\hat{w}_i = -\eta \hat{g}_i$ and $\hat{b}_i = b_i-\eta \hat{\gamma}_{i}$ s.t. $\|\hat{g}_i - g_i\|_\infty \le \tau$ and $\|\hat{\gamma}_{i}-\gamma_i\| \le \tau$. Assume the following holds:
\begin{itemize}
    \item For all $i,j \in [k]$ it holds that $w_{i,j} = u_i \cdot \sgn \xi_{k-1}$.
    \item $b_i = -\frac{1}{2} + \frac{i+1}{k}$
\end{itemize}
Then, if $\tau \le \frac{|\xi_{k-1}|}{16k\sqrt{2n \log(2k/\epsilon)}}$ and $n \ge \frac{2^{11} k^4c_2^2\log (2k/\epsilon)}{c_1^2}$ there exists some $\hat{u} \in \mathbb{R}^k$ with $\|\hat{u}\|_\infty \le 8k$ s.t.
$f(x) = \sum_{i=1}^k \hat{u}_i \sigma(\hat{w}_i \cdot x+\hat{b}_i)$
satisfies $$\E_x[\ell(f(x),\chi_{[k]}(x))] \le 16 \epsilon k^2 n$$
Additionally, for all $i$ and all $x$ it holds that $|\sigma(\hat{w}_i \cdot x + \hat{b}_i)| \le n+1$.
\end{lemma}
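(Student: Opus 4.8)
The plan is a ``single gradient step, then piecewise-linear interpolation'' argument. First I would make the post-step parameters explicit. Substituting the two structural assumptions ($w_{i,j}=u_i\sgn\xi_{k-1}$ for $j\in[k]$, and the arithmetic-progression biases) into Lemma~\ref{lem:grad}: for a relevant coordinate $j\in[k]$ we have $\chi_{[k]\setminus\{j\}}(w_i)=(u_i\sgn\xi_{k-1})^{k-1}=u_i\sgn\xi_{k-1}$ since $k-1$ is odd, hence $g_{i,j}=-\tfrac12|\xi_{k-1}|$ \emph{for every $i$ and every $j\in[k]$}; with $\eta=1/(k|\xi_{k-1}|)$ this gives $\hat w_{i,j}=\tfrac1{2k}+\delta_{i,j}$ with $|\delta_{i,j}|\le\eta\tau$. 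For an irrelevant coordinate $j\notin[k]$ we have $|g_{i,j}|=\tfrac12|\xi_{k+1}|$, so $|\hat w_{i,j}|\le\eta(\tfrac12|\xi_{k+1}|+\tau)$, which by the Fourier-ratio bound \eqref{eq:fourier_ratio} (giving $|\xi_{k+1}|\le\sqrt{8c_2/c_1}\,(k/n)|\xi_{k-1}|$) is $O(1/n)$. Since $\gamma_i=0$, $\hat b_i=b_i-\eta\hat\gamma_i$ with $|\hat b_i-b_i|\le\eta\tau$. In particular $\|\hat w_i\|_\infty\le 1$ and $|\hat b_i|\le 1$, so $\sigma(\hat w_i\cdot x+\hat b_i)\le|\hat w_i\cdot x|+|\hat b_i|\le n\|\hat w_i\|_\infty+1\le n+1$, which is the ``additionally'' claim.

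Next I would decompose the pre-activation as $\hat w_i\cdot x=\tfrac1{2k}\sum_{j\in[k]}x_j+E_i(x)$, where $E_i(x)=\sum_{j\in[k]}\delta_{i,j}x_j+\sum_{j\notin[k]}\hat w_{i,j}x_j$. The first sum is deterministically at most $k\eta\tau\le 1/(16k\sqrt{2n\log(2k/\epsilon)})$ by the hypothesis on $\tau$; the second sum, over $x\sim\mathrm{Unif}(\{\pm1\}^n)$, is a mean-zero sum of bounded independent terms with $\sum_{j\notin[k]}\hat w_{i,j}^2\le n\,(\eta(\tfrac12|\xi_{k+1}|+\tau))^2=O(1/n)$. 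A Hoeffding bound together with a union bound over the $k$ neurons yields a ``good event'' $\Ecal$ with $\Pr[\Ecal]\ge 1-\epsilon$ on which $\max_{i\in[k]}|E_i(x)|=O(\sqrt{\log(2k/\epsilon)/n})$; the hypothesis $n\ge 2^{11}k^4c_2^2\log(2k/\epsilon)/c_1^2$ is exactly what is needed to force $\max_i|E_i(x)|\le 1/(16k^2)$ on $\Ecal$.

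Now I would construct the output layer. The idealized network $\tilde f(x):=\sum_{i=1}^k\hat u_i\,\sigma(\tfrac1{2k}\sum_{j\in[k]}x_j+\hat b_i)$ depends on $x$ only through $z:=\sum_{j\in[k]}x_j\in\{-k,\dots,k\}$, and the prescribed biases place the ReLU breakpoints on a regular grid of spacing $1/k$ in the variable $z/2k$; fitting $\tilde f$ therefore reduces to interpolating the sign-alternating pattern $z\mapsto\chi_{[k]}(x)=(-1)^{(k-z)/2}$ by a piecewise-linear ``zigzag'' of amplitude $2$, i.e. one whose value at each attainable $z$ agrees in sign with the parity and has magnitude $2$. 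Reading off the slope increments shows this $\hat u$ satisfies $\|\hat u\|_\infty\le 8k$ and $\|\hat u\|_1\le 8k^2$ (the $O(\eta\tau)$ gap between $\hat b_i$ and $b_i$ is far below the grid spacing and can be absorbed into $E_i$). Then, since $\sigma$ is $1$-Lipschitz, $|f(x)-\tilde f(x)|\le\sum_i|\hat u_i|\,|E_i(x)|\le 8k^2\max_i|E_i(x)|\le\tfrac12$ on $\Ecal$, so the margin survives: $\chi_{[k]}(x)f(x)\ge 2-\tfrac12>1$ and hence $\ell(\chi_{[k]}(x),f(x))=0$ on $\Ecal$. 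Off $\Ecal$ (probability $\le\epsilon$) the crude bound $\ell\le 1+|f(x)|\le 1+\|\hat u\|_1\max_{i,x}\sigma(\hat w_i\cdot x+\hat b_i)\le 1+8k^2(n+1)\le 16k^2n$ applies. Taking expectations gives $\E_x[\ell(f(x),\chi_{[k]}(x))]\le\epsilon\cdot 16k^2n$.

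The main obstacle is the output-layer construction: pinning down the explicit zigzag coefficients and, in particular, certifying that a margin-$2$ interpolant of the parity pattern is realizable by exactly these $k$ one-sided ReLUs while keeping $\|\hat u\|_\infty\le 8k$. The delicate point is the behavior at the extreme values $z=\pm k$, where the piecewise-linear function must still carry the correct sign and margin even though no breakpoint lies beyond the outermost attainable value; handling the boundary is where the precise alignment between the bias progression, the grid of attainable $z$, and the coefficient budget $8k$ must be verified. The remaining ingredients — the gradient evaluation from Lemma~\ref{lem:grad}, the sub-Gaussian tail bound for the irrelevant coordinates, and the final loss average — are routine once the constants in the hypotheses on $n$ and $\tau$ are tracked.
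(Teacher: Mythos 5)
Your proposal follows the same route as the paper's proof: compute $\hat w_{i,j}$ and $\hat b_i$ exactly from Lemma~\ref{lem:grad} (your Claims on the relevant/irrelevant coordinates match Claims 1--3 of the paper), decompose the pre-activation into an idealized piece depending only on $z=\sum_{j\in[k]}x_j$ plus an error $E_i(x)$, control $E_i$ by Hoeffding and a union bound over the $k$ neurons (paper's Claim 4), and then build a fixed top layer $\hat u$ that makes the idealized network equal to $2\chi_{[k]}(x)$ (paper's Claim 5), so that the perturbed network retains margin $\geq 1$ on the good event and the loss off that event is crudely bounded by $O(k^2 n)$.

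The one genuine gap is exactly the one you flag: you never exhibit $\hat u$. You correctly observe that the second differences of the target sequence $m\mapsto 2(-1)^m$ (as $z=k-2m$ ranges over the attainable grid) give $\hat u_i/k$, but this second-difference relation only pins down $\hat u_1,\dots,\hat u_{k-2}$; the two remaining coefficients are determined by the boundary values at $z=k$ and $z=k-2$, which is precisely where you say "the delicate point" lies. The paper resolves this by just writing the answer down: $u^*_i = 8k(-1)^{i+1}$ for $i\le k-2$, $u^*_{k-1}=6k$, $u^*_{k}=-2k$, which satisfies $\|u^*\|_\infty\le 8k$ and (one checks) makes $\sum_i u^*_i\,\sigma(\tfrac{z}{2k}+b_i)=2(-1)^{(k-z)/2}$ at every attainable $z$. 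Since the lemma statement asserts $\|\hat u\|_\infty\le 8k$, you do need to produce a concrete $\hat u$ and verify both the interpolation identity and the magnitude bound; sketching "a margin-$2$ zigzag interpolant exists with slope increments $\pm 8$" does not yet discharge this. Once those three coefficients are written down and checked, the rest of your argument (including the slightly sharper choice $\delta = 1/(16k^2)$ and the resulting margin $3/2$) goes through and matches the paper up to constants.

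One small caution to add when you fill this in: the bias progression $b_i=-\tfrac12+\tfrac{i+1}{k}$ reaches $b_k=\tfrac12+\tfrac1k$, so the hypothesis $b_i\in(-1,1)$ is genuinely used and the boundary neuron $i=k$ contributes only at $z\in\{k-2,k\}$; the two special coefficients $u^*_{k-1},u^*_k$ are what make the interpolation close up correctly at those outermost grid points, which is the boundary issue you anticipated.
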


\begin{proof}
We start with the following claim.

\textbf{Claim 1}: For all $i$ and for all $j \in [k]$ it holds that $\left|\hat{w}_{i,j} -\frac{1}{2k}\right| \le \frac{\tau}{|\xi_{k-1}|}$.

\textbf{Proof}: First, observe that by the assumption it holds that for all $i,j \in [k]$,
\[u_i \cdot \chi_{[k] \setminus \{j\}}(w_i) = u_i \cdot \prod_{j' \in [k] \setminus \{j\}}^k w_{i,j} = u_i \cdot (u_i \cdot \sgn \xi_{k-1})^{k-1} = \sgn \xi_{k-1}
\]
Now, from Lemma \ref{lem:grad} we have:
\[
g_{i,j} = -\frac{1}{2} u_i \xi_{k-1} \cdot \chi_{[k]\setminus \{j\}}(w_i) = - \frac{1}{2}|\xi_{k-1}|
\]
and so 
\[
\left|w_{i,j}-\frac{1}{2k}\right| = \left|-\eta\hat{g}_{i,j}-\frac{1}{2k}\right| =
\frac{1}{k}\left|-k\eta \hat{g}_{i,j} +\frac{g_{i,j}}{|\xi_{k-1}|} \right| = \frac{1}{k|\xi_{k-1}|} \left|g_{i,j}-\hat{g}_{i,j}\right| \le \frac{\tau}{|\xi_{k-1}|}
\]

\textbf{Claim 2}: For all $i$
 and for all $j > k$ it holds that $|\hat{w}_{i,j}| \le \frac{|\xi_{k+1}|+2\tau}{2k|\xi_{k-1}|}$
 
\textbf{Proof}: Using Lemma \ref{lem:grad} we have:
\[
|\hat{w}_{i,j}| = \eta |\hat{g}_{i,j}| \le \eta(|g_{i,j}| + |g_{i,j}-\hat{g}_{i,j}|) \le \eta \left( \frac{|\xi_{k+1}|}{2} +\tau\right) = \frac{|\xi_{k+1}|+2\tau}{2k|\xi_{k-1}|}
\]

\textbf{Claim 3}: For all $i$
 it holds that $|\hat{b}_{i}-b_i| \le \frac{\tau}{k|\xi_{k-1}|}$
 
\textbf{Proof}: Using Lemma \ref{lem:grad} we have:
\[
|\hat{b}_{i}-b_i| = \eta |\hat{g}_{i,0}| = \eta|g_{i,j}-\hat{g}_{i,j}| \le \eta \tau = \frac{\tau}{k|\xi_{k-1}|}
\]

\textbf{Claim 4}: Fix $\delta > 0$. Let $h_i$ be a function s.t. $h_i(x) = \sigma(\frac{1}{2k}\sum_{j=1}^k x_j+b_i)$ and $\hat{h}_i$ a function s.t. $\hat{h}_i(x) = \sigma(\hat{w}_{i,j} \cdot x+\hat{b}_i)$. Then, if $\tau \le \frac{\delta}{2} \frac{k|\xi_{k-1}|}{\sqrt{2n \log(2k/\epsilon)}}$ and $n \ge \frac{32c_2^2\log (2k/\epsilon)}{c_1^2 \delta^2}$, the following holds:
\begin{enumerate}
    \item $\mathbb{P}_{x \sim \{\pm 1\}^n} \left[|h_i(x) - \hat{h}_i(x)| \ge \delta \right] \le \frac{\epsilon}{k}$
    \item $|\hat{h}_i(x)| \le n+1$
\end{enumerate}

\textbf{Proof}: Let $x \sim \{\pm 1\}^n$, and denote $\tilde{x}_j = \hat{w}_{i,j} x_j$. Denote $\Delta = \frac{|\xi_{k+1}|+2\tau}{2k|\xi_{k-1}|}$. So, for $j > k$, $\tilde{x}_j$ is a random variable satisfying $|\tilde{x}_j| \le \Delta$. Furthermore, it holds that $\E\left[\sum_{j > k} \tilde{x}_j\right] = 0$. Therefore, from Hoeffding's inequality:
\[
\mathbb{P}\left[\left|\sum_{j > k} \tilde{x}_j\right| \ge \Delta \sqrt{\frac{n \log (2k/\epsilon)}{2}}\right] \le 2 \exp \left(-\frac{2\left(\Delta \sqrt{\frac{n \log (2k/\epsilon)}{2}}\right)^2}{n\Delta^2}\right) \le \frac{\epsilon}{k}
\]
Now, fix some $x$ s.t. $\left|\sum_{j>k}\tilde{x}_j\right| \le \Delta \sqrt{\frac{n \log (2k/\epsilon)}{2}}$. In this case we have:
\begin{align*}
|h_i(x)-\hat{h}_i(x)|
&\le \left|\frac{1}{2k}\sum_{j=1}^k x_j + b_i - \hat{w}_i \cdot x + \hat{b}_i\right| \\
&\le \sum_{j=1}^k\left| \frac{1}{2k}x_j-\hat{w}_{i,j}x_j\right| + \left|\sum_{j > k} \hat{w}_{i,j} x_j\right| + \left|b_i-\hat{b}_i\right| \\
&\le \frac{k \tau}{|\xi_{k-1}|} + \Delta \sqrt{\frac{n \log (2k/\epsilon)}{2}} + \frac{\tau}{k|\xi_{k-1}|} \\
&= \frac{\tau(k^2+1)}{k|\xi_{k-1}|} + \frac{|\xi_{k+1}|+2\tau}{2k|\xi_{k-1}|} \cdot \sqrt{\frac{n \log(2k/\epsilon)}{2}} \\ 
&= \frac{\tau\left(\sqrt{2}(k^2+1)+\sqrt{n \log (2k/\epsilon)}\right)}{\sqrt{2}k|\xi_{k-1}|} + \frac{|\xi_{k+1}|\sqrt{n \log (2k/\epsilon)}}{2\sqrt{2}k|\xi_{k-1}|} \\
&\le \frac{\tau \sqrt{2n \log (2k/\epsilon)}}{k|\xi_{k-1}|} + \frac{2\sqrt{2}c_2\sqrt{\log (2k/\epsilon)}}{c_1\sqrt{n}}
\end{align*}
where in the last inequality we use Eq. (\ref{eq:fourier_ratio}).
So, choosing $\tau \le \frac{\delta}{2} \frac{k|\xi_{k-1}|}{\sqrt{2n \log(2k/\epsilon)}}$ and $n \ge \frac{32c_2^2\log (2k/\epsilon)}{c_1^2 \delta^2}$ gives the required.

\textbf{Claim 5:} Let $h_1, \dots, h_k$ be the functions defined in the previous claim. Then, there exists weights $u^*$ with $\|u^*\|_\infty \le 8k$ s.t. for 
$f^*(x) = \sum_{i=1}^k u^*_i h_i(x)$
it holds that $f^*(x) = 2\chi_{[k]}(x)$ for all $x \in \{\pm 1\}^n$.

\textbf{Proof:} For $i \le k-2$ define $u^*_i = 8k(-1)^{i+1}$ and $u^*_{k-1} = 6k$, $u^*_{k} = -2k$.

\textbf{Proof of Lemma \ref{lem:mlp_partial_step_one}:}
Choose $\hat{u} = u^*$. Using Claim 4 and the union bound, w.p. $1-\epsilon$ over $x \sim \{\pm1\}^n$ it holds that for all $i \in [k]$, $|h_i(x) -\hat{h}_i(x)| \le \delta$. Therefore, w.p. $\ge 1- \epsilon$
\[
|f(x)-f^*(x)| = \left|\sum_{i=1}^k u_i^*(h_i(x)-\hat{h}_i(x)) \right| \le \sum_{i=1}^{k} \left|u_i^*\right| \left|h_i(x)-\hat{h}_i(x) \right| \le 8k^2\delta
\]
so, choosing $\delta = \frac{1}{8k^2}$ we get that, w.p. at least $1-\epsilon$ over the choice of $x$ it holds that $f(x)\chi_{[k]}(x) \ge 1$.
Additionally, for every $x$ it holds that
\[
|f(x)|  \le \sum_{i=1}^{k} \left|u_i^*\right| \left|\hat{h}_i(x) \right| \le 8k^2 (n+1) \le 16k^2 n
\]
Therefore, we get:
\begin{align*}
\E_x\left[\ell(f(x), \chi_{[k]}(x))\right]
&\le \epsilon \E_x \left[\ell(f(x), \chi_{[k]}(x)) | f(x) \chi_{[k]}(x) < 1\right] \\
&\le \epsilon \E_x \left[|f(x)| | f(x) \chi_{[k]}(x) < 1\right] \le 16 \epsilon k^2 n
\end{align*}

 \end{proof}

\begin{lemma}
\label{lem:empirical_one_step}
Assume we randomly initialize an MLP using the unbiased initialization defined previously. Consider the following update:
\[
w_i^{(1)} = (1-\lambda_0)w_i^{(1)} - \eta_0 \hat{g}_{i}, ~b_i^{(1)} = b_i^{(1)} - \eta_0 \hat{\gamma}_{i}
\]
where
\[
\hat{g}_i = \frac{1}{B} \sum_{l=1}^B\nabla_{w_i} \ell(f(x_{0,l};\theta_0),y_{0,l}),
~\hat{\gamma}_i = \frac{1}{B} \sum_{l=1}^B \nabla_{b_i} \ell(f(x_{0,l};\theta_0), y_{0,l})
\]

Let $k$ be even number. Then, for every $\epsilon,\delta \in (0,1/2)$, denoting $\tau = \frac{|\xi_{k-1}|}{16k\sqrt{2n \log(2k/\epsilon)}}$,  if $\eta = \frac{1}{k |\xi_{k-1}|}$, $\lambda_0 = 1$, $r \ge k \cdot 2^{k} \log(k/\delta)$, $B \ge \frac{2}{\tau^2} \log(4nr/\delta)$  and $n \ge \frac{2^{11} k^4c_2^2\log (2k/\epsilon)}{c_1^2}$, w.p. at least $1-2\delta$ over the initialization and the sample, there exists $\hat{u} \in \mathbb{R}^r$ with $\|\hat{u}\|_\infty \le 8k$ and $\|\hat{u}\|_2 \le 8k \sqrt{k}$ s.t. $f(x) = \sum_{i=1}^r \hat{u}_i \sigma\left(w_i^{(1)} \cdot x+b^{(1)}_i\right)$ satisfies
\[
\E_x[\ell(f(x), \chi_{[n]}(x))] \le 16 \epsilon k^2 n
\]
Additionally, it holds that $\|\sigma(W^{(1)} \cdot x + b^{(1)})\|_\infty \le n+1$.
\end{lemma}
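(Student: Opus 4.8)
The plan is to \emph{glue} this statement to Lemma~\ref{lem:mlp_partial_step_one}, which already shows that if the width-$r$ network happens to contain $k$ neurons in the special configuration required there --- for each staircase bias value $b_i = -\frac{1}{2}+\frac{i+1}{k}$ ($i\in[k]$), a neuron whose first-layer weights satisfy $w_{i,j} = u_i\sgn\xi_{k-1}$ for all $j\in[k]$ --- then after one gradient step (with weight decay resetting the first layer) those neurons' activations admit a bounded-norm reconstruction of $\chi_{[k]}$. So there are really only two things to establish: (i) with probability $\ge 1-\delta$ over the random initialization, the $r$ neurons contain such a configuration; and (ii) with probability $\ge 1-\delta$ over the batch, the empirical gradients on those neurons are within $\tau$ in $\ell_\infty$ of the population gradients $g_i,\gamma_i$ computed in Lemma~\ref{lem:grad}, so that Lemma~\ref{lem:mlp_partial_step_one} applies verbatim with this $\tau$.

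For (i): because $k$ is even and $k\ge 4$ (implied by the hypothesis on $n$), each of the $k$ target biases $-\frac{1}{2}+\frac{i+1}{k}$ lies in the support grid of $b^{(0)}$; and for a single neuron in the first half, conditioning on $u_i^{(0)}$, the sign-pattern event $\{w_{i,j}^{(0)}=u_i^{(0)}\sgn\xi_{k-1}\;\forall j\in[k]\}$ has probability exactly $2^{-k}$, independent of the bias. Since a neuron and its mirror ($i$ and $i+r/2$) share $w$ and $b$ but have opposite $u$, exactly one of the pair can match the sign pattern for a given target bias, so the probability that such a pair supplies a matching neuron for a prescribed bias slot is $\Theta(2^{-k}/k)$. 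A standard coupon-collector / union bound over the $k$ slots then shows that the stated width $r = \Omega(k\,2^k\log(k/\delta))$ makes all $k$ slots simultaneously occupied with probability $\ge 1-\delta$; fix one matching neuron per slot and call these the \emph{good} neurons. (The mirror structure is also what makes $f(\cdot;\theta_0)\equiv 0$, so that $\ell'(0,y)=-y$ and the per-neuron population gradients are the clean Fourier-coefficient expressions of Lemma~\ref{lem:grad}, including $\gamma_i=0$.)

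For (ii): each coordinate of $\nabla_{w_i}\ell(f(x;\theta_0),y)=-y u_i\mathbbm{1}\{w_i\cdot x+b_i>0\}\,x$, and each $\nabla_{b_i}\ell$, lies in $[-1,1]$, so by Hoeffding's inequality a batch of size $B$ estimates any one of the $\le (n+1)r$ such coordinates to within $\tau$ with failure probability $\le 2e^{-B\tau^2/2}$; a union bound gives the stated requirement $B\ge \frac{2}{\tau^2}\log(4nr/\delta)$ for all estimates to be $\tau$-accurate with probability $\ge 1-\delta$. Taking $\tau = \frac{|\xi_{k-1}|}{16k\sqrt{2n\log(2k/\epsilon)}}$ --- precisely the tolerance hypothesis of Lemma~\ref{lem:mlp_partial_step_one} after substituting $\delta\mapsto\frac{1}{8k^2}$ there --- and using $\xi_{k-1}^2=\Theta(k^{-2/3}/\binom{n}{k-1})$ from Lemma~\ref{lem:maj_bound} recovers $B = n^k\cdot\poly(k,\log(1/\epsilon),\log(1/\delta))$, consistent with Theorem~\ref{thm:mlp-main}.

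On the intersection of the events in (i) and (ii) (probability $\ge 1-2\delta$), the update with $\lambda_0=1$ resets the first-layer weights to $w_i^{(1)}=-\eta_0\hat g_i$ and, with no weight decay on the biases, sets $b_i^{(1)}=b_i^{(0)}-\eta_0\hat\gamma_i$ --- exactly the hypotheses $\hat w_i=-\eta\hat g_i$, $\hat b_i=b_i-\eta\hat\gamma_i$ of Lemma~\ref{lem:mlp_partial_step_one} with $\eta=\eta_0=\frac{1}{k|\xi_{k-1}|}$; since $n$ meets that lemma's size bound (again its bound with $\delta\mapsto\frac{1}{8k^2}$), it furnishes $\hat u\in\RR^k$ on the good neurons with $\|\hat u\|_\infty\le 8k$ such that the induced function has hinge loss $\le 16\epsilon k^2 n$ and each good-neuron activation is at most $n+1$. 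Extending $\hat u$ by zeros on the remaining $r-k$ neurons leaves the network output unchanged and gives $\hat u\in\RR^r$ with $\|\hat u\|_\infty\le 8k$ and, since only $k$ entries are nonzero, $\|\hat u\|_2\le 8k\sqrt{k}$; the activation bound for every neuron follows from the crude estimate $|w_i^{(1)}\cdot x|\le \eta_0\|\hat g_i\|_1\le \eta_0 n(\frac{1}{2}|\xi_{k-1}|+\tau)\le n$ together with $|b_i^{(1)}|\le 1+\eta_0\tau$. The only part requiring genuine care is the coupon-collector bookkeeping in (i) --- checking that the staircase biases land in the initialization grid and paying the $2^{-k}$ price for the sign pattern --- while everything else is a direct appeal to Lemma~\ref{lem:mlp_partial_step_one} and Hoeffding's inequality.
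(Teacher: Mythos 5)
Your proposal is correct and follows essentially the same route as the paper's proof: establish that the random initialization contains $k$ neurons in the required sign/bias configuration with probability $\ge 1-\delta$, use Hoeffding plus a union bound over the $(n+1)r$ gradient coordinates to get $\tau$-accuracy with probability $\ge 1-\delta$, and then invoke Lemma~\ref{lem:mlp_partial_step_one} verbatim, padding $\hat u$ with zeros to lift it from $\RR^k$ to $\RR^r$. Two small remarks: first, the ``substituting $\delta\mapsto\frac{1}{8k^2}$'' step is unnecessary at the level of the statement --- the $\tau$ and $n$ thresholds in Lemma~\ref{lem:mlp_partial_step_one} are already given in terms of $\epsilon$ only, with the internal $\delta$ of that proof already set --- so you can quote the lemma as stated. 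Second, your claim that $k\ge 4$ is ``implied by the hypothesis on $n$'' is not actually true, but nothing relies on it: the paper's target biases (written $-\tfrac12+i'/k$ in the proof of this lemma, despite the $(i+1)/k$ typo in Lemma~\ref{lem:mlp_partial_step_one}'s statement) all land in the initialization grid $\{-1+\tfrac1k,\dots,1-\tfrac1k\}$ already for even $k\ge 2$, so there is no extra constraint to worry about. Your explicit crude estimate showing $|\sigma(w_i^{(1)}\cdot x+b_i^{(1)})|\le n+1$ for \emph{every} neuron (not just the good ones) is actually a point the paper passes over silently, so that is a welcome addition.
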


\begin{proof}
Note that by the choice of initialization it holds that $f(x;W^{(0)}) = 0$, and by the assumption on the loss function $\ell'(f(x;W^{(0)}), y) = -y$. Therefore, we get that $\E\left[\nabla_{w_{i}} \ell(f(x;W^{(0)}), y)\right] = g_i$ and $\E\left[\nabla_{b_{i}} \ell(f(x;W^{(0)}), y)\right] = \gamma_i$.

\textbf{Claim:} with probability at least $1-\delta$, 
\begin{equation}
\label{eq:emp_grad_bound}
    \mathrm{for~all~} i,j:~\left\lVert\hat{g}_i-\E\left[\nabla_{w_{i}} \ell(f(x;W^{(0)}), y)\right]\right\rVert_\infty \le \tau~\mathrm{and}~ \left\lvert\hat{\gamma}_{i}-\E\left[\nabla_{b_{i}} \ell(f(x;W^{(0)}), y)\right]\right\rvert \le \tau
\end{equation}

\textbf{Proof:} Fix some $i,j$ and note that by Hoeffding's inequality,
\[
\Pr \left[|\hat{g}_{i,j} - \E \hat{g}_{i,j}| \ge \tau\right] \le 2 \exp \left(-B\tau^2/2\right) \le \frac{\delta}{nr + r}
\]
and similarly we get $\Pr \left[|\hat{\gamma}_i - \E \hat{\gamma}_i| \ge \tau \right] \le \frac{\delta}{nr+r}$. The required follows from the union bound.

Now, assume that Eq. (\ref{eq:emp_grad_bound}) holds.
For some random $w_i \sim \{\pm 1\}^n$, the probability that $w_{i,j} = w_{i,j'}$ for all $j,j' \in [k]$ is $2^{-k+1}$. Additionally, for some fixed $i'\in [k]$, the probability that $b_i = -\frac{1}{2} + \frac{i'}{k}$ is $\frac{1}{2k}$. Therefore, for some fixed $i \in [r/2]$ and $i' \in [k]$, with probability $\frac{1}{k\cdot 2^{k-1}}$, $b_i = b_{i+r/2} = -\frac{1}{2} + \frac{i'}{k}$ and either $w_{i,j} = u_i \sgn \xi_{k-1}$ or $w_{i+r/2,j} = u_{i+r/2,j} \sgn \xi_{k-1}$. Taking $r \ge k \cdot 2^{k} \log(k/\delta)$, we get that the probability that there is no $i \in [r/2]$ that satisfies the above condition (for fixed $i'$) is:
\[
\left(1-\frac{1}{k 2^{k-1}}\right)^{r/2} \le \exp \left(-\frac{r}{2k2^{k-1}}\right) \le \frac{\delta}{k}
\]
Using the union bound, with probability at least $1-\delta$, there exists a set of $k$ neurons satisfying the conditions of Lemma \ref{lem:mlp_partial_step_one}, and therefore the required follows from the Lemma.
\end{proof}

We use the following well-known result on convergence of SGD (see for example \cite{shalev2014understanding}):

\begin{theorem}
\label{thm:sgd}
Let $M, \rho > 0$. Fix $T$ and let $\eta = \frac{M}{\rho\sqrt{T}}$ Let $F$ be a convex function and $u^* \in \arg \min_{\|u\|_2 \le M} f(u)$. Let $u^{(0)} = 0$ and for every $t$, let $v_t$ be some random variable s.t. $\E \left[v_t | u^{(t)}\right] = \nabla_{u^{(t)}} F(u^{(t)})$ and let $u^{(t+1)} = u^{(t)} - \eta v^{(t)}$. Assume that $\|v_t\|_2 \le \rho$ w.p. $1$. Then, 
\[
\frac{1}{T} \sum_{t=1}^T F(u^{(t)}) \le F(u^*) + \frac{M \rho}{\sqrt{T}}
\]
\end{theorem}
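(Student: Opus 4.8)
The plan is to run the classical Lyapunov-function argument for stochastic gradient descent, tracking the squared distance $\|u^{(t)}-u^*\|_2^2$ to the comparator, and using only three facts: the conditional unbiasedness $\E[v^{(t)}\mid u^{(t)}]=\nabla F(u^{(t)})$, the convexity of $F$, and the uniform bound $\|v^{(t)}\|_2\le\rho$. Since the iterates $u^{(t)}$ are random, the inequality is to be read in expectation over the randomness of the $v^{(t)}$'s; equivalently, by Jensen's inequality applied to the averaged iterate $\bar u:=\frac1T\sum_t u^{(t)}$ one has $F(\bar u)\le\frac1T\sum_t F(u^{(t)})$, which is the standard online-to-batch conversion. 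No projection step is required: because $u^{(0)}=0$ we have $\|u^{(0)}-u^*\|_2=\|u^*\|_2\le M$, and that is the only way the constraint set enters.

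Concretely, I would proceed in four steps. First, expand the update: $\|u^{(t+1)}-u^*\|_2^2=\|u^{(t)}-u^*\|_2^2-2\eta\langle v^{(t)},u^{(t)}-u^*\rangle+\eta^2\|v^{(t)}\|_2^2$. Second, condition on the history through step $t$: by unbiasedness $\E[\langle v^{(t)},u^{(t)}-u^*\rangle\mid u^{(t)}]=\langle\nabla F(u^{(t)}),u^{(t)}-u^*\rangle$, which by convexity is at least $F(u^{(t)})-F(u^*)$, while $\E[\|v^{(t)}\|_2^2]\le\rho^2$; rearranging and taking full expectations gives $2\eta\,\E[F(u^{(t)})-F(u^*)]\le\E\|u^{(t)}-u^*\|_2^2-\E\|u^{(t+1)}-u^*\|_2^2+\eta^2\rho^2$. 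Third, sum over the $T$ steps: the distance terms telescope, and dropping the nonnegative final term and using $\|u^{(0)}-u^*\|_2^2\le M^2$ yields $2\eta\sum_t\E[F(u^{(t)})-F(u^*)]\le M^2+T\eta^2\rho^2$. Fourth, divide by $2\eta T$ to obtain the average bound $\frac{M^2}{2\eta T}+\frac{\eta\rho^2}{2}$, and substitute $\eta=\frac{M}{\rho\sqrt T}$, which balances the two terms and gives exactly $\frac{M\rho}{\sqrt T}$.

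There is no real obstacle here: this is the textbook online/stochastic gradient bound, and the expansion, the use of convexity, the telescoping, and the tuning of $\eta$ are all routine. The only points needing care are bookkeeping: making the filtration precise so that the tower property applies (the $v^{(t)}$ need only be conditionally unbiased and conditionally bounded given the past, not independent), and noting that the guarantee is in expectation (or for the averaged iterate). A high-probability variant would additionally require a concentration argument, e.g.\ Azuma--Hoeffding applied to the martingale differences $\langle v^{(t)}-\nabla F(u^{(t)}),\,u^{(t)}-u^*\rangle$, but that strengthening is not needed for the way this theorem is invoked in the convergence proof that follows.
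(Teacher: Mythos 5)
Your proof is correct and is the standard Lyapunov/telescoping argument. The paper itself does not prove this theorem; it cites it as a well-known result from \citet{shalev2014understanding}, and your argument is exactly what appears there. You also rightly flag that, as written, the conclusion must be read in expectation over the stochastic gradients (or via Jensen for the averaged iterate), since the iterates $u^{(t)}$ are random; the paper's statement leaves this implicit, and the theorem is indeed applied in expectation in the proof of Theorem~\ref{thm:mlp-main}.
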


We prove the following theorem:

\newtheorem*{Tmlp}{Theorem~\ref{thm:mlp-main}}
\begin{Tmlp}[SGD on MLPs learns sparse parities; full statement]
Let $k$ be an even number.
Assume we randomly initialize an MLP using the unbiased initialization defined previously. Fix $\epsilon \in (0,1/2)$ and 
let $T \ge \frac{2^9k^3rn^2}{\epsilon^2}, r \ge k \cdot 2^{k} \log(8k/\epsilon), B \ge c_1^{-1}2^8k^{7/6} n \binom{n}{k-1} \log(128k^3n/\epsilon)  \log(32nr/\epsilon), n \ge \frac{2^{11} k^4c_2^2\log (128 k^3 n/\epsilon)}{c_1^2}$. Choose the following learning rate and weight decay schedule:
\begin{itemize}
    \item At the first step, use $\eta_0 = \frac{1}{k |\xi_{k-1}|}$, $\lambda_0 = 1$ for all weights.
    \item After the first step, use $\eta_t = 0$ for the first layers weights and biases and $\eta_t = \frac{4k^{1.5}}{n\sqrt{r(T-1)}}$ for the second layer weights, with $\lambda_t = 0$ for both layers.
    \item Bias terms are never regularized.
\end{itemize}

Then, the following holds, with expectation over the randomness of the initialization and the sampling of the batches:
\[
\E \left[\min_{t \in [T]} \ell(f(x;\theta_t),y) \right] \le \epsilon
\]
\end{Tmlp}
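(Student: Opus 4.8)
The proof is a two-phase argument. Phase~1 is a single gradient step, after which the first-layer weights and biases $W^{(1)},b^{(1)}$ are driven into a configuration whose induced feature map $\phi(x):=\sigma(W^{(1)}x+b^{(1)})\in\RR^r$ is both bounded and rich enough to realize the target parity by a \emph{low-norm} linear read-out. Phase~2 freezes the first layer and runs online SGD on the second-layer weights only; since $f(\,\cdot\,;u)$ is linear in $u$ and the hinge loss is convex, this is an online convex optimization problem over the fixed feature map $\phi$, to which the textbook regret bound of Theorem~\ref{thm:sgd} applies directly.

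\textbf{Phase 1.} This is exactly Lemma~\ref{lem:empirical_one_step}. I would invoke it with its accuracy parameter set to $\epsilon'=\Theta(\epsilon/(k^2 n))$ (so that the lemma's guarantee $16\epsilon' k^2 n$ becomes $\le\epsilon/3$) and its failure parameter $\delta=\Theta(\epsilon)$, and then verify that the theorem's hypotheses on $r$, $B$, $n$ imply those of the lemma. The one substantive check is the batch-size requirement: substituting $\tau=|\xi_{k-1}|/(16k\sqrt{2n\log(2k/\epsilon')})$ and the lower bound $\xi_{k-1}^2\ge c_1^2(k-1)^{-2/3}/\binom{n}{k-1}$ from Lemma~\ref{lem:maj_bound}, the condition $B\ge \tfrac{2}{\tau^2}\log(4nr/\delta)$ becomes $B=\tilde\Omega(n\binom{n}{k-1})=\tilde\Omega(n^k)$, which is where the $n^{\Theta(k)}$ batch size enters. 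The conclusion: with probability $\ge 1-2\delta$ over initialization and the first batch, $\sup_x\|\phi(x)\|_\infty\le n+1$, and there exists $u^\star$ with $\|u^\star\|_2\le 8k\sqrt{k}=:M$ and $\E_{(x,y)\sim\Dcal_S}[\ell(\langle u^\star,\phi(x)\rangle,y)]\le\epsilon/3$.

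\textbf{Phase 2 and wrap-up.} Condition on the good event from Phase~1. Keeping $\eta_t=0$ on the first layer and restarting the second layer at $u^{(0)}=0$, the updates are $u^{(t+1)}=u^{(t)}-\eta v_t$ with $v_t$ the minibatch gradient of the hinge loss in $u$. Then $F(u):=\E_{(x,y)\sim\Dcal_S}[\ell(\langle u,\phi(x)\rangle,y)]$ is convex; fresh i.i.d.\ batches give $\E[v_t\mid u^{(t)}]=\nabla F(u^{(t)})$; and $|\ell'|\le1$ with $\|\phi(x)\|_2\le(n+1)\sqrt{r}$ give $\|v_t\|_2\le\rho:=(n+1)\sqrt r$ a.s.\ (the average over a batch obeys the same bound by the triangle inequality). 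Theorem~\ref{thm:sgd} with $\eta=M/(\rho\sqrt{T-1})$ — which matches the prescribed $\eta_t=4k^{1.5}/(n\sqrt{r(T-1)})$ — yields $\min_t F(u^{(t)})\le F(u^\star)+M\rho/\sqrt{T-1}\le \epsilon/3+8k^{1.5}(n+1)\sqrt r/\sqrt{T-1}$, and the hypothesis $T\ge 2^9 k^3 r n^2/\epsilon^2$ makes the second term $\le\epsilon/3$. Finally, for any realization, $\E_{(x,y)}[\min_{t}\ell(f(x;\theta_t),y)]\le\min_t\E_{(x,y)}[\ell(f(x;\theta_t),y)]=\min_t F(u^{(t)})$, which is $\le 2\epsilon/3$ on the good event; on the bad event (probability $\le2\delta$) one bounds $\min_t\ell(f(x;\theta_t),y)\le\ell(0,y)=1$, using that some iterate (the symmetric initialization / second-layer restart) computes $f\equiv0$. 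Choosing $\delta\le\epsilon/6$ gives $\E[\min_t\ell]\le 2\epsilon/3+2\delta\le\epsilon$.

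\textbf{Main obstacle.} Essentially all the difficulty is packaged into Lemma~\ref{lem:empirical_one_step} (and the lemmas it uses), which I would not reprove in the proof of the theorem: (i) the Fourier gap of majority (Lemma~\ref{lem:maj_fourier_gap}), resting on $\hat{\mathrm{Maj}}([k])=0$ for even $k$ and the exact binomial formula showing $|\hat{\mathrm{Maj}}([k-1])|$ exceeds $|\hat{\mathrm{Maj}}([k+1])|$ by $\Theta(n^{-(k-1)/2})$, is what makes the coordinates $i\in[k]$ of the population gradient detectable; (ii) a coupon-collector argument shows that among $r=\tilde\Omega(k2^k)$ neurons enough land in the correct sign pattern and bias bucket to assemble the explicit $k$-ReLU interpolation of $2\chi_{[k]}$ (Claim~5 of Lemma~\ref{lem:mlp_partial_step_one}); and (iii) a Hoeffding bound controls the empirical-vs-population gradient error uniformly over coordinates, which is precisely what forces $B=\tilde\Omega(n^k)$. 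Given these, the theorem itself is the routine assembly and parameter bookkeeping sketched above, with the convex-optimization phase being a black-box application of Theorem~\ref{thm:sgd}.
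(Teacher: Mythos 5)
Your proposal is essentially the paper's own proof: both decompose into Phase 1 (a single gradient step invoking Lemma~\ref{lem:empirical_one_step} with rescaled parameters $\epsilon',\delta'$ to produce a bounded feature map $\phi$ admitting a low-norm second-layer solution $u^\star$) and Phase 2 (freezing the first layer and treating the second layer as online convex optimization, applying Theorem~\ref{thm:sgd} with $M=8k\sqrt{k}$ and $\rho=\Theta(\sqrt{r}n)$), then handling the failure event via the observation that $u^{(1)}=0$ forces $\ell(f(x;\theta_1),y)=1$. The only differences are cosmetic constant splits ($\epsilon/3$ rather than $\epsilon/4,\epsilon/2$) and whether one bounds the expected min by the expected average (paper) or by the min of expectations (you); both routes go through.
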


\begin{proof}
Let $F(u) = \E_x\left[\ell(u^\top \sigma(W^{(1)}x + b^{(1)}),y)\right]$ and notice that $F$ is a convex function. For every $t$, denote
$$v_t = \frac{1}{B} \sum_{l=1}^B \nabla_{u^{(t)}} \ell(f(x_{t,l};\theta_t),y) = \frac{1}{B} \sum_{l=1}^B \nabla_{u^{(t)}}\ell\left(\left(u^{(t)}\right)^\top \sigma(W^{(1)}x_{l,t} + b^{(1)}),y_{l,t}\right)$$
where we use the fact that we don't update the weights of the first layer after the first step.
From the above we get $\E[v_t|u^{(t)}] = \nabla_{u^{(t)}}F(u^{(t)})$. 

Now, we will show that w.h.p. there exists $u^*$ with good loss. Let $\epsilon' = \frac{\epsilon}{64 k^2 n}, \delta' = \frac{\epsilon}{8}$.
Denote $\tau =  \frac{|\xi_{k-1}|}{16k\sqrt{2n \log(128k^3n/\epsilon)}} = \frac{|\xi_{k-1}|}{16k\sqrt{2n \log(2k/\epsilon')}}$. Observe that $r \ge k \cdot 2^{k} \log(k/\delta')$, and using the fact that $|\xi_{k-1}| \ge c_1(k-1)^{-1/3}\binom{n}{k-1}^{-1}$ we get

$$B \ge \frac{2^8k^2\cdot  n \log(128k^3n/\epsilon)}{|\xi_{k-1}|} \log(32nr/\delta) = \frac{2}{\tau^2} \log(4nr/\delta')$$
and additionally
$n \ge \frac{2^{11} k^4c_2^2\log (2k/\epsilon')}{c_1^2}$.

From the above, applying Lemma \ref{lem:empirical_one_step} with $\epsilon', \delta'$ we get that w.p. $1-\epsilon/4$ there exists $u^* \in \mathbb{R}^r$ with $\|u^*\|_2 \le 8k \sqrt{k}$ s.t. 
$F(u^*) \le \epsilon/4$
and for all $i$ and all $x$ it holds that $\|\sigma(W^{(1)} \cdot x + b^{(1)})\|_\infty \le n+1$. Using this, we get:
\[
\|v_t\|_2 \le \frac{1}{B}\sum_{l=1}^B \|\sigma(W^{(1)}x_{l,t} + b^{(1)}),y_{l,t} \|_2 \le \sqrt{r}(n+1)
\]
So, we can apply Theorem \ref{thm:sgd} with $M = 8k\sqrt{k}$ and $\rho = 2\sqrt{r}n$ and get that, w.p. $1-\epsilon/4$ over the initialization and the first step, it holds that
\begin{align*}
\E_{\mathrm{steps~ 2\dots T}} \left[\min_{t \in \{2, \dots, T\}} \ell(f(x;\theta_t),y)\right]
&\le \E \left[\frac{1}{T-1} \sum_{t=2}^T \ell(f(x;\theta_t),y)\right] \\
&= \E\left[\frac{1}{T-1} \sum_{t=2}^T F(u^{(t)})\right] \le \epsilon/4 + \frac{16k^{1.5}\sqrt{r}n}{\sqrt{T-1}} \le \epsilon/2.
\end{align*}
Now, that after the first step we have $u^{(1)} = 0$ and therefore $\ell(f(x;\theta_1),y) = 1$, and so we always have $\min_{t \in [T]} \ell(f(x;\theta_t),y) \le 1$. Therefore, taking expecation over all steps we get:
\[
\E_{\mathrm{steps~1\dots T}} \left[\min_{t \in [T]} \ell(f(x;\theta_t),y)\right] \le \epsilon/2 + \epsilon/2 = \epsilon.
\]
The simplified form $B = \Omega(n^k \log (n/\epsilon))$ in the main paper comes from the fact that $\binom{n}{k-1} \leq n^{k-1} / (k-1)!$. This $1/(k-1)!$ factor dominates the other $\poly(k)$ factors.
\end{proof}

\subsection{Recoverability of the parity indices from Fourier gaps}
\label{subsec:appendix-mlp-from-gap}

Given a network architecture where some neuron has a $\gamma$-Fourier gap with respect to the target subset $S$, we quantify how the indices in $S$ can be determined by observing an estimate of the population gradient for a general activation function $\sigma$ and $w_t$:
\begin{proposition}[Fourier gap implies feature recoverability]
\label{prop:mlp-from-gap}
For an activation function $\sigma : \RR \rightarrow \RR$, let $f(x;w) = \sigma(w^{\top} x)$ be the corresponding $1$-neuron predictor. Let $\Dcal_S$ be an $(n,k)$-sparse parity distribution.
Let $g(w)$ be an estimate\footnote{For $O(1)$-bounded stochastic gradient estimators, $O\left(\frac{\log n}{\gamma^2}\right)$ samples suffice to obtain such an estimate.} for the neuron's population gradient of the correlation loss $\ell$:
\[ \norm{ g(w) - \E_{(x,y) \sim \Dcal_S}\left[\nabla_w \ell(y,f(x;w))\right] }_\infty < \gamma/2. \]
Then, for every $w$ such that $\sigma'(w^\top x)$ has a $\gamma$-Fourier gap at $S$,
the $k$ indices at which $g(w)$ has the largest absolute values are exactly the indices in $S$.
\end{proposition}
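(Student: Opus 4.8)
The plan is to identify each coordinate of the neuron's population gradient with a Fourier coefficient of the function $h(x) := \sigma'(w^\top x)$, and then read the conclusion off the Fourier-gap hypothesis. First I would compute, via the chain rule, that for the correlation loss $\ell(y,\hat y)=-y\hat y$ one has $\nabla_w \ell(y, f(x;w)) = -y\,\sigma'(w^\top x)\,x$, so that for each $i \in [n]$,
\[
\E_{(x,y)\sim\Dcal_S}\!\big[\nabla_{w_i}\ell(y,f(x;w))\big] \;=\; -\E_x\!\big[\chi_S(x)\,h(x)\,x_i\big] \;=\; -\E_x\!\big[h(x)\,\chi_{S\triangle\{i\}}(x)\big] \;=\; -\hat h\big(S\triangle\{i\}\big),
\]
using the identity $\chi_S(x)\,x_i = \chi_{S\triangle\{i\}}(x)$ and the definition $\hat h(T) = \E_x[h(x)\chi_T(x)]$. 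Thus coordinate $i$ of the population gradient equals, up to sign, the order-$(k-1)$ coefficient $\hat h(S\setminus\{i\})$ when $i \in S$, and the order-$(k+1)$ coefficient $\hat h(S\cup\{i\})$ when $i \notin S$.

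Next I would invoke the $\gamma$-Fourier gap of $h = \sigma'(w^\top\cdot)$ at $S$ (Definition~\ref{def:fourier-gap}). As $i$ ranges over $S$ the sets $S\setminus\{i\}$ are precisely all $(k-1)$-element subsets of $S$, and as $j$ ranges over $[n]\setminus S$ the sets $S\cup\{j\}$ are precisely all $(k+1)$-element supersets of $S$; hence the Fourier gap gives, for every $i \in S$ and $j \notin S$,
\[
\big|\hat h(S\setminus\{i\})\big| \;\ge\; \big|\hat h(S\cup\{j\})\big| + \gamma,
\]
i.e. the magnitude of the population gradient at any relevant coordinate exceeds its magnitude at any irrelevant coordinate by at least $\gamma$.

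Finally I would transfer this separation to the estimate $g(w)$. Writing $G$ for the population gradient, the assumption $\|g(w) - G\|_\infty < \gamma/2$ and the reverse triangle inequality give $\big|\,|g(w)_i| - |G_i|\,\big| < \gamma/2$ for every $i$; combining with the previous display, for all $i \in S$ and $j \notin S$ we obtain $|g(w)_i| > |G_i| - \gamma/2 \ge |G_j| + \gamma/2 > |g(w)_j|$. Hence every relevant coordinate strictly dominates every irrelevant coordinate in $|g(w)|$, so the $k$ largest-magnitude coordinates of $g(w)$ are exactly the indices of $S$. I do not expect a genuine obstacle here; the only points requiring minor care are the chain-rule step when $\sigma$ is merely differentiable almost everywhere (interpreting $\sigma'$ as the a.e.\ derivative, which is exactly the object appearing in the Fourier-gap hypothesis), and the footnoted sample-complexity claim, which follows from Hoeffding's inequality applied coordinatewise to the $O(1)$-bounded estimator together with a union bound over the $n$ coordinates, yielding $O(\log n/\gamma^2)$ samples.
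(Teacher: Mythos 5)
Your proof is correct and follows the same route as the paper's: identify each coordinate of the population gradient $\bar g(w)$ with a Fourier coefficient $\hat h(S\triangle\{i\})$ of $h=\sigma'(w^\top\cdot)$, then use the Fourier gap together with the $\gamma/2$ estimation error to conclude. Your final step, however, is actually executed more carefully than the paper's. The paper's proof ends by asserting that $|g_i|>\gamma/2$ for $i\in S$ and $|g_i|<\gamma/2$ for $i\notin S$; read literally, this would require the degree-$(k+1)$ coefficients $\hat h(S\cup\{j\})$ to be nearly zero, which the Fourier-gap hypothesis does not grant (it only guarantees a \emph{relative} gap $|\hat h(S^-)|\ge|\hat h(S^+)|+\gamma$). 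Your chain
\[
|g(w)_i| > |\bar g(w)_i| - \tfrac{\gamma}{2} \;\ge\; |\bar g(w)_j| + \tfrac{\gamma}{2} > |g(w)_j|
\]
for $i\in S$, $j\notin S$ is the right way to close the argument and does not need the irrelevant coefficients to vanish; it is the correction one would make to the paper's last sentence (and to the typo $\hat h(S\cup\{i\})\le\hat h(S\cup\{i\})-\gamma$ in its displayed equation). Your remarks on interpreting $\sigma'$ as an a.e.\ derivative and on Hoeffding plus a union bound for the footnoted sample-complexity claim are also correct and slightly more thorough than what the paper records.
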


\begin{proof}
Let $h(x) := \sigma'(w^\top x)$. We compute the population gradient, we we call $\bar g(w)$:
\begin{align*}
    [\bar g(w)]_i = \E_{(x,y) \sim \mathcal{D}_S} \left[\nabla_{w_i} \ell(y,f(x;w))\right]
    &= -\E_{(x,y) \sim \mathcal{D}_S} \left[\sigma'(w^\top x) yx_i\right] \\
    &= \begin{cases}
    \E \left[\sigma'(w^\top x) \prod_{j \in S \setminus \{i\}} x_i \right] & i \in S \\
    \E \left[\sigma'(w^\top x) \prod_{j \in S \cup \{i\}} x_i \right] & i \notin S \\
    \end{cases} \\
    &= \begin{cases}
    \hat{h}(S \setminus \{i\}) & i \in S \\
    \hat{h}(S \cup \{i\}) \leq \hat{h}(S \cup \{i\}) - \gamma & i \notin S \\
    \end{cases},
\end{align*}
where the inequality in the final $i \not \in S$ case is due to the Fourier gap property. Then, it holds that for all $i \in S$ we have $|g_i| > \gamma / 2$ and for all $i \notin S$ we have $|g_i| < \gamma/2$. Thus, the largest entries of the estimate $g(w)$ occur at the indices in $S$, as claimed.
\end{proof}

\newcommand{\dis}{\mathsf{dis}}
\subsection{Global convergence for disjoint-PolyNets}\label{app:polynet}
In this section we will develop theory for disjoint-PolyNets trained with correlation loss, as illustrated in Figure~\ref{fig:disjoint-plot}. Section~\ref{app:flow} will consider optimization with gradient flow, and section~\ref{app:sgd} will consider optimization with SGD at any batch size $B \geq 1$.

For any $n \geq 1$ and $1 \leq k \leq n$ such that $n' := n/k$ is an integer, let $P_1, \ldots, P_{n'}$ denote (without loss of generality) the partition $P_i := \{n' (i-1) + 1, \ldots, n' \cdot i\}$. Then, the $(n,k)$-\emph{disjoint-PolyNet} is the neural architecture, with trainable parameters are $\{w_i \in \RR^{n'}\}_{i=1}^k$, which outputs
\[ f(x;w_{1:k}) := \prod_{i=1}^k \langle w_i, x_{P_i} \rangle. \]

\subsubsection{Gradient flow analysis}\label{app:flow}

\begin{figure}
    \centering
    \includegraphics[width=0.98\linewidth]{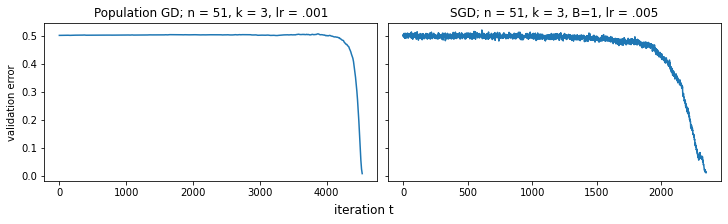}
    \vspace{-3mm}
    \caption{Example training curves for the disjoint-PolyNet trained with correlation loss, which is the setting of Appendix~\ref{app:polynet}. The left plot shows validation error under population gradient descent with small step size, approximating the setting of Section~\ref{app:flow}, and the right plot shows a run of SGD with batch size 1, as in Section~\ref{app:sgd}, though with a constant learning rate schedule. Initialization is i.i.d. standard Gaussian.}
    \label{fig:disjoint-plot}
\end{figure}

For $i$, $\E\left[\nabla_{w_i} \ell(f(x;w_{1:k}), y)\right] = 0$, so the irrelevant weights remain fixed at initialization. For each $i \in [k]$, let $v_i^{(t)}$ be the relevant weight in the $k$th partition.

In gradient flow, the relevant weights evolve according to the following differential equations:
\[
\forall i \in [n]: \quad \dot{v_i} = \prod_{j \neq i} v_j
\]

\begin{lemma}\label{lem:flow}
Suppose disjoint-PolyNet for $k>2$ is initialized such that $\prod_i v_i(0) > 0$, and optimized with gradient flow. Let $\bar{v}_a \defeq \frac{1}{k}\sum_{i=1}^k (v_i(0))^2$, and $\bar{v}_g \defeq \left(\prod_{i=1}^k (v_i(0))^2 \right)^{1/k}$. For any $b \geq 0$ and $i \in [k]$, let $T_i(b) \defeq \arg\sup_{t \geq 0} (|v_i(t)| \leq b)$. Then
\[
T_i(b) \geq \frac{1}{k-2}\left(\bar{v}_a^{1-k/2} - (\bar{v}_a + b^2 - v_i(0)^2 )^{1-k/2} \right).
\]
Let $T_i(\infty) \defeq \arg\sup_{t \geq 0} (|v_i(t)| < \infty)$. Then
\[
T_i(\infty) - T_i(b) \leq
\frac{1}{k-2} (\bar{v}_g + b^2 - v_i(0)^2)^{1-k/2}.
\]

\end{lemma}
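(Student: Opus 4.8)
The plan is to collapse the $k$-dimensional flow onto a single scalar ODE using a conserved quantity, write both $T_i(b)$ and $T_i(\infty)-T_i(b)$ as explicit integrals, and then sandwich the velocity between power laws: AM--GM for the lower bound on $T_i(b)$, and super-additivity of the geometric mean (Mahler's inequality) for the upper bound on $T_i(\infty)-T_i(b)$. \emph{Reduction to a scalar ODE.} From $\dot{v_i} = \prod_{j \neq i} v_j$ one computes $\frac{d}{dt}\, v_i(t)^2 = 2 v_i \dot{v_i} = 2\prod_{j=1}^k v_j(t)$, which is the same for every $i$. Hence each difference $v_i(t)^2 - v_\ell(t)^2$ is constant in $t$, so there is a single scalar $u(t)$ with $u(0)=0$ and $v_i(t)^2 = a_i + u(t)$ for all $i$, where $a_i := v_i(0)^2 > 0$ (positivity of each $a_i$ is forced by $\prod_i v_i(0)>0$). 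Since $\prod_i v_i(0) > 0$, each $\dot{v_i}$ shares the sign of $v_i$, so no coordinate crosses zero and $\prod_i v_i(t) = \prod_i (a_i+u(t))^{1/2} > 0$ throughout; thus $u$ solves the increasing autonomous ODE $\dot u = 2\prod_{i=1}^k (a_i+u)^{1/2}$, $u(0)=0$.

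\emph{Integral formulas.} Because $|v_i(t)|$ increases monotonically from $|v_i(0)|$, for $b \geq |v_i(0)|$ the time $T_i(b)$ is exactly the moment $u$ reaches $b^2 - a_i$, and separating variables gives
\[
T_i(b) = \int_0^{\,b^2 - a_i} \frac{du}{2\prod_{j=1}^k (a_j + u)^{1/2}}, \qquad T_i(\infty) - T_i(b) = \int_{\,b^2 - a_i}^{\infty} \frac{du}{2\prod_{j=1}^k (a_j + u)^{1/2}}.
\]
When $b < |v_i(0)|$ the asserted lower bound on $T_i(b)$ is nonpositive, hence vacuous. Finiteness of $T_i(\infty)$ for $k>2$ drops out of the AM--GM bound below, which shows the integrand is $O(u^{-k/2})$, integrable at infinity precisely because $k>2$.

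\emph{The two inequalities.} For the lower bound on $T_i(b)$: AM--GM applied to $a_1+u,\dots,a_k+u$ gives $\prod_j (a_j+u) \leq (\bar{v}_a + u)^k$, so the integrand is at least $\tfrac12 (\bar{v}_a + u)^{-k/2}$; integrating $\int_0^{b^2-a_i}\tfrac12(\bar{v}_a+u)^{-k/2}\,du$ and using $1 - k/2 < 0$ yields exactly $\frac{1}{k-2}\left(\bar{v}_a^{1-k/2} - (\bar{v}_a + b^2 - v_i(0)^2)^{1-k/2}\right)$. For the upper bound on $T_i(\infty)-T_i(b)$: super-additivity of the geometric mean (Mahler's inequality), $\left(\prod_j (a_j+u)\right)^{1/k} \geq \left(\prod_j a_j\right)^{1/k} + u = \bar{v}_g + u$, gives $\prod_j(a_j+u) \geq (\bar{v}_g + u)^k$, so the integrand is at most $\tfrac12(\bar{v}_g + u)^{-k/2}$; integrating $\int_{b^2-a_i}^{\infty}\tfrac12(\bar{v}_g+u)^{-k/2}\,du$ gives $\frac{1}{k-2}(\bar{v}_g + b^2 - v_i(0)^2)^{1-k/2}$.

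\emph{Main obstacle.} The only genuinely non-routine step is realizing that matching the AM--GM upper bound on the velocity requires a \emph{lower} bound on $\prod_j(a_j+u)$, for which the correct tool is super-additivity of the geometric mean rather than AM--GM, and pinning down that the additive constant inside the resulting power is the geometric mean $\bar{v}_g$ (not the arithmetic mean $\bar{v}_a$). Everything else --- the conserved quantity, the separation of variables, and the power-rule integrals --- is mechanical; I would only take minor care with degenerate cases ($b < |v_i(0)|$, and the $\arg\sup$ conventions at the finite blow-up time).
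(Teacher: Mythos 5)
Your proof is correct and follows essentially the same route as the paper's. Both proofs observe that $\frac{d}{dt}v_i(t)^2$ is the same for all $i$ (so the squared weights evolve by a common shift $q(t)$ or $u(t)$), reduce to the scalar ODE $\dot q = 2\prod_i(a_i+q)^{1/2}$, separate variables to get integral formulas for the hitting times, and then sandwich $\prod_i(a_i+u)$ between $(\bar v_g+u)^k$ and $(\bar v_a+u)^k$ before doing the power-rule integrals. The only differences are cosmetic: the paper invokes the inverse function theorem to write $q^{-1}(c)$ as an integral rather than appealing directly to separation of variables, and the paper attributes both product bounds to ``Maclaurin's inequality'' (citing Hardy--Littlewood--P\'olya p.~52), whereas you more precisely identify the upper bound on the product as AM--GM and the lower bound as super-additivity of the geometric mean (Mahler's inequality). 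Your handling of the degenerate case $b < |v_i(0)|$ is a small point of care the paper does not spell out.
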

\begin{proof}
First, observe that the product of the relevant weights is non-decreasing during gradient flow.
\[
\frac{d\left(\prod_{i=1}^k v_i(t)\right)}{d t}
= \sum_{i=1}^k \frac{\partial\left(\prod_{i=1}^k v_i\right)}{\partial v_i} \cdot \dot{v}_i
= \sum_{i=1}^k \left(\prod_{j \neq i} v_i\right)^2
> 0
\]

Thus, $\prod_i v_i(0) > 0$ implies that $\prod_i v_i(t) > 0$ for all $t$.

Observe that
\[
\frac{d v_i^2}{d t} = 2v_i \dot{v_i} = 2 \prod_{j = 1}^k v_j = 2\prod_{j = 1}^k |v_j|.
\]
This implies that for all $i, l \in [t]$,
\begin{equation}\label{eq:vi-equiv}
\frac{d v_i^2}{d t} = \frac{d v_l^2}{d t}.
\end{equation}
In other words, the squares of the relevant weights each follow the same trajectory, shifted according to their initializations. Let $q(t) \defeq ({v_i(t)})^2 - ({v_i(0)})^2$, for any $i$. This quantity evolves as follows:
\[
\dot{q} = 2 \prod_{i=1}^k |v_i| = 2 \left(\prod_{i=1}^k (q(t) + (v_i(0))^2 )\right)^{1/2}
\]

Since $q(t)$ is strictly increasing, its inverse $q^{-1}$ is well-defined, and we can use the inverse function theorem to characterize $q^{-1}$ for all $t \geq 0$:
\[
q^{-1}(c) = \int_0^c \frac{1}{2} \left(\prod_{i=1}^k (\gamma + (v_i(0))^2 )\right)^{-1/2} d\gamma.
\]

We can upper- and lower-bound the integrand by applying Maclaurin's inequality (see page 52 in \citet{hardy1952inequalities}):
\[
\left(\gamma + \bar{v}_g \right)^k \leq \prod_{i=1}^k (\gamma + (v_i(0))^2 ) \leq \left(\gamma + \bar{v}_a \right)^k.
\]

The amount of time it takes for $q$ to reach a value of $c$ is thus:
\[
\int_0^c \frac{1}{2} \left(\prod_{i=1}^k (\gamma + (v_i(0))^2 )\right)^{-1/2} d\gamma \geq \int_0^c \frac{1}{2} (\gamma + \bar{v}_a)^{-k/2} d\gamma
= \frac{1}{k-2} \left(\bar{v}_a^{1-k/2} - (c+\bar{v}_a)^{1-k/2} \right).
\]

Hence, for any $b \geq 0$, for each $i$, $|v_i(t)| \leq b$ as long as
\[
t \leq \frac{1}{k-2}\left(\bar{v}_a^{1-k/2} - (\bar{v}_a + b^2 - v_i(0)^2 )^{1-k/2} \right).
\]

Meanwhile, the amount of time after $q^{-1}(c)$ it takes for $q$ to explode to infinity is
\[
\int_c^\infty \frac{1}{2} \left(\prod_{i=1}^k (\gamma + (v_i(0))^2 )\right)^{-1/2} d\gamma \leq \int_c^\infty \frac{1}{2} (\gamma + \bar{v}_g)^{-k/2} d\gamma
= \frac{1}{k-2} (c+\bar{v}_g)^{1-k/2}.
\]
Substituting $c = b^2 - v_i(0)^2$, we obtain that the amount of time it takes for $|v_i|$ to grow from $b$ to $\infty$ is
\[
\frac{1}{k-2} (\bar{v}_g + b^2 - v_i(0)^2)^{1-k/2}.
\]

We can upper- and lower-bound $\dot{q}$ by applying Maclaurin's inequality (see page 52 in \citet{hardy1952inequalities}):
\[
2\left(q(t) + \bar{v}_g \right)^{k/2} \leq \dot{q} \leq 2\left(q(t) + \bar{v}_a \right)^{k/2}.
\]

When $k=2$, solving the LHS and RHS differential inequalities yields:
\[
\bar{v}_g(e^{2t}-1) \ \leq\ q(t) \ \leq\ \bar{v}_a(e^{2t}-1).
\]

When $k>2$, we obtain:
\begin{equation}\label{eq:q-bounds-3}
(\bar{v}_g^{-(k/2-1)}-(k-2)t)^{-\frac{1}{k/2-1}} - \bar{v}_g \ \leq\ q(t) \ \leq\ (\bar{v}_a^{-(k/2-1)}-(k-2)t)^{-\frac{1}{k/2-1}} - \bar{v}_a
\end{equation}

From the lower bound on $q(t)$, we can infer that the relevant weights all explode to infinity by the following time:
\[
t = \frac{1}{(k-2)}\bar{v}_g^{-(k/2-1)}
= \frac{1}{(k-2)\left(\prod_i v_i(0)\right)^{1-2/k}}
\]

From the upper bound, we can infer that for any $c > 0$, it is the case that $q(t) \leq c$ so long as
\[
t \leq \frac{1}{k-2}\left(\bar{v}_a^{-(k/2-1)} - (\bar{v}_a + c)^{-(k/2-1)} \right).
\]
Hence, for each $i$, $|v_i(t)| \leq b$ for all
\[
t \leq \frac{1}{k-2}\left(\bar{v}_a^{-(k/2-1)} - (\bar{v}_a + b^2 - (v_i(0))^2 )^{-(k/2-1)} \right).
\]
\end{proof}

Now we analyze the relationship between the relevant weights and the accuracy of the disjoint-PolyNet.

For $y \in \RR$, let
\[
\sgn(y) =
\begin{cases}
1 & \text{if}\ y>0 \\
0 & \text{if}\ y=0 \\
-1 & \text{if}\ y<0
\end{cases}
\]

Then define the error of $f$ with parameters $w_{1:k}$ as
\[
    \mathsf{err}(w_{1:k}) \defeq \Pr_{x \sim \{\pm 1\}^n} [\sgn(f(x;w_{1:k})) \neq \chi_S(x)]
\]

\begin{lemma}\label{lem:err}
Let $w$ be any setting of the weights of a disjoint-PolyNet such that $\prod_i v_i > 0$. For ease of notation, let $u_i \defeq w_{i,2:n'}$ be the irrelevant portion of $w_i$. There is a constant $c$ such that
\[
\frac{1}{2} - \frac{1}{2}\prod_{i=1}^k \left(\erf\left(\frac{|v_i|}{\|u_i\|_2\sqrt{2}}\right) + \frac{c\|u_i\|_\infty}{\|u_i\|_2^3}\right) \leq
\mathsf{err}(w_{1:k})
\leq
2\sum_{i=1}^k \exp\left(-\frac{|v_i|^2}{\|u_i\|_2^2}\right)
\]
where $\erf$ is the Gauss error function $\erf(y) \defeq \frac{2}{\sqrt\pi}\int_0^y e^{-\tau^2}\,d\tau$.
\end{lemma}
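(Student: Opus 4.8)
The plan is to reduce the comparison between $\sgn(f(x;w_{1:k}))$ and $\chi_S(x)$ to a product of $k$ \emph{independent} scalar sign events, one per partition, and then estimate each event by a concentration bound (for the upper bound) and by a Gaussian comparison (for the lower bound). Assume without loss of generality that the relevant coordinate in each block $P_i$ is its first coordinate, so $\langle w_i, x_{P_i}\rangle = v_i x_{i,1} + \langle u_i, x_{i,2:n'}\rangle$ and $\chi_S(x) = \prod_{i=1}^k x_{i,1}$. The elementary fact driving everything is that $\sgn(f(x;w_{1:k})) = \chi_S(x)$ \emph{if and only if} $\prod_{i=1}^k \sgn(\langle w_i, x_{P_i}\rangle) = \chi_S(x)$: if some inner product vanishes both statements are false, and otherwise the sign of the product is the product of the signs. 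Multiplying the $i$-th factor by $x_{i,1}\in\{\pm 1\}$ turns it into $\sgn(v_i + \zeta_i)$, where $\zeta_i := x_{i,1}\langle u_i, x_{i,2:n'}\rangle$ is a symmetric Rademacher sum of variance $\|u_i\|_2^2$, and the $\zeta_i$ are mutually independent because the blocks $P_i$ are disjoint. Since $\prod_i v_i > 0$ we have $\prod_i \sgn(v_i) = 1$ (in particular all $v_i \neq 0$), so using the symmetry of each $\zeta_i$ we may replace $v_i$ by $|v_i|$ throughout.

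\emph{Upper bound.} If $|\zeta_i| < |v_i|$ for every $i$, then $\sgn(\langle w_i, x_{P_i}\rangle) = \sgn(v_i)\,x_{i,1}$ for all $i$, hence the product of signs equals $\chi_S(x)$ and there is no error. Therefore $\mathsf{err}(w_{1:k}) \le \Pr[\exists\, i:\ |\zeta_i| \ge |v_i|] \le \sum_{i=1}^k \Pr[|\zeta_i| \ge |v_i|]$, and Hoeffding's inequality applied to $\zeta_i = \sum_{j\ge 2} u_{i,j} r_j$ bounds each summand by $2\exp\!\big(-|v_i|^2/(2\|u_i\|_2^2)\big)$, which is the claimed upper bound up to the constant in the exponent.

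\emph{Lower bound.} Write $\sigma_i(x) := \sgn(\langle w_i, x_{P_i}\rangle)\in\{-1,0,1\}$. Since $\prod_i\sigma_i = \chi_S$ is exactly the event $\sgn(f)=\chi_S$, we get $1 - 2\,\mathsf{err}(w_{1:k}) = \Pr[\prod_i\sigma_i = \chi_S] - \Pr[\prod_i\sigma_i \neq \chi_S] \le \E\big[\chi_S(x)\prod_{i=1}^k \sigma_i(x)\big]$, the slack being $\Pr[\text{some }\langle w_i, x_{P_i}\rangle = 0]\ge 0$. By independence across disjoint blocks, $\E[\chi_S\prod_i\sigma_i] = \prod_{i=1}^k \E[\sgn(|v_i|+\zeta_i)]$. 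For a centered Gaussian $G_i$ of variance $s_i^2=\|u_i\|_2^2$ one computes $\E[\sgn(|v_i|+G_i)] = 2\Phi(|v_i|/s_i)-1 = \erf\!\big(|v_i|/(s_i\sqrt 2)\big)$ via the identity $\erf(z)=2\Phi(z\sqrt 2)-1$. The Berry--Esseen theorem for sums of independent (non-identically distributed) summands gives $\sup_t \big|\Pr[\zeta_i \le t] - \Phi(t/s_i)\big| \le c\,\sum_{j}|u_{i,j}|^3/\|u_i\|_2^3$, and bounding $\sum_j|u_{i,j}|^3 \le \|u_i\|_\infty\|u_i\|_2^2$ produces the correction term in the statement; hence $\E[\sgn(|v_i|+\zeta_i)] \le \erf\!\big(|v_i|/(\|u_i\|_2\sqrt 2)\big) + c\|u_i\|_\infty/\|u_i\|_2^3$. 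Each factor $\E[\sgn(|v_i|+\zeta_i)]$ is nonnegative (pair $\sgn(a+\zeta)+\sgn(a-\zeta)\ge 0$ and use symmetry of $\zeta_i$), and so is its per-factor upper bound, so the product of the upper bounds dominates $\E[\chi_S\prod_i\sigma_i]$. Rearranging $1 - 2\,\mathsf{err}(w_{1:k}) \le \prod_{i=1}^k\big(\erf(|v_i|/(\|u_i\|_2\sqrt2)) + c\|u_i\|_\infty/\|u_i\|_2^3\big)$ gives the stated lower bound.

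\emph{Main obstacle.} The only genuinely delicate step is the Gaussian comparison in the lower bound: because $\sgn$ is discontinuous one cannot use a smooth CLT and must instead control the Kolmogorov distance between the Rademacher sum $\zeta_i$ and its Gaussian analogue (Berry--Esseen), while carefully handling the atom of $\zeta_i$ at the threshold $\pm|v_i|$ and the sign bookkeeping for negative $v_i$. Everything else — the factorization (which uses precisely the disjointness of the blocks $P_i$, i.e. the disjoint-PolyNet structure), the one-sided union bound, and Hoeffding's inequality — is routine.
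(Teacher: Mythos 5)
Your proof is correct and follows essentially the same route as the paper's: both reduce the error to a product over independent per-block scalar events, bound the upper tail by a one-sided union bound with Hoeffding, and control the lower bound by a Berry--Esseen Gaussian comparison of each block's Rademacher sum. The only cosmetic difference is the intermediate bookkeeping: the paper routes through the generating-function identity $\Pr[X\ \text{odd}] = \tfrac12 - \tfrac12\prod_i(1-2p_i)$ with $1-2p_i = \Pr[|u_i^\top z_i^-|\le|v_i|]$, whereas you compute $1-2\,\mathsf{err} \le \E[\chi_S\prod_i\sigma_i]$ and factor it by block independence into $\prod_i\E[\sgn(|v_i|+\zeta_i)]$; these are algebraically equivalent. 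One small caveat worth flagging in both your write-up and the paper's own proof: the bound $\sum_j|u_{i,j}|^3\le\|u_i\|_\infty\|u_i\|_2^2$ applied to the Berry--Esseen error $C\sum_j|u_{i,j}|^3/\|u_i\|_2^3$ yields $C\|u_i\|_\infty/\|u_i\|_2$, not $c\|u_i\|_\infty/\|u_i\|_2^3$ as written in the lemma statement; since $\|u_i\|_2\ge\|u_i\|_\infty$, the stated term is actually smaller and does not directly follow from this estimate, so this is a real (if benign, for the corollaries) slip that your derivation also inherits.
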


\begin{proof}
Let $z_i = x_{(i-1)n'+1}$ be the $i$th relevant coordinate of $x$, and let $z_i^- = x_{(i-1)n'+2:in'}$ be the irrelevant coordinates in $P_i$.

Then we have that the error of $f$ with parameters $w_{1:k}$ is
\begin{align*}
    \mathsf{err}(w_{1:k}) &= \Pr_x \left[\sgn\left(\prod_{i=1}^k w_i^\top x_{P_i}\right) \neq \chi_S(x)\right] \\
    &= \Pr_x \left[\sgn\left(\prod_{i=1}^k w_i^\top x_{P_i}\right) = -\chi_S(x)\right] + \Pr_x \left[\prod_{i=1}^k w_i^\top x_{P_i} = 0\right]\\
    &= \Pr_x \left[\sgn\left(\prod_{i=1}^k w_i^\top x_{P_i}\right) = -\chi_S(x)\right] + \Pr_x \left[\prod_{i=1}^k w_i^\top x_{P_i} = 0\right]\\
    &= \Pr_x \left[\#\{i \in [k]: \sgn(v_i z_i + u_i^\top z_i^-) = -z_i \} \text{ is odd}\right] + \Pr_x \left[\prod_{i=1}^k w_i^\top x_{P_i} = 0\right] \\
    &= \Pr_{z_1^-, \dots, z_k^-} \left[\#\{i \in [k]: u_i^\top z_i^- > v_i\} \text{ is odd}\right]/2 \\ &+ \Pr_{z_1^-, \dots, z_k^-} \left[\#\{i \in [k]: u_i^\top z_i^- < -v_i\} \text{ is odd}\right]/2 + \Pr_x \left[\prod_{i=1}^k w_i^\top x_{P_i} = 0\right] \numberthis \label{line:indep}\\
    &= \Pr_{z_1^-, \dots, z_k^-} \left[\#\{i \in [k]: u_i^\top z_i^- > v_i\} \text{ is odd}\right] + \Pr_x \left[\prod_{i=1}^k w_i^\top x_{P_i} = 0\right] \numberthis \label{line:sym}\\
    &= \Pr_{z_1^-, \dots, z_k^-} \left[\#\{i \in [k]: u_i^\top z_i^- > |v_i|\} \text{ is odd}\right] + \Pr_x \left[\prod_{i=1}^k w_i^\top x_{P_i} = 0\right] \numberthis \label{line:pos}
\end{align*}

Line~\ref{line:indep} follows because $u_i^\top z_i^-$ and $z_i$ are independent. In line~\ref{line:sym} we use that $u_i^\top z_i^-$ is symmetric about 0. Finally, line~\ref{line:pos} uses the assumption that $\prod_{i=1}^k v_i > 0$.

We can bound $\Pr_x \left[\prod_{i=1}^k w_i^\top x_{P_i} = 0\right]$ using Hoeffding's inequality:
\[
0 \leq \Pr_x \left[\prod_{i=1}^k w_i^\top x_{P_i} = 0\right]
\leq \sum_{i=1}^k \Pr_{z_i} [u_i^\top z_i^- = v_i]
\leq \sum_{i=1}^k \Pr_{z_i} [u_i^\top z_i^- \geq |v_i|]
\leq \sum_{i=1}^k \exp\left(-\frac{|v_i|^2}{\|u_i\|_2^2}\right).
\]

The indicator random variables $\ind[u_i^\top z_i^- > |v_i|]$ are independent of each other, so the first term in line~\ref{line:pos} can be characterized using the distribution of the parity of a sum of independent Bernoulli random variables. Let $X_i \sim \Ber(p_i)$ for $i \in [k]$, and let $X = \sum_{i=1}^k X_i$. The generating function for $X$ is $f(z) = \prod_{i=1}^k ((1-p_i) + p_i z)$. The parity of $X$ then satisfies

\[\Pr[X \text{ is odd}] = \frac{f(1)}{2} - \frac{f(-1)}{2}
= \frac{1}{2} - \frac{1}{2}\prod_{i=1}^k (1 - 2p_i).
\]

First we will prove the upper bound on $\mathsf{err}$. First, observe that
\[
\frac{1}{2} - \frac{1}{2}\prod_{i=1}^k (1 - 2p_i) \leq \sum_{i=1}^k p_i.
\]
Thus,
\begin{align*}
\Pr_{z_1^-, \dots, z_k^-} \left[\#\{i \in [k]: u_i^\top z_i^- > |v_i|\} \text{ is odd}\right] &\leq \sum_{i=1}^k \Pr_{z_i}[u_i^\top z_i^- > |v_i|] \\
&\leq \sum_{i=1}^k \exp\left(-\frac{|v_i|^2}{\|u_i\|_2^2}\right)
\end{align*}
by Hoeffding's inequality, and we are done.

Now we will prove the lower bound on $\mathsf{err}$. We have
\begin{align*}
\Pr_{z_1^-, \dots, z_k^-} \left[\#\{i \in [k]: u_i^\top z_i^- > |v_i|\} \text{ is odd}\right] &=
\frac{1}{2} - \frac{1}{2}\prod_{i=1}^k(1-2\Pr_{z_i}[u_i^\top z_i^- > |v_i|]) \\
&= \frac{1}{2} - \frac{1}{2}\prod_{i=1}^k \Pr_{z_i}[|u_i^\top z_i^-| \leq |v_i|]. \numberthis \label{eq:errform}
\end{align*}

We can bound this expression using the Berry-Esseen theorem \citep{berry1941accuracy, esseen1942liapunov}. Let $\beta \sim N(0,\|u_i\|_2^2)$. Then, the Berry-Esseen theorem states that there is a constant $c$ (which in practice can be $.56$) such that for any $i \in [k]$,
\[
\left| \Pr_{z_i}[|u_i^\top z_i^-| \leq |v_i|]
- \Pr[|\beta| \leq |v_i|] \right| \leq \frac{c\|u_i\|_\infty}{\|u_i\|_2^3}
\]
and we can use the characterization
\[
\Pr[|\beta| \leq |v_i|] = \erf\left(\frac{|v_i|}{\|u_i\|_2\sqrt{2}}\right).
\]

Plugging this into equation~\ref{eq:errform}, we obtain the lower bound on $\mathsf{err}$.
\end{proof}

First, we'll apply Lemma~\ref{lem:flow} and Lemma~\ref{lem:err} to the situation where the $w_i$'s have $\pm 1$ initialization. This generalizes Theorem~\ref{thm:fraction}.
\begin{corollary}\label{cor:pm1}

Suppose all the weights in a disjoint-Polynet are initialized randomly in $\pm 1$ and $k \geq 3$.

Let $T(\alpha) \defeq \arg \sup_{t \geq 0}(\mathsf{err}(w_{1:k}(t)) \geq \alpha$. Then, for $\gamma \in (0, 1/2)$, if $\prod_i v_i(0) > 0$ (which happens w.p. 1/2),
\[
\frac{T(1/2 - \gamma)}{T(0)} = 1 - O((n')^{1-k/2} \cdot \gamma^{2/k-1}).
\]
Thus, even for $\gamma$ arbitrarily close to $0$, when the input is sufficiently long, the network spends almost all of training with error above $1/2 - \gamma$.

\end{corollary}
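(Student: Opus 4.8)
The plan is to specialize Lemma~\ref{lem:flow} (the gradient-flow trajectory) and Lemma~\ref{lem:err} (the two-sided error bound) to the $\pm 1$ initialization, where both become fully explicit, and then compare how long it takes the common magnitude $v(t)$ of the relevant weights to grow past a small value versus how long it takes to diverge. Throughout I would condition on the probability-$1/2$ event $\prod_i v_i(0) > 0$: this is precisely the hypothesis needed to invoke both lemmas, and it is preserved by the flow.

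First I would record the trajectory. Under $\pm 1$ initialization every relevant weight has $v_i(0)^2 = 1$, so $\bar v_a = \bar v_g = 1$ and the upper and lower bounds in Lemma~\ref{lem:flow} collapse to a single curve: all $|v_i(t)|$ equal a common value $v(t)$ with
\[
v(t) = \big(1 - (k-2)t\big)^{-1/(k-2)}, \qquad t \in \big[0,\tfrac{1}{k-2}\big),
\]
and $v(t)\to\infty$ as $t\to\tfrac{1}{k-2}$ (equivalently one solves $\dot q = 2(q+1)^{k/2}$, $q(0)=0$, with $v^2 = q+1$). Since $\mathsf{err}(w_{1:k}(t))\ge 0$ on the entire lifetime $[0,\tfrac{1}{k-2})$ of the flow, $T(0) = \tfrac{1}{k-2}$; one can also check that $\mathsf{err}$ is literally $0$ once $v(t) > \|u_i\|_1 = n'-1$, using that then each factor $\langle w_i, x_{P_i}\rangle$ has the sign of $\sgn(v_i)z_i$ and $\prod_i\sgn(v_i) = \sgn(\prod_i v_i(0)) = 1$.

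Next I would lower-bound $T(1/2-\gamma)$ using the lower bound of Lemma~\ref{lem:err}. For $\pm 1$ initialization $\|u_i\|_2^2 = n'-1$ and $\|u_i\|_\infty = 1$, so that bound reads
\[
\mathsf{err}(w_{1:k}(t)) \;\ge\; \frac12 - \frac12\left(\erf\!\Big(\tfrac{v(t)}{\sqrt{2(n'-1)}}\Big) + \tfrac{c}{(n'-1)^{3/2}}\right)^{\!k};
\]
hence $\mathsf{err}\ge \tfrac12-\gamma$ whenever $\erf\big(v(t)/\sqrt{2(n'-1)}\big) + c(n'-1)^{-3/2} \le (2\gamma)^{1/k}$, and by the elementary inequality $\erf(z)\le 2z/\sqrt\pi$ it is enough that $v(t)\le \tilde v := \sqrt{\pi(n'-1)/2}\,\big((2\gamma)^{1/k} - c(n'-1)^{-3/2}\big)$. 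Since $v$ is increasing ($\tfrac{d}{dt}v_i^2 = 2\prod_j v_j > 0$), inverting the formula for $v(t)$ gives $T(1/2-\gamma) \ge v^{-1}(\tilde v) = \tfrac{1}{k-2}\big(1-\tilde v^{\,2-k}\big)$, so that $1 - T(1/2-\gamma)/T(0) \le \tilde v^{\,2-k}$. Finally, once $n'$ is large relative to $\gamma$ — concretely $(n'-1)^{3/2}\ge 2c(2\gamma)^{-1/k}$, which is the meaning of ``sufficiently long'' — one has $\tilde v \ge \tfrac12\sqrt{\pi(n'-1)/2}\,(2\gamma)^{1/k} = \Omega\big(\sqrt{n'}\gamma^{1/k}\big)$, whence $\tilde v^{\,2-k} = O\big((n')^{1-k/2}\gamma^{2/k-1}\big)$ because $2-k<0$; combined with the trivial $T(1/2-\gamma)\le T(0)$ this gives the corollary, and $\gamma = 0.01$ recovers Theorem~\ref{thm:fraction}.

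I expect the only real work to be bookkeeping: carrying the Berry--Esseen constant $c$ cleanly through the inversion of $\erf$ and stating the threshold on $n'$ (i.e. making ``sufficiently long'' precise). There is no conceptual obstacle, since the argument needs only the \emph{lower} bound on the error, and $\pm 1$ initialization is exactly what makes the gradient flow solvable in closed form — which is why the generic (and non-tight) bounds of Lemma~\ref{lem:flow} already suffice here.
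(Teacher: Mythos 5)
Your proposal is correct and follows essentially the same route as the paper: choose a threshold $b$ on the common magnitude of the relevant weights via the lower bound of Lemma~\ref{lem:err} (so that error stays above $1/2-\gamma$), lower-bound $T(1/2-\gamma)$ by the time to reach $b$ via Lemma~\ref{lem:flow}, and upper-bound $T(0)$ by the blow-up time. Your only departure is to solve the $\pm 1$-initialized ODE in closed form rather than read off the (here tight, since $\bar v_a = \bar v_g = 1$) bounds of Lemma~\ref{lem:flow} directly, and you track the constants a bit more carefully — e.g., $(2\gamma)^{1/k}$ in place of the paper's slightly looser $\gamma^{1/k}$, and $n'-1$ rather than $n'$ in $\|u_i\|_2^2$ — none of which affects the big-$O$ conclusion.
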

\begin{proof}

For $\pm 1$ initialization,
\[
\frac{\|u_i\|_\infty}{\|u_i\|_2^3} = (n')^{-3/2}
\]
so by Lemma~\ref{lem:err},
\begin{align*}
\mathsf{err}(w_{1:k}(t)) &\geq \frac{1}{2} - \frac{1}{2}\prod_{i=1}^k \left(\erf\left(\frac{|v_i(t)|}{\sqrt{2n'}}\right) + c(n')^{-3/2}\right) \\
&\geq \frac{1}{2} - \frac{1}{2}\prod_{i=1}^k \left(\sqrt{\frac{2}{\pi n'}}\cdot |v_i(t)| + c(n')^{-3/2}\right)
\end{align*}
for some $c > 0$.

By Lemma~\ref{lem:flow}, $|v_i(t)| \leq b$ for all $i$ whenever
\[
t \leq \frac{1}{k-2}\left(1 - b^{2-k} \right).
\]

Setting
\[
b = \sqrt{\pi n'/2}\left(\gamma^{1/k} - c(n')^{-3/2}\right),
\]
we obtain that $T(1/2 - \gamma) = \frac{1}{k-2} (1 - O((n')^{1-k/2} \cdot \gamma^{2/k-1}) )$.

Also by Lemma~\ref{lem:flow}, using the language of that lemma statement, for all $i$
\[
T_i(\infty) - T_i(b) \leq \frac{1}{k-2} \cdot b^{2-k} = \frac{1}{k-2}\cdot O((n')^{1-k/2}\cdot \gamma^{2/k-1}).
\]

Once all the relevant weights have exploded to infinity, the error of the network will have zero error, so the result follows.
\end{proof}

Now let us apply Lemma~\ref{lem:flow} and Lemma~\ref{lem:err} to the situation where the $w_i$'s have standard normal initialization. Again we find that with high probability, the phase of learning with near-trivial accuracy is much longer than the subsequent period until perfect accuracy, as illustrated by the left-hand plot in Figure~\ref{fig:disjoint-plot}. This culminates in the full theorem statement regarding phase transitions in the loss:

\newtheorem*{T4}{Theorem~\ref{thm:fraction}}
\begin{T4}[Loss plateau for gradient flow on disjoint-PolyNets; full statement]
Suppose all the weights in a disjoint-PolyNet are initialized $\sim N(0,1)$, and $k \geq 3$. Then, conditioned on $\prod_i v_i(0) > 0$, with probability $1-1/\mathrm{poly}(n')$ over the randomness of the initialization, for $\gamma \in (0, 1/2)$,
\[
\frac{T(1/2 - \gamma)}{T(0)} = 1 - \tilde{O}((n')^{1-k/2} \cdot \gamma^{2/k-1}).
\]
where $T(\cdot)$ is defined as in Corollary~\ref{cor:pm1}.
\end{T4}
\begin{proof}
By a standard application of the generic Chernoff bound \citep{wainwright2019high}, for each $i \in [k], j \in [n'-1]$, we have
\[
\Pr[|u_{i,j}| > \tau] \leq 2e^{-\tau^2/2} \quad \text{for all}\ \tau \geq 0.
\]
Applying the union bound, we obtain
\[
\Pr[\exists i,j \text{ s.t. }|u_{i,j}| > \tau] \leq 2n'k e^{-\tau^2/2} \quad \text{for all}\ \tau \geq 0.
\]
This implies that w.p. $\geq 1-\eps/2$,
\begin{equation}\label{eq:uinfnorm}
\forall i: \quad \|u_i\|_\infty \leq \sqrt{2\log (4n'k/\eps)}.
\end{equation}

For each $i$, $\|u_i\|_2^2$ follows a chi-squared distribution with $n'-1$ degrees of freedom. By \citet{laurent2000adaptive}, for any $\tau \geq 0$,
\[
\|u_i\|_2^2 \in [(n'-1) - 2\sqrt{(n'-1)\tau}, (n'-1) + 2\sqrt{(n'-1)\tau} + 2\tau] \quad \text{w.p. } \geq 1-2e^{-\tau}.
\]
Hence, w.p. $\geq 1-\eps/2$,
\begin{equation}\label{eq:u2norm}
\forall i: \quad \|u_i\|_2 = \sqrt{n'} + O(\sqrt{\log(k/\eps)})
\end{equation}

With probability $1-\eps$ both $\|u_i\|_\infty$ and $\|u_i\|_2$ are bounded as above, in which case we obtain that for $\eps = 1/\mathrm{poly}(n',k)$, and for some $c_1, c_2 > 0$,
\[
\frac{\|u_i\|_\infty}{\|u_i\|_2^3} \le \frac{c_1\sqrt{\log (n'k)}}{\left(\sqrt{n'} + c_2\sqrt{\log(n'k)}\right)^{3}} \le \tilde{O}\left((n')^{-3/2}\right).
\]

Plugging this into Lemma~\ref{lem:err} gives us
\begin{align*}
\mathsf{err}(w_{1:k}(t)) &= \frac{1}{2} - \frac{1}{2}\prod_{i=1}^k \left(\erf\left(\frac{|v_i(t)|}{\|u_i\|_2\sqrt{2}}\right) + O\left(\frac{\|u_i\|_\infty}{\|u_i\|_2^3}\right)\right)\\
&\ge \frac{1}{2} - \frac{1}{2}\prod_{i=1}^k \left(\tilde{O}\left(\frac{|v_i(t)|}{\sqrt{n'}}\right) + \tilde{O}\left((n')^{-3/2}\right)\right).
\end{align*}

Thus, we can choose $ b = \tilde{\Omega}(\gamma^{1/k}\sqrt{n'})$ such that if $v_i(t) \leq b$ for all $i$, then $\mathsf{err}(w_{1:k}(t)) \geq \frac{1}{2}-\gamma$.

By Lemma~\ref{lem:flow}, for all $i$,
\[
T_i(b) \geq \frac{1}{k-2}\left(\bar{v}_a^{1-k/2} - (\bar{v}_a + b^2 - v_i(0)^2 )^{1-k/2} \right) \geq \frac{1}{k-2}\left(\bar{v}_a^{1-k/2} - (b^2 - O(\log(k)) )^{1-k/2} \right).
\]

By the Chernoff bound, $\max_i |v_i(0)| \leq \sqrt{2\log(4k/\delta)}$ w.p. $\geq 1-\delta/2$. And, by the same bound we used for $\|u_i\|_2^2$, we have that $\bar{v}_a = 1 + O(\sqrt{\log(1/\delta)/k})$ w.p. $1-\delta/2$. Thus, for $\delta = 1/\mathrm{poly}(k)$, we have that with probability $1-\delta$,
\begin{equation}\label{eq:tb}
T_i(b) \geq \frac{1}{k-2}\left(\tilde{\Omega}(1) - (b^2 - O(\log(k)) )^{1-k/2} \right)
= \frac{1}{k-2} \cdot \tilde{\Omega}(1).
\end{equation}

Also by Lemma~\ref{lem:flow},
\[
T_i(\infty) - T_i(b) \leq \frac{1}{k-2}(\bar{v}_g + b^2 - v_i(0)^2 )^{1-k/2}
\leq \frac{1}{k-2}(b^2 - v_i(0)^2 )^{1-k/2}
.
\]

With probability $1-1/\mathrm{poly(k)}$,
\begin{equation}\label{eq:tinf}
T_i(\infty) - T_i(b) \leq \frac{1}{k-2}(b^2 - O(\log(k)))^{1-k/2}
= \frac{1}{k-2} \cdot \tilde{O}\left((n')^{1-k/2} \cdot \gamma^{2/k - 1}\right).
\end{equation}

Combining Equation~\ref{eq:tb} and Equation~\ref{eq:tinf}, we obtain the desired statement.

\end{proof}

\subsection{Global convergence and phase transition for gradient flow on disjoint-PolyNets}\label{app:sgd}
In this section we will analyze the training of a disjoint-PolyNet using SGD with online (i.i.d.) batches. We will show a convergence result for the 0-1 error of the learned classifier.
\newtheorem*{T3}{Theorem~\ref{thm:sgd-disjoint}}
\begin{T3}[SGD on disjoint-PolyNets learns disjoint parities; full statement]
Assume we randomly initialize the disjoint-PolyNet with weights drawn uniformly from $\{\pm 1\}$. Fix $\epsilon \in (0,1/2)$ and run SGD at any batch size $B \geq 1$ for $T\ge 6\log (2nT/\delta)\log(2k/\eps) (3n'-2)^{2k - 1}$ iterations. There exists an adaptive learning rate schedule, such that, with probability 1/2 over the randomness of the initialization and $1 - \delta$ over the sampling of SGD, the following holds:
\[
\err\left(w_{1:k}^{(T+1)}\right) \le \epsilon.
\]
\end{T3}
\begin{proof}
For simplicity of presentation, we will assume $B = 1$.
Let the sample at iteration $t$ be $(x^{(t)}, y^{(t)})$ where $x^{(t)} \sim \mathsf{Unif}(\{\pm1\}^n)$ and $y^{(t)} = \chi_S^{(t)} (x^{(t)})$. Denote the population and stochastic gradient at time $t$ as:
\begin{align*}
    \hat{g}_i^{(t)} &= - y^{(t)}\left(\prod_{j\ne i}{w_j^{(t)}}^\top x_j^{(t)}\right)x_i^{(t)}\\
    g_i^{(t)} &= -\left(\prod_{j \ne i} w_{j,1}^{(t)} \right)e_1
\end{align*}
Observe that $|\hat{g}_{i,j}^{(t)}|, |g_{i,j}^{(t)}| \le \prod_{l \ne i} \|w_l^{(t)}\|_1$; note that this is also true when $B > 1$. Let the learning rate be $\eta^{(t)}_i = \frac{1}{2\sqrt{2T \log (2nT/\delta)}\cdot \prod_{l \ne i} \|w_{l}^{(t)}\|_1}$.
Let $\Delta_{i,j}^{(t)} = \eta^{(t)}_i(g_{i,j}^{(t)} - \hat{g}_{i,j}^{(t)})$ and $s_{i,j}^{(t)} = \sum_{\tau=1}^t \Delta^{(\tau)}_{i,j}$.
Observe that $\E \Delta^{(t)}_{i,j} = 0$ and therefore $s^{(1)}_{i,j}, \dots, s^{(t)}_{i,j}$ form a martingale. Furthermore, observe that 
\begin{align*}
|s_{i,j}^{(t)}-s^{(t-1)}_{i,j}| = |\Delta^{(t)}_{i,j}| \le \frac{|g_{i,j}^{(t)}| + |\hat{g}_{i,j}^{(t)}|}{\sqrt{T\log (2nT/\delta)} \cdot \prod_{l \ne i} \|w_{l}^{(t)}\|_1} \le \frac{1}{\sqrt{2T \log (2nT/\delta)}}
\end{align*}
Then, for every $t < T$ and $i,j$, by the Azuma-Hoeffding inequality, with probability $1- \frac{\delta}{nT}$, $|s^{(t)}_{i,j}| \le 1/2$.
By the union bound, w.p. at least $1-\delta$, for every $t < T$ and all $i,j$ it holds that $|s^{(t)}_{i,j}| \le 1/2$. Let us assume this holds. 

\begin{claim}\label{claim:upper}
For all $t \le T + 1$, for $j > 1$, $|w_{i,j}^{(t)}| \le 3/2$.
\end{claim}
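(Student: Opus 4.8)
The plan is to exploit the fact that for the irrelevant coordinates $j > 1$ the population gradient has no mass, so those coordinates perform a pure random walk away from their $\pm 1$ initialization whose total displacement is exactly the martingale $s_{i,j}^{(t)}$ that was just controlled via Azuma--Hoeffding. Concretely, I would first write down the update for $w_{i,j}^{(t)}$ with $j > 1$: since $g_i^{(t)} = -\big(\prod_{l \ne i} w_{l,1}^{(t)}\big) e_1$ is supported only on the first coordinate of block $i$, we have $g_{i,j}^{(t)} = 0$ for every $j > 1$, and hence the SGD step (which carries no weight decay here) satisfies
\[
w_{i,j}^{(t+1)} = w_{i,j}^{(t)} - \eta_i^{(t)} \hat g_{i,j}^{(t)} = w_{i,j}^{(t)} + \eta_i^{(t)}\big(g_{i,j}^{(t)} - \hat g_{i,j}^{(t)}\big) = w_{i,j}^{(t)} + \Delta_{i,j}^{(t)}.
\]
Telescoping from the initialization and writing $s_{i,j}^{(0)} \defeq 0$, this gives $w_{i,j}^{(t)} = w_{i,j}^{(1)} + s_{i,j}^{(t-1)}$ for all $1 \le t \le T+1$.

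Next I would invoke the concentration statement proved immediately above the claim: on an event of probability at least $1-\delta$ over the SGD samples, $|s_{i,j}^{(\tau)}| \le 1/2$ holds simultaneously for all $i$, all $j$, and all $\tau \le T$. The only subtlety is that one should take the Azuma--Hoeffding union bound over the $\le nT$ triples $(i,j,\tau)$ with $\tau \le T$ (not merely $\tau < T$) so as to also control $s_{i,j}^{(T)}$, which is what governs $w^{(T+1)}$; this is still consistent with the per-term failure probability $\delta/(nT)$. The per-step increment bound $|\Delta_{i,j}^{(\tau)}| \le (2T\log(2nT/\delta))^{-1/2}$ that feeds Azuma--Hoeffding holds uniformly in $\tau$ precisely because the adaptive learning rate $\eta_i^{(\tau)}$ is chosen to cancel the crude bound $\prod_{l \ne i}\|w_l^{(\tau)}\|_1$ on both $|g_{i,j}^{(\tau)}|$ and $|\hat g_{i,j}^{(\tau)}|$. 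Combining with $|w_{i,j}^{(1)}| = 1$ (the weights are initialized in $\{\pm 1\}$), the triangle inequality yields, for every $t \le T+1$ and every $j > 1$,
\[
|w_{i,j}^{(t)}| \le |w_{i,j}^{(1)}| + |s_{i,j}^{(t-1)}| \le 1 + \tfrac{1}{2} = \tfrac{3}{2},
\]
which is the claim.

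I do not expect a substantial obstacle: the claim is essentially a corollary of the martingale tail bound established just before it. The two points that need a little care are (i) recording that the population gradient of each block is supported on a single coordinate, so that the telescoped update for $j > 1$ is drift-free and reduces to a sum of mean-zero increments, and (ii) making sure the good event of the Azuma--Hoeffding argument is stated up to $\tau = T$ (rather than $\tau < T$) so that it bounds $s_{i,j}^{(T)}$ and hence $w^{(T+1)}$.
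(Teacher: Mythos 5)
Your proof matches the paper's: telescope the drift-free update $w_{i,j}^{(t+1)} = w_{i,j}^{(1)} + s_{i,j}^{(t)}$ (valid for $j>1$ because the population gradient vanishes there), then combine the $\pm 1$ initialization with the martingale tail bound $|s_{i,j}^{(t)}|\le 1/2$ via the triangle inequality. The off-by-one you flag --- that the good event should run over $\tau \le T$, not $\tau < T$, so that $s_{i,j}^{(T)}$ and hence $w^{(T+1)}$ are controlled --- is a genuine (if minor) imprecision in the paper's statement of the union-bound event; aside from noting that, your argument is the paper's argument.
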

\begin{proof}
Note that $ w_{i,j}^{(t+1)}= w_{i,j}^{(1)}  + s_{i,j}^{(t)}$, therefore, we have,
\begin{align*}
    | w_{i,j}^{(t+1)}|&\le  |w_{i,j}^{(1)}| + |s_{i,j}^{(t)}| \le  3/2.
\end{align*}
since $|w_{i,j}^{(1)}| = 1$ and $|s_{i,1}^{(t-1)}| \le 1/2$.
\end{proof}

Denote $\xi_i = \sgn \left(w_{i,1}^{(1)}\right)$ and assume that $\prod_{j=1}^k \xi_j> 0$, and note that this happens w.p. $1/2$. We will assume this holds for the next lemma.
\begin{claim} \label{claim:lower}
For all $t \le T+1$ and $i \in [k]$ it holds that 
\[
\xi_i w_{i,1}^{(t)} = |w_{i,1}^{(t)}| \ge \frac{1}{2} + \frac{1}{4\sqrt{2}}\left(\frac{1}{3n'-2}\right)^{k-1} \sqrt{\frac{t-1}{\log (nT/\delta)}}.
\]
\end{claim}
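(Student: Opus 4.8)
The plan is to prove Claim~\ref{claim:lower} by strong induction on $t$, carried out jointly with Claim~\ref{claim:upper} and with an auxiliary a priori upper bound on the relevant coordinates (of order $n'$), all of which are needed to control the adaptive learning rate $\eta_i^{(t)}$. Throughout we work on the $(1-\delta)$-probability event that $|s_{i,j}^{(t)}| \le 1/2$ for all $t < T$ (the Azuma--Hoeffding bound established above) and condition on $\prod_j \xi_j > 0$, so $\prod_j \xi_j = 1$. The base case $t = 1$ is immediate since $|w_{i,1}^{(1)}| = 1$ under the $\pm 1$ initialization.

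For the inductive step I would first isolate the drift. Since $g_{i,1}^{(t)} = -\prod_{j\ne i} w_{j,1}^{(t)}$ and $\Delta_{i,1}^{(t)} = \eta_i^{(t)}(g_{i,1}^{(t)} - \hat g_{i,1}^{(t)})$, the first-coordinate update is $w_{i,1}^{(t+1)} = w_{i,1}^{(t)} + \eta_i^{(t)}\prod_{j\ne i}w_{j,1}^{(t)} + \Delta_{i,1}^{(t)}$, and telescoping to $\tau = 1$ gives
\[
\xi_i w_{i,1}^{(t+1)} \;=\; 1 + \sum_{\tau=1}^{t} \eta_i^{(\tau)}\, \xi_i \prod_{j\ne i} w_{j,1}^{(\tau)} \;+\; \xi_i s_{i,1}^{(t)}.
\]
By the inductive hypothesis each $w_{j,1}^{(\tau)}$ with $\tau \le t$ has sign $\xi_j$ and magnitude $\ge 1/2$, so $\xi_i\prod_{j\ne i} w_{j,1}^{(\tau)} = (\prod_j\xi_j)\prod_{j\ne i}|w_{j,1}^{(\tau)}| = \prod_{j\ne i}|w_{j,1}^{(\tau)}| > 0$; combined with $\xi_i s_{i,1}^{(t)} \ge -1/2$ this yields $\xi_i w_{i,1}^{(t+1)} \ge \frac12 + \sum_{\tau\le t} \eta_i^{(\tau)}\prod_{j\ne i}|w_{j,1}^{(\tau)}| > \frac12$, which in particular reconfirms the sign of $w_{i,1}^{(t+1)}$ and closes that half of the induction.

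The heart of the argument is then lower-bounding the accumulated drift $D_t := \sum_{\tau\le t}\eta_i^{(\tau)}\prod_{j\ne i}|w_{j,1}^{(\tau)}|$ by the claimed quantity. I would lower-bound $\eta_i^{(\tau)}$ via $\|w_l^{(\tau)}\|_1 \le 3n'-2$ (using Claim~\ref{claim:upper} on the $n'-1$ irrelevant coordinates and the auxiliary upper bound on the relevant one), and lower-bound $\prod_{j\ne i}|w_{j,1}^{(\tau)}|$ by feeding in the inductive lower bound $|w_{j,1}^{(\tau)}| \ge \frac12 + a_\tau$ with $a_\tau$ the claimed expression. This reduces the whole claim to a one-dimensional estimate roughly of the form $\sum_{\tau\le t}(\frac12 + a_\tau)^{k-1} \gtrsim \sqrt{T\,t}$ — a self-reinforcing (``gradient amplification'') bound: the signal already accumulated in the relevant weights makes subsequent population gradients larger, so the weights grow faster than the naive $(\frac12)^{k-1}$-per-step rate would suggest. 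One carries this out by comparison with $\int(\frac12 + a(s))^{k-1}\,ds$ for $a(s)\propto\sqrt s$ (or by noting that the increments of $a_\tau$ obey a linear recursion with positive coefficients), and the choice $T = \tilde{\Omega}((3n'-2)^{2k-1})$ is exactly what makes this integral clear the threshold.

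I expect the main obstacle to be precisely this bootstrapped nonlinear recursion: the drift is a product of the very quantities being bounded, so the induction must simultaneously carry a \emph{matching upper bound} on $|w_{i,1}^{(t)}|$ (without it the adaptive $\eta_i^{(t)}$ could collapse and the lower bound would fail), with the constants arranged so that both bounds survive the binomial expansion of $(\frac12 + a_\tau)^{k-1}$ and the telescoping — which is why the learning-rate schedule is stated adaptively and the constant in the lower bound on $T$ has to be calibrated. Once Claim~\ref{claim:lower} holds at $t = T+1$, it gives $|w_{i,1}^{(T+1)}| = \Omega(\sqrt{n'\log(k/\eps)})$ for each $i \in [k]$; combined with $\|w_{i,2:n'}^{(T+1)}\|_2 \le \frac32\sqrt{n'-1}$ from Claim~\ref{claim:upper} and the upper-bound half of Lemma~\ref{lem:err}, namely $\err(w_{1:k}^{(T+1)}) \le 2\sum_{i=1}^k \exp(-|w_{i,1}^{(T+1)}|^2/\|w_{i,2:n'}^{(T+1)}\|_2^2)$, this yields $\err(w_{1:k}^{(T+1)}) \le \eps$ and completes Theorem~\ref{thm:sgd-disjoint}.
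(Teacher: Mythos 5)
Your telescoping decomposition, use of Azuma--Hoeffding, and downstream application of Claim~\ref{claim:upper} and Lemma~\ref{lem:err} all match the paper, but the core drift estimate goes a different and unnecessarily complicated route, and one of your stated prerequisites is a red herring. You split $\eta_i^{(\tau)} \cdot \prod_{j\ne i}|w_{j,1}^{(\tau)}|$ into a lower bound on $\eta_i^{(\tau)}$ (which forces you to introduce an extra \emph{upper} bound on the relevant coordinate to control $\|w_l^{(\tau)}\|_1$) and a separate lower bound on $\prod_{j\ne i}|w_{j,1}^{(\tau)}|$. The paper never splits this product: because $\eta_i^{(\tau)}$ is chosen to normalize by $\prod_{l\ne i}\|w_l^{(\tau)}\|_1$, the drift term collapses to the ratio
\[
\eta_i^{(\tau)}\prod_{j\ne i}|w_{j,1}^{(\tau)}| \;=\; \frac{1}{2\sqrt{2T\log(2nT/\delta)}}\prod_{j\ne i}\frac{|w_{j,1}^{(\tau)}|}{\|w_j^{(\tau)}\|_1}
\;=\; \frac{1}{2\sqrt{2T\log(2nT/\delta)}}\prod_{j\ne i}\frac{|w_{j,1}^{(\tau)}|}{|w_{j,1}^{(\tau)}|+\sum_{l>1}|w_{j,l}^{(\tau)}|},
\]
and each factor $x\mapsto x/(x+c)$ is \emph{monotone increasing} in $x=|w_{j,1}^{(\tau)}|$. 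So the inductive floor $|w_{j,1}^{(\tau)}|\ge 1/2$ and the ceiling from Claim~\ref{claim:upper} on the irrelevant coordinates give the per-step bound $\ge \tfrac{1}{2\sqrt{2T\log(2nT/\delta)}}(3n'-2)^{-(k-1)}$ directly, with no a priori upper bound on the relevant coordinate needed. Your parenthetical worry that ``without it the adaptive $\eta_i^{(t)}$ could collapse and the lower bound would fail'' is therefore backwards: as $|w_{j,1}^{(\tau)}|$ grows, the learning rate shrinks, but the ratio only increases, so the drift can only strengthen.

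Your ``self-reinforcing / bootstrapped nonlinear recursion'' framing, which you expect to be the main obstacle, is also not what the proof does. The paper never feeds the accumulated gain $a_\tau$ back into the estimate: it uses only the crude floor $1/2$ for every $\tau\le t$, yielding a constant drift per step and hence plain \emph{linear} accumulation, which is exactly what the claim requires at $t=T+1$. You do not need to compare with $\int(\tfrac12+a(s))^{k-1}\,ds$ or track any binomial expansion of $(\tfrac12+a_\tau)^{k-1}$; the argument you anticipate fighting through does not arise. (You would notice if you tried to instantiate your plan: the claim only promises growth like $t/\sqrt{T\log}$, and the lazy bound $t\cdot(1/2)^{k-1}\cdot\eta$ already delivers it.) Your approach could be made to work with an auxiliary upper bound on $|w_{i,1}^{(t)}|$ of order $\sqrt{n'}$ carried through the induction, but the paper's ratio trick makes all of that machinery unnecessary.
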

\begin{proof}
Observe that the claim holds for $t=1$ since $|w_{i,`}^{(1)}| = 1$. By induction on $t$, assume the claim holds for all $\tau \le t$. Now we will prove it holds for $t+1$.

Observe that for all $\tau \le t$, by the assumption:
\[
g_{i,1}^{(\tau)} = - \prod_{j \ne i}w_{j,1}^{(\tau)}= - \xi_i\prod_{j \ne i}\xi_j w_{j,1}^{(\tau)}= - \xi_i \prod_{j \ne i} |w_{j,1}^{(\tau)}|.
\]
Note that $ w_{i,1}^{(t+1)}= w_{i,1}^{(1)} +  \sum_{\tau=1}^{t}\eta^{(\tau)}_i\left(\prod_{j \ne i} w_{j,1}^{(\tau)} \right)  + s_{i,1}^{(t)}$, then we have
\begin{align}
    \xi_i w_{i,1}^{(t+1)}&= \xi_iw_{i,1}^{(1)} +  \xi_i\sum_{\tau=1}^{t}\eta^{(\tau)}_i\left(\prod_{j \ne i} w_{j,1}^{(\tau)} \right)  + \xi_is_{i,1}^{(t)} \nonumber\\
    & = 1 +   \sum_{\tau=1}^{t}\left(\frac{1}{4\sqrt{2\tau \log (nT/\delta)}} \cdot \prod_{j \ne i} \frac{\xi_j w_{j,1}^{(\tau)}}{\|w_j^{(\tau)}\|_1} \right)  + \xi_is_{i,1}^{(t)} \label{eq:1}\\
    & \ge  \frac{1}{2} +  \frac{1}{2\sqrt{2T \log (2nT/\delta)}} \cdot\sum_{\tau=1}^{t}\left( \prod_{j \ne i} \frac{|w_{j,1}^{(\tau)}|}{\|w_j^{(\tau)}\|_1} \right) - 
    |s_{i,1}^{(t)}| \label{eq:2}\\
     & \ge \frac{1}{2} + \frac{1}{2\sqrt{2T \log (2nT/\delta)}} \cdot \sum_{\tau=1}^{t}\left(\prod_{j \ne i} \frac{|w_{j,1}^{(\tau)}|}{|w_{j,1}^{(\tau)}| + \frac{3(n'-1)}{2}} \right) \label{eq:3} \\
     & \ge \frac{1}{2} + \frac{1}{2\sqrt{2T \log (2nT/\delta)}} \cdot \sum_{\tau=1}^{t}\left(\prod_{j \ne i} \frac{1}{1 + 3(n'-1)} \right) \label{eq:4}\\
     & \ge \frac{1}{2} \left(1 +  \left(\frac{1}{3n'-2}\right)^{k-1} \frac{t}{\sqrt{2T\log (2nT/\delta)}}\right) \nonumber.
\end{align}
\eqref{eq:1} follows from observing that $\xi_iw_{i,1}^{(1)} = |w_{i,1}^{(1)}| = 1$ and $\prod_{j = 1}^k \xi_j = 1$. \eqref{eq:2} follows from the inductive hypothesis $\xi_jw_{j,1}^{(\tau)} = |\xi_jw_{j,1}^{(\tau)}|$. \eqref{eq:3} follows from our assumption that $|s_{i,1}^{(t)}| \le 1/2$ and Claim \ref{claim:upper}.\eqref{eq:4} follows from the inductive hypothesis $|w_{i,1}^{(\tau)}| \ge 1/2$. 
\end{proof}
Setting $T$ such that $T\ge 2\log (2nT/\delta)\alpha^2 (3n'-2)^{2k - 2}$ for $\alpha = 3\sqrt{(n'-1)\log(2k/\eps)} - 1$, from the above claims, after iteration $T$, we have
\begin{align*}
w_{i,1}^{(T + 1)} &\ge \frac{\alpha + 1}{2}\\
w_{i,j}^{(T+1)} &\le 3/2 \text{ for }j > 1.
\end{align*}
Using Lemma \ref{lem:err}, we have with probability $1- \delta$,
\[
\err(w_{1:k}^{(T + 1)}) \le 2k \exp\left(-\frac{(\alpha+1)^2}{9(n' -1)}\right) = \epsilon.
\]
\end{proof}

\section{Additional figures, experiments, and discussion}
\label{sec:appendix-experiments}

This section contains our unabridged empirical results, visualizations, and accompanying discussion. Additional example training curves (like the assortment in Figure~\ref{fig:intro} \emph{(left)}) are shown in Figure~\ref{fig:extra-training-curves}; more examples can be found in the subsections below.

\begin{figure}
    \centering
    \includegraphics[width=0.98\linewidth]{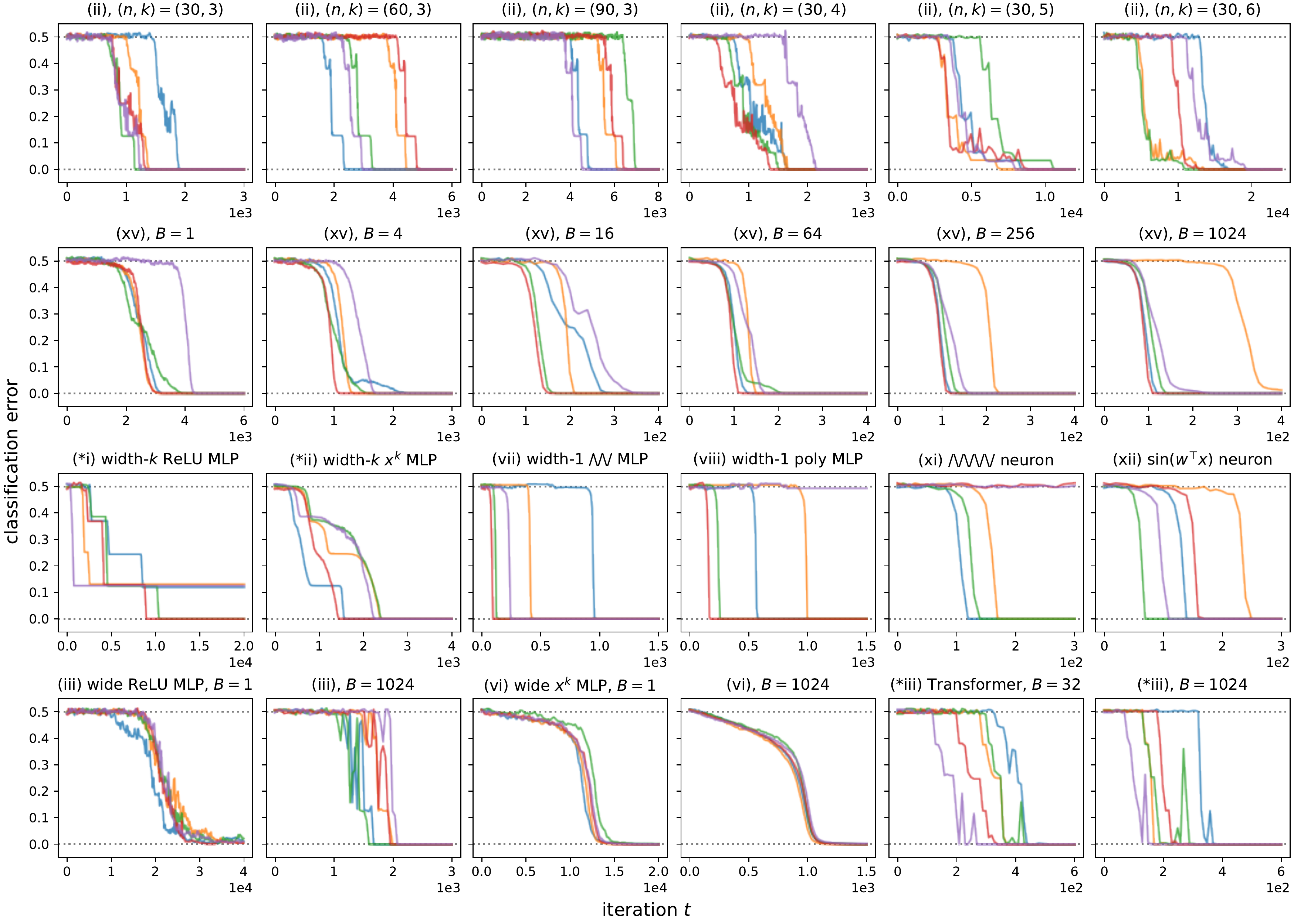}
    \caption{Additional training curves (on the $(50,3)$-parity problem, except the first row); full details are in Appendix~\ref{subsec:extra-training-curves-details}. \emph{1\textsuperscript{st} row:} The same architecture, initialization, and training algorithm (a width-$100$ ReLU MLP in this case), without an explicit sparse prior, adapts to the computational difficulty parameters $n,k$. \emph{2\textsuperscript{nd} row:} Our positive empirical results hold over a wide range of batch sizes $B$, all the way down to $B=1$. Training is unstable (more outliers) at very large and very small batch sizes. \emph{3\textsuperscript{rd} row:} Even the least overparameterized neural networks, which are barely wide enough to represent parity, converge with reasonable probability (sometimes failing to reach a global minimum). \emph{4\textsuperscript{th} row:} Larger models (width-$1000$ MLPs and $d_\mathrm{emb}=1024$ Transformers) are robust to a wide range of batch sizes. Note the lack of plateaus in setting (vi), which is revisited in Appendix~\ref{subsec:appendix-poly-no-plateau}. }
    \label{fig:extra-training-curves}
\end{figure}

\paragraph{Convergence times, success probabilities, and scaling laws.} We first present the full empirical results outlined in Section~\ref{sec:empirics} of the main paper. Figure~\ref{fig:converge-time-table} shows convergence times $t_c$ on small parity instances for all of the architecture configurations enumerated in Section~\ref{subsec:empirics-main}. In some of these settings, $t_c$ exhibits high variance due to unlucky initializations (see Figure~\ref{fig:converge-prob-table}); thus, we report $10^{\text{th}}$ percentile convergence times. Figure~\ref{fig:scaling-table} gives coarse-grained estimates for how $t_c$ scales with $(n, k)$, based on small examples.
For selected architectures, Figure~\ref{fig:extra-scaling-curves} shows how these convergence times scale with $n$ and $k$ more precisely: for small $n$, power law relationships $t_c \propto n^{\alpha \cdot k}$ (for small constants $\alpha$) are observed for all configurations. Note that for larger $n$, the exponent (i.e. the slope in the log-log plot) increases: with a constant learning rate and standard training, the $n^{\Theta(k)}$ does not continue indefinitely. All additional details are in Appendix~\ref{sec:experiment-details}.

\begin{figure}
    \centering
    \includegraphics[width=0.98\linewidth]{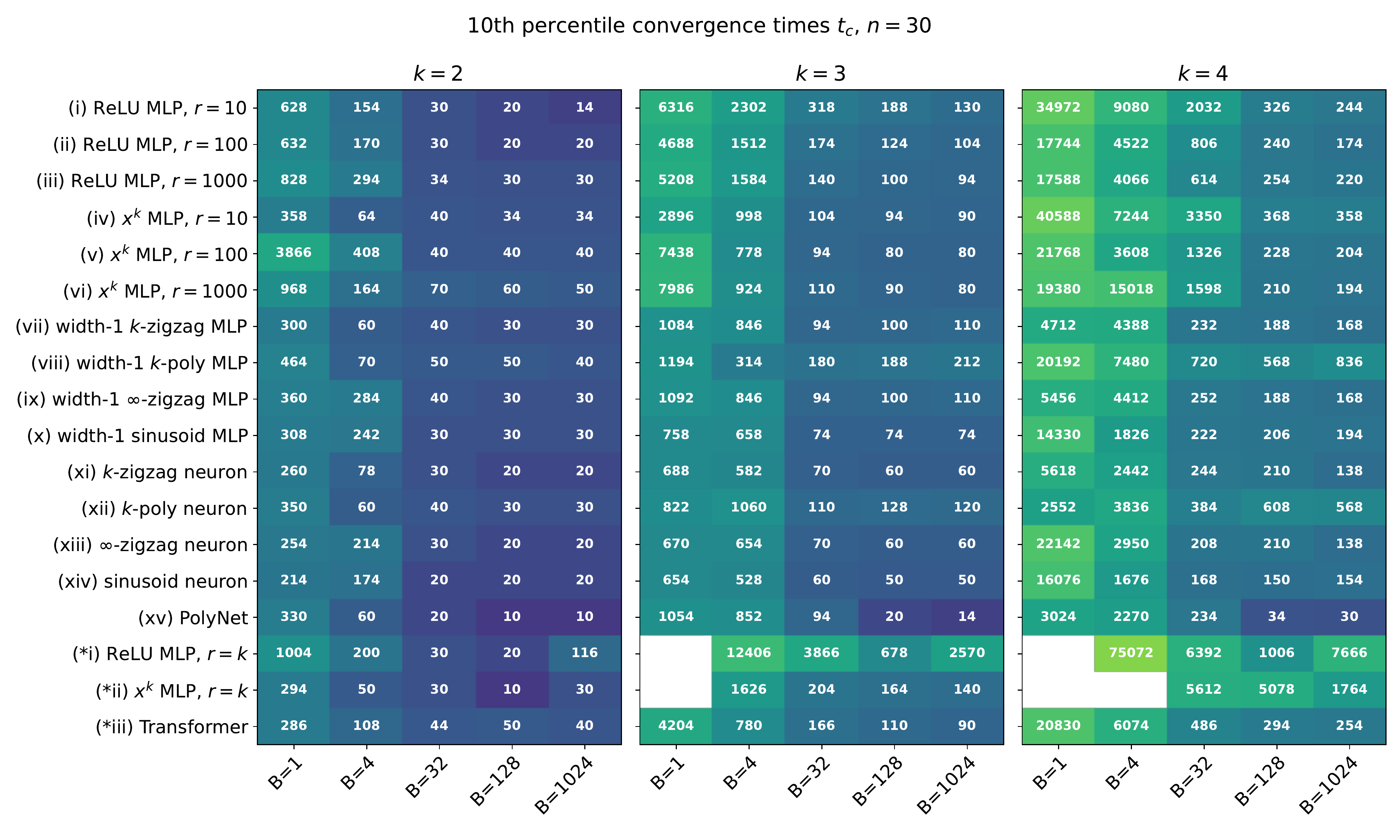}
    \vspace{-3mm}
    \caption{10th percentile convergence times $t_c$ of SGD (with $n = 30$, hinge loss, uniform initialization, and best learning rate $\eta \in \{1, 10^{-1}, 10^{-2}, 10^{-3}\}$), for various architectures, parity degrees $k$, and batch sizes $B$. See Appendix~\ref{sec:experiment-details} for full details.}
    \label{fig:converge-time-table}
\end{figure}

\begin{figure}
    \centering
    \includegraphics[width=0.98\linewidth]{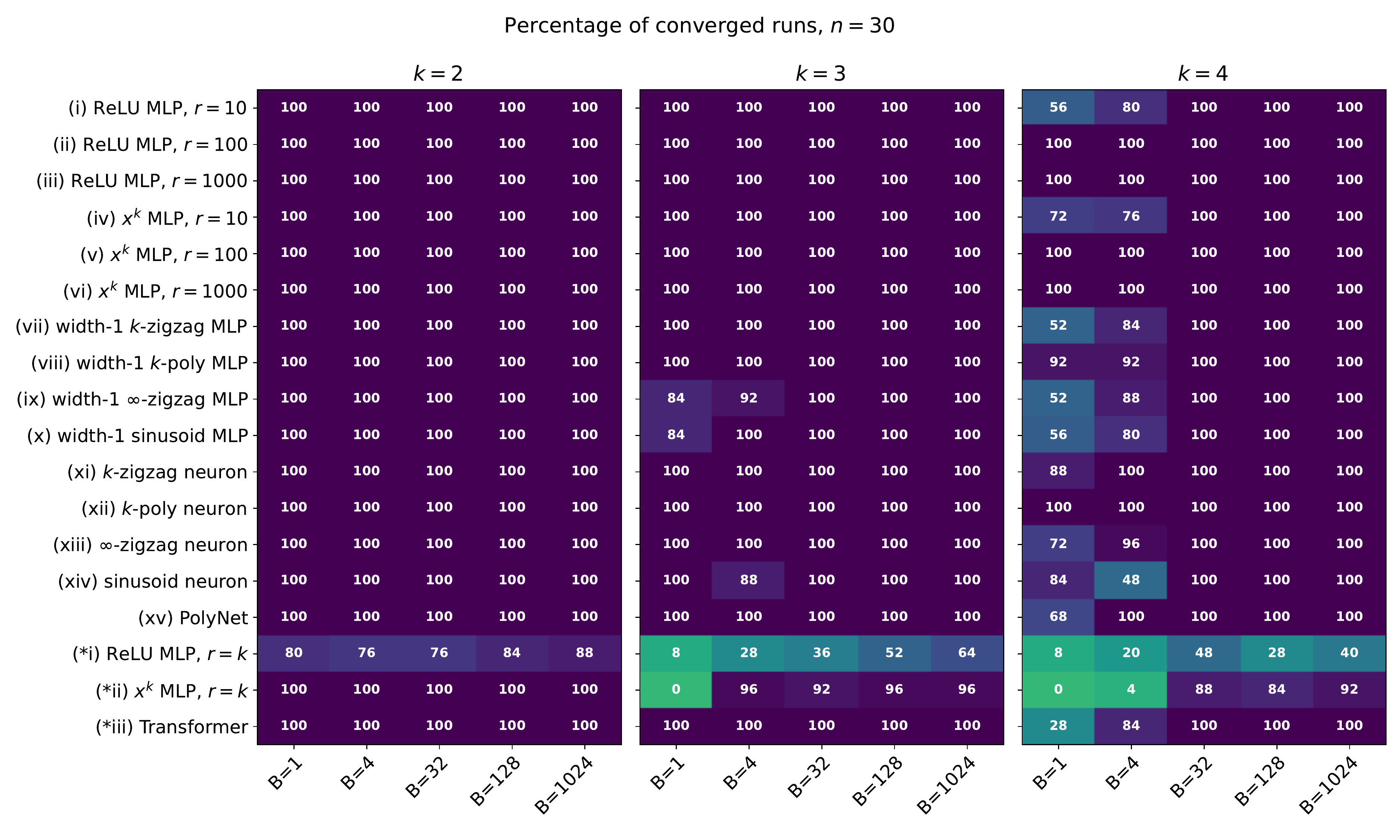}
    \vspace{-3mm}
    \caption{Percentage of converged runs ($t_c < 10^5$) out of 25 random trials, with the same architectures and training parameters as Figure~\ref{fig:converge-time-table}. With sufficiently large batch sizes, training is extremely robust in settings (i) through (xv). }
    \label{fig:converge-prob-table}
\end{figure}

\begin{figure}
    \centering
    \includegraphics[width=0.98\linewidth]{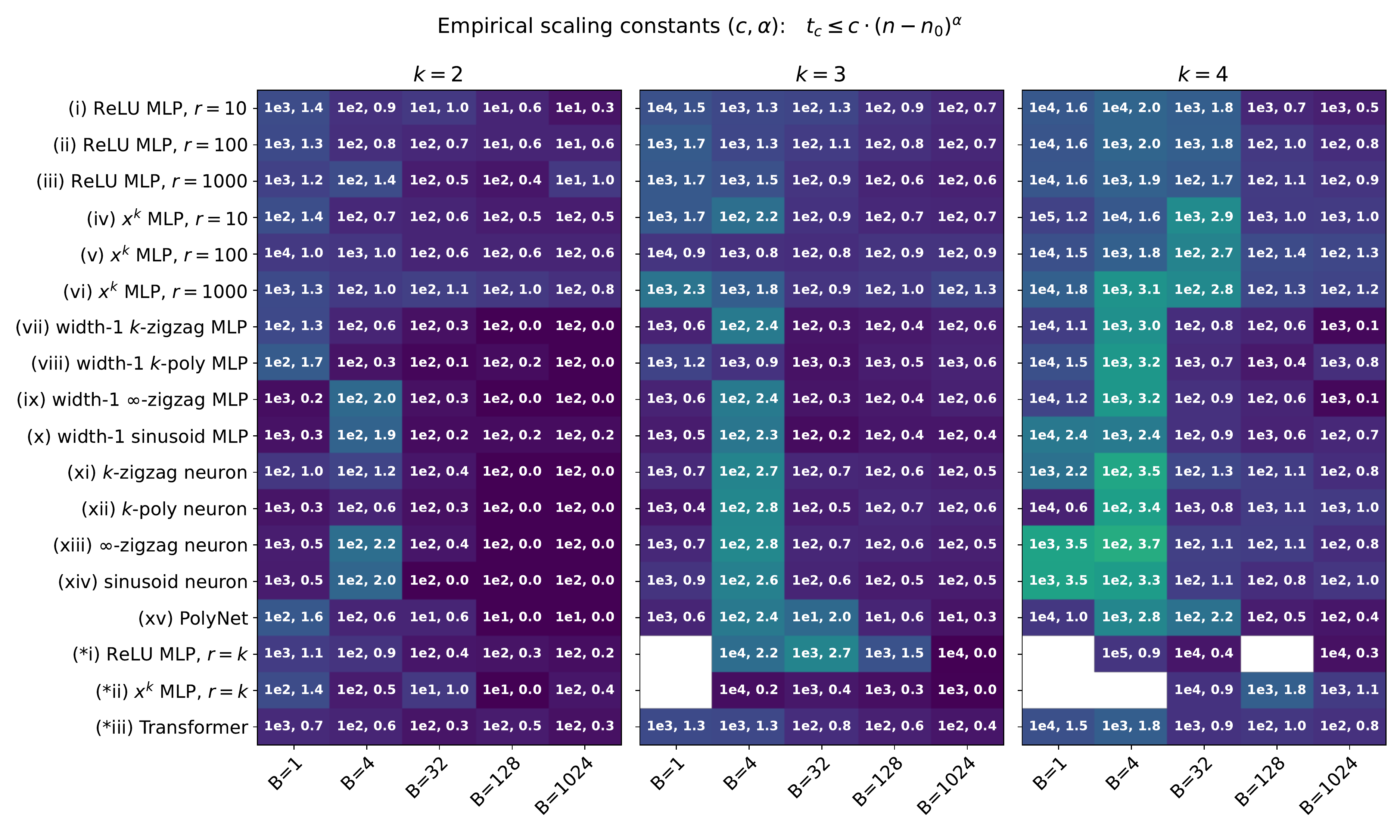}
    \caption{Coarse scaling estimates for the 10th percentile convergence time of SGD (with hinge loss, uniform initialization, and best learning rate) on every architecture: $c, \alpha$ such that $t_c \leq c \cdot (n - n_0)^{\alpha}$ on small parity instances $n \in \{10, 20, 30\}, k \in \{2, 3, 4\}$. Missing entries denote cases where $<10\%$ of trials were convergent for any $n$ (see Figure~\ref{fig:converge-prob-table}). Boxes are colored according to $\alpha$. See Appendix~\ref{sec:experiment-details} for full details.}
    \label{fig:scaling-table}
\end{figure}

\begin{figure}
    \centering
    \includegraphics[width=0.98\linewidth]{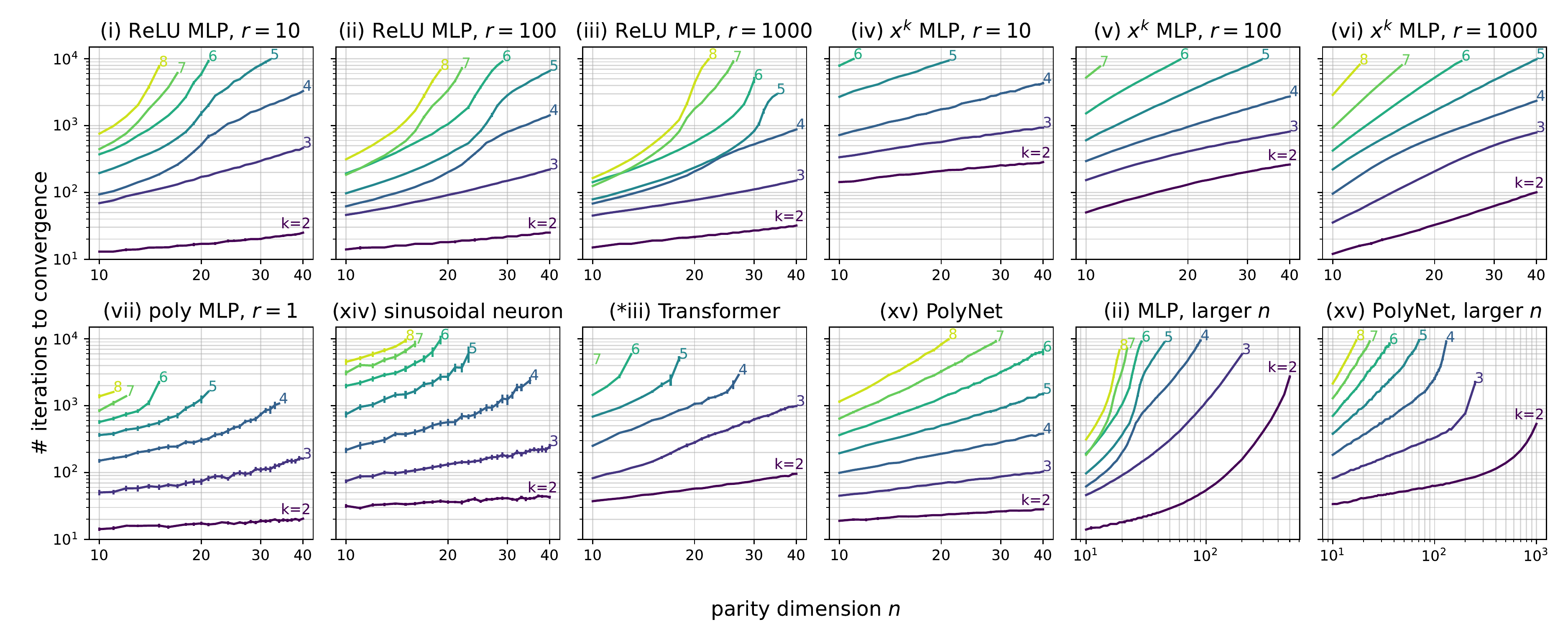}
    \caption{Finer-grained plots of convergence times for selected architecture configurations, for $n \in \{10, 11, \ldots, 39, 40\}$ and $k \in \{2, 3, \ldots, 8\}$. Medians over $1000$ runs are shown, with $95\%$ bootstrap confidence intervals. \emph{Top row:} standard MLP configurations (i) through (vi). \emph{Bottom row:} Miscellaneous settings ($1$-neuron networks; Transformers; larger $n$). For details on each setting, see Appendix~\ref{subsec:scaling-details}.}
    \label{fig:extra-scaling-curves}
\end{figure}

\paragraph{Guide to this section.} The remainder of Appendix~\ref{sec:appendix-experiments} expands on the various discussions and figures from Sections~\ref{sec:theory} and \ref{sec:discussion}.

\begin{itemize}[leftmargin=*]
    \item Appendix~\ref{subsec:bool} gives experimental evidence that Fourier gaps are present at iterates $w_t$ and initializations $w_0$ other than sign vectors, as well as for activation functions other than ReLU. This suggests that the feature amplification mechanism is robust, and illuminates directions for strengthening the theoretical results.
    \item Appendix~\ref{subsec:building-blocks} discusses how the building blocks of deep learning (activation functions, biases, initializations, learning rates, and batch sizes) play multiple, sometimes conflicting roles in this setting.
    \item Appendix~\ref{subsec:appendix-hidden-progress} provides additional white-box visualizations of hidden progress from Figure~\ref{fig:hidden-progress}.
    \item Appendix~\ref{subsec:appendix-width} explores the implications of the feature amplification mechanism for scaling model size-- namely, unlike random search, large width does not impart parallel speedups.
    \item Appendix~\ref{subsec:appendix-grok} shows that our results hold in the finite-sample setting (allowing for multiple passes over a training set of size $m$). In particular, we show that in low-data regimes, the models exhibit the \emph{grokking} phenomenon.
    \item Appendix~\ref{subsec:noise-experiments} extends our results to \emph{noisy} parities (which comprise the true ``emblematic computationally-hard problem'').
    \item Appendix~\ref{subsec:deep-quad} introduces a counterexample for ``layer-by-layer learning'', using parity distributions whose degrees are higher than those of the individual layers' polynomial activations. Preliminary experiments show that standard training works in this setting.
    \item Appendix~\ref{subsec:appendix-poly-no-plateau} presents examples of training curves for wide polynomial-activation MLPs, where, unlike the other settings, there is no initial plateau in the model's error.
\end{itemize}

\subsection{Fourier gaps at initialization and SGD iterates}
\label{subsec:bool}

Proposition~\ref{prop:mlp-from-gap} shows that if the function $x \mapsto \sigma'(w_0^\top x)$ has a Fourier gap at $S$, then $S$ can be identified from a batch gradient at initialization $w_0$ with $B = O(1/\gamma^2)$ samples.
Our end-to-end result (Theorem~\ref{thm:mlp-main}) requires ReLU activations and sign vector initialization, because 
the Fourier gap condition (Definition~\ref{def:fourier-gap}) arises from exact formulas for the Fourier coefficients of the majority function.
Stronger end-to-end theoretical guarantees would follow from analogous Fourier gaps in more general population gradients. This requires $x \mapsto \sigma'(w^\top x)$ to satisfy these conditions simultaneously:
\begin{itemize}[leftmargin=*]
    \item \emph{Fourier concentration:} upper bounds on the degree-$(k+1)$ coefficients $\hat f(S \cup \{i\})$, for $i \not \in S$. The term is borrowed from \citet{klivans2004learning}, who use upper bounds on Fourier coefficients of LTFs to approximate them (thus, learn halfspaces) with low-degree polynomials.
    \item \emph{Fourier anti-concentration:} lower bounds on the degree-$(k-1)$ coefficients $\hat f(S \setminus \{i\})$, for $i \in S$.
\end{itemize}

A natural question is:
\emph{which Boolean functions, other than majority, satisfy the $\gamma$-Fourier gap property at $S$, for $\gamma \geq n^{-\Omega(k)}$?}

We present some numerical evidence for large Fourier gaps in functions $x \mapsto \sigma'(w^\top x)$ other than majority, which arise from gradients of architectures other than ReLU MLPs with sign initialization. This shows that the mechanism of feature emergence is empirically robust in settings not fully explained by our current theory.
Establishing corresponding theoretical guarantees would enable stronger end-to-end global convergence guarantees for MLPs and other architectures.

In these experiments, population gradients were computed by brute force integration over all $2^n$ Boolean inputs $x \in \{\pm 1\}^n$. In all cases, for various choices of $\sigma, w$, we measure a slightly relaxed notion of Fourier gap $\Gamma$ in the population gradient:
\[ \Gamma_{\sigma,k}(w) := \max_{i \in [k]} | g_i | - \max_{i \not\in [k]} | g_i | ,
\qquad g := \E[y x \, \sigma'(w^\top x)]. \]
If $\Gamma > 0$, then \emph{one}\footnote{Replacing the first $\max$ in the definition of $\Gamma$ with $\min$ would give us the same notion of Fourier gap as Definition~\ref{def:fourier-gap}: if \emph{all} the relevant coordinates are larger than \emph{all} of the irrelevant ones, estimating the population gradient allows us to recover the relevant coordinates.} coordinate from the parity can be identified from $O\left(\frac{\log n}{\Gamma^2}\right)$ samples of the gradient at $w$.

\begin{figure}
    \centering
    \includegraphics[width=0.98\linewidth]{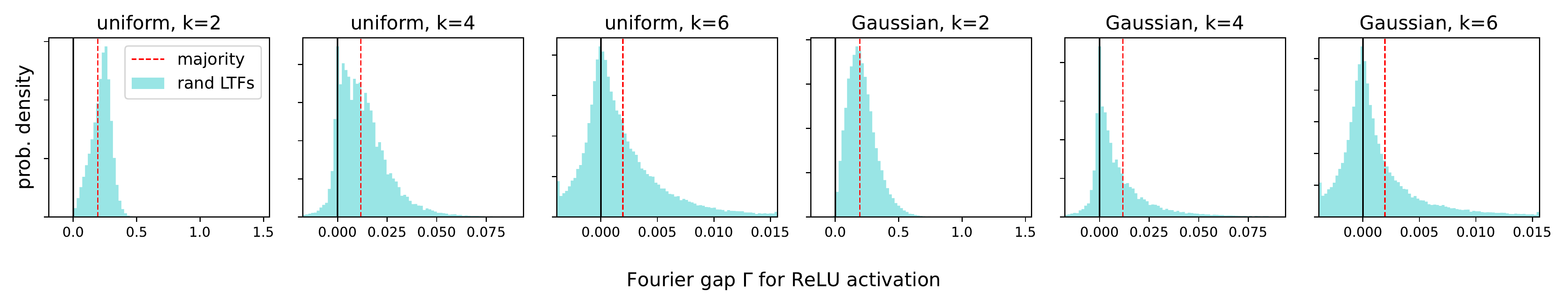}
    \caption{Distributions of exact Fourier gaps $\Gamma_{\sigma,k}(w)$ when $\sigma = \mathrm{ReLU}$, and $w$ is an i.i.d. \{uniform, Gaussian\} random vector. These are derived from the Fourier coefficients of the corresponding random LTFs, computed here in exponential time for $n=15$.}
    \label{fig:random-fourier-gaps}
\end{figure}

\paragraph{Random LTFs.}
For ReLU activations and symmetric Bernoulli (i.e. random sign) initialization $w_i \sim \mathrm{Unif}(\{\pm c\}^n)$, the Fourier coefficients are the same as those of majority; thus, there is a Fourier gap of $\gamma \geq n^{-\Omega(k)}$ at every set $S$ (and the same is true of $\Gamma$). We probe the Fourier gaps of linear threshold functions (LTFs) under other ubiquitous initializations: i.i.d. uniform and Gaussian. These are shown in Figure~\ref{fig:random-fourier-gaps}, which indicates (at least for small $n,k$) that the Fourier gap is comparable to that of majority with non-negligible probability.

\paragraph{Random non-LTFs.} The successful convergence of architectures with smoother activations (in the parity setting and beyond) motivates the question of whether large Fourier gaps are present in population gradients corresponding to functions other than LTFs. Figure~\ref{fig:other-fourier-gaps} \emph{(left)} shows that this is the case for sinusoidal activations.

\begin{figure}
    \centering
    \includegraphics[width=0.48\linewidth]{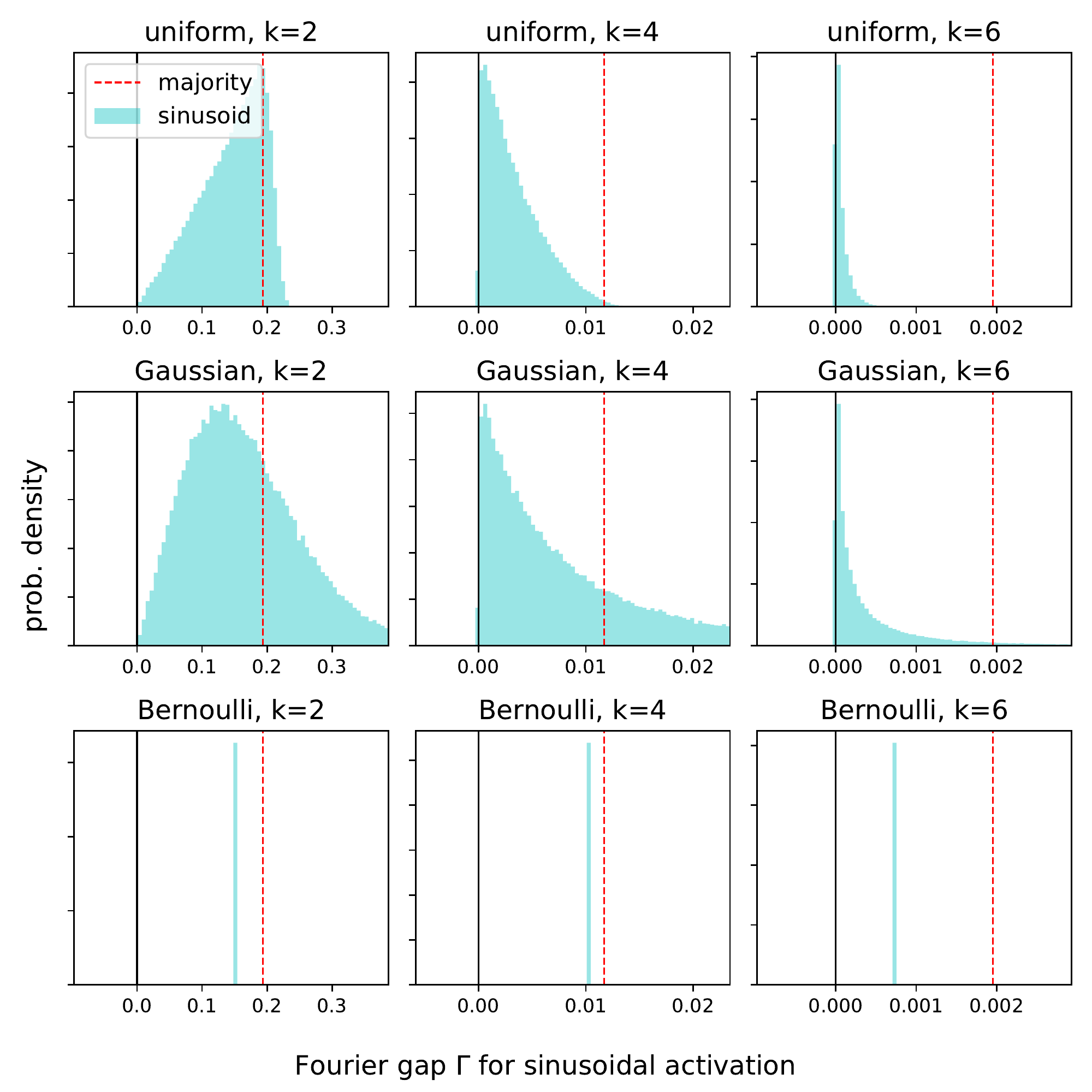}
    \includegraphics[width=0.48\linewidth]{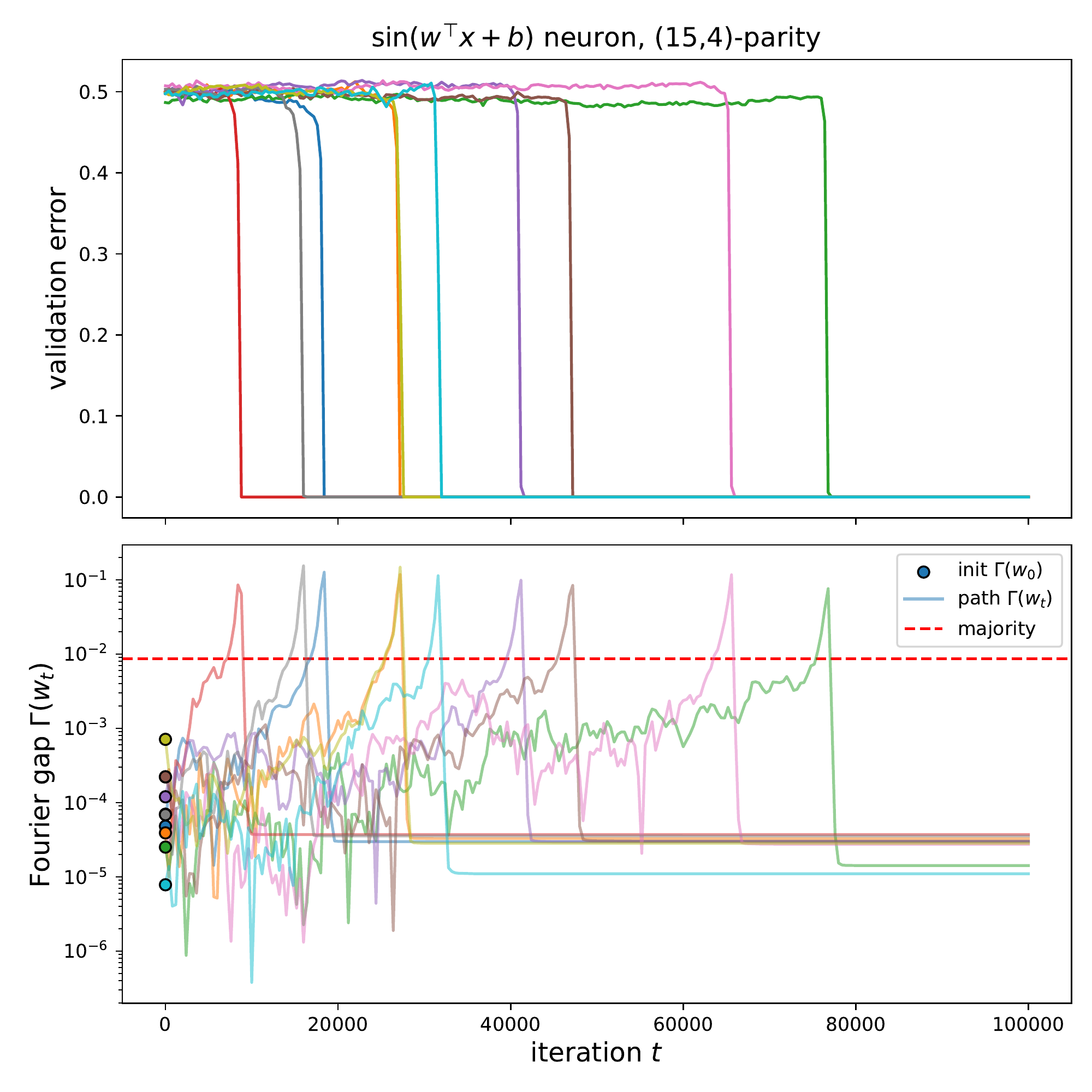}
    \caption{Numerical evidence for Fourier gaps beyond random LTFs. \emph{Left:} Fourier gaps of exact population gradients $\E_{\Dcal_S}[y x_i \sin(w^\top x/\sqrt{n})]$ induced by sinusoidal activations, for random initializations $w$. In these small cases, they are comparable the the Fourier gaps of majority (red dashed line). \emph{Right:} Fourier gaps of a sinusoidal neuron's population gradients along the SGD optimization path (10 trials shown). The Fourier gap is consistently positive at initialization, but somewhat smaller than that of majority (red dashed line). Interestingly, it amplifies through the course of training.}
    \label{fig:other-fourier-gaps}
\end{figure}

\paragraph{Boolean functions along the SGD path.} Finally, to further close the gap between Theorem~\ref{thm:mlp-main} and the empirical results, it is necessary to address the fact that SGD accumulates gradients with respect to \emph{time-varying} iterates, while our analysis approximates this using a large-batch gradient at a static iterate $w_0$. In fact, SGD seems to help in some cases: Figure~\ref{fig:other-fourier-gaps} \emph{(right)} shows that when training a sinusoidal neuron, SGD amplifies the initial Fourier gap.

\subsection{Counterintuitive roles of the building blocks of deep learning}
\label{subsec:building-blocks}

Even in this simple problem setting, the simultaneous computational and statistical considerations lead to counterintuitive consequences for the optimal configurations of architectures and algorithms for this setting. We encountered the following, in the search for architecture configurations for the empirical study:

\begin{itemize}[leftmargin=*]
\item \emph{Activation functions.} This mechanism of features emerging via Fourier gaps (see Definition~\ref{def:fourier-gap}) is strongest with non-smooth activations such as the ReLU, whose derivatives are discontinuous threshold functions. This is an orthogonal consideration to representational capacity and mitigation of local minima (under which one might conclude that degree-$k$ polynomial activations are optimal). In summary, in feature learning settings where the Fourier gaps and low-complexity solutions are simultaneous relevant, there is a \emph{sharpness-smoothness tradeoff} for the activation function.
\item \emph{Biases.} The symmetry of the majority function (as well as all unbiased LTFs) causes its even-degree Fourier coefficients to be zero; thus, certain variants of the setups in Section~\ref{sec:empirics} fail for odd $k$. Bias terms (trainable or fixed) are necessary to break this symmetry, in theory and practice. Simultaneously, biases serve the more conventional role of shifting the loss surface; see Section~\ref{subsubsec:width-1-details} for how this affects the details of how the biases were chosen in the experiments.
\item \emph{Initializations.} The role of the initialization distribution is similarly twofold in this setting: $w_0$ should be close to the desired solution $w^*$, but it must also be selected such that SGD will successfully amplify the Fourier gap. A third consideration, which we do not attempt to study in this work, is that multiple randomly-initialized neurons will tend to learn the correct features at different times (see the weight trajectory visualizations in Figure~\ref{fig:hidden-progress} and Figure~\ref{fig:progress-extra}, as well the staircase-like training curves seen for MLPs in Figure~\ref{fig:intro} \emph{(left)}). We expect this \emph{symmetry breaking} phenomenon to be present in more complex feature learning settings. Finally, as shown in the training curves from setting (vi) in Figure~\ref{fig:extra-training-curves}, and in more detail in Appendix~\ref{subsec:appendix-poly-no-plateau}, the choice of activation function influences the qualitative behavior of the training curves: namely, whether the plateaus disappear at large widths and batch sizes.
\item \emph{Batch sizes and learning rates.} The empirical and theoretical results both suggest that SGD uses independent samples to gradually amplify a signal containing the correct features in the initial population gradient of the correlation $\left( \nabla_w \E[ -yx_i \sigma'(w^\top x) ] \right) \vert_{w=w_0}$. However, it would be truer to this mechanism to stay at the initialization $w_0$ until the algorithm has accumulated enough data to discern the correct indices (equivalently, scale up the batch size); in contrast, standard training takes gradients with respect to $w_t$ along the SGD path.
We hypothesize that the bias incurred by this drifting $w_t$ (and thus drifting population gradient) accounts for the degradations seen in Figure~\ref{fig:scaling-laws} \emph{(right)} and Figure~\ref{fig:extra-scaling-curves}. However, Figure~\ref{fig:other-fourier-gaps} \emph{(right)} shows that the movement of SGD can be helpful, amplifying the Fourier gap.
\end{itemize}

\subsection{Hidden progress measures}
\label{subsec:appendix-hidden-progress}

In this section, we provide an expanded discussion and plots for the investigations outlined in Figure~\ref{fig:hidden-progress} and the \emph{``hidden progress measures''} section in Section~\ref{sec:discussion}.

For a neural network training pipeline which outputs a sequence of iterates $\theta_0, \ldots, \theta_T \in \Theta$, we define a \emph{progress measure} $\rho : \Theta \rightarrow \RR$ to be any function of the training algorithm's state\footnote{In addition to the model's parameters, the full state of the training procedure should also include the auxiliary variables defined by the optimization algorithm; two ubiquitous ones in deep learning are the momentum vector and the adaptive preconditioner. Here, we only consider vanilla SGD, which maintains no auxiliary variables.} which is predictive of the time to convergence (i.e. conditioned on $\theta_t$, the random variables $\rho$ and $t_c - t$ are not independent).
By this definition, the only algorithms which have no progress measures are those whose convergence times $t_c$ are \emph{memoryless} (independent of $\theta_t$).

Note that many trivial progress measures exist: an example to keep in mind is that for the algorithm which exhaustively enumerates over a deterministic list of hypotheses (say, the possible $k$-element subsets $S$ in lexicographical order) and terminates when it finds the correct one, the current iteration $t$ is a progress measure. Thus, the purpose of demonstrating hidden progress measures $\rho$ is \emph{not} to provide further evidence that SGD finds the features using Fourier gaps. Rather, it is to (1) further refute the hypothesis of SGD performing a memoryless Langevin-like random search, and (2) provide a preliminary exploration of how progress can be quantified even when the natural metrics of loss and accuracy appear to be flat.

\paragraph{Fourier gaps over time.} The Fourier gap visualizations in Section~\ref{subsec:bool} already provides an example of a quantity which varies continuously as the model trains, despite no apparent progress in the loss and accuracy curves. However, none of our theoretical analyses capture the empirical observation that this quantity tends to amplify over time. Below, we consider other quantities which reveal hidden progress in parity learning, which are more straightforward and closer to our analyses.

\begin{figure}
    \centering
    \includegraphics[width=0.98\linewidth]{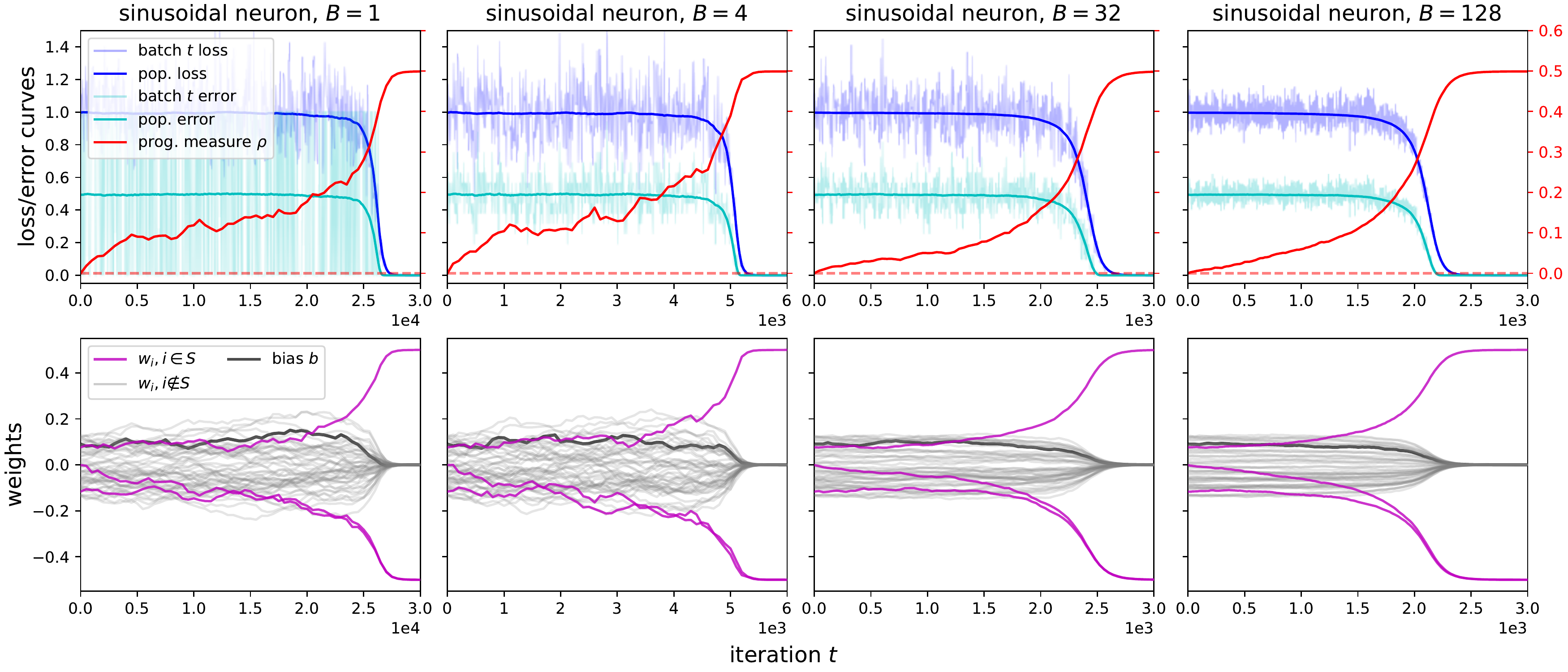}
    \vspace{3mm}
    \includegraphics[width=0.98\linewidth]{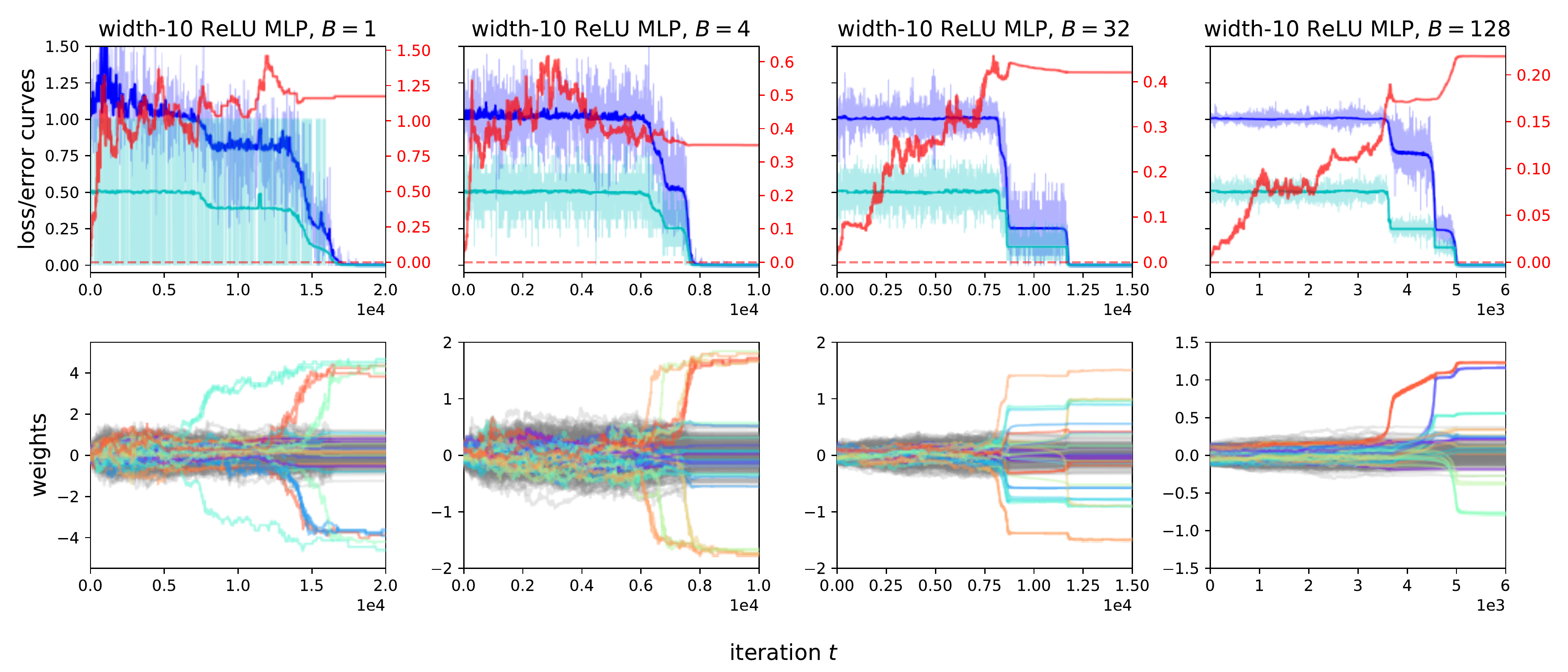}
    \caption{Supplementary plots for visualizations of the optimization trajectory $w_t$, and the hidden progress measure $\rho(w_{0:t})$. In the MLP plots, the single scalar $\rho$ is the $\infty$-norm of the entire first layer $W$, and weights are color-coded by row (i.e. neuron). With small batch sizes $B \in \{1,4\}$, per-iteration losses are averaged over a short window (lengths $16$ and $4$, respectively).}
    \label{fig:progress-extra}
\end{figure}

\paragraph{Weight movement.} The most direct observation of hidden progress simply comes from the movement of the neurons' weights at the relevant indices: that is, for a single neuron's weights $w_t \in \RR^n$, the quantity $\rho([w_t]_i) : i \in S$. In the main paper, Figure~\ref{fig:hidden-progress} \emph{(left, center)} directly visualizes the evolution of the weights $w_t$ for a single sinusoidal neuron (with a bias, but no second layer). Figure~\ref{fig:progress-extra} supplements these plots from the main paper with additional plots of weight trajectories, at different batch sizes $B$, as well as a width-10 MLP architecture. As seen in these plots, progress only becomes visible in the loss once the relevant weights become larger than all of the irrelevant weights.

\paragraph{$\ell_\infty$ path length.} Finally, we present an example of a \emph{single} measurement of the optimization path which captures hidden progress in this setting, which can be plotted alongside loss and accuracy curves. For any iterates of a neuron's weights $w_t \in \RR^n$, we choose the $\infty$-norm of the movement from initialization: $\rho(w_{0:t}) := \norm{w_t - w_0}_\infty$. We present a brief intuitive sketch of the motivation for this choice of $\rho(\cdot)$, and some additional visualizations.

From the theoretical analysis, under the approximation $\nabla \ell(w_t) \approx \nabla \ell(w_0)$ (so that feature learning is performed by estimating the initial population gradient to high precision), we can think of the $i$-th coordinate of $w_t$ as a biased random walk with constant variance $\sigma^2$; the Fourier gap condition entails that biases $\beta_i$ of these random walks are large when $i \in S$. Then, this choice of $\rho$ is an estimate for the drift term $t \cdot \max_i |\beta_i|$, which is larger than the $\sigma \sqrt{t}$ contribution of the variance for sufficiently large $t$.

This progress measure is shown alongside the loss curves in Figure~\ref{fig:progress-extra}, in red. We do not attempt to characterize the dynamics of $\rho$; we only note that they are clearly distinguishable from the maximum of $n$ unbiased random walks, even when SGD appears to make no progress in terms of loss and accuracy.
Studying hidden progress measures in deep learning more quantitatively, as well as in more general settings, presents a fruitful direction for future work.

\subsection{Convergence time vs. width}
\label{subsec:appendix-width}

We provide supplementary plots for the experiment outlined in Figure~\ref{fig:extra-experiments} \emph{(left)}, which probes whether extremely large widths ($r \gg n^k$) afford factor-$r$ parallel speedups of the parity learning mechanism (as one would expect from random search). On 3 parity instances $n \in \{30,40,50\}, k=3$, we varied the width $r \in \{1, 2, 3, \ldots, 9, 10, 30, 100, 300, \ldots, 10^6, 3\times 10^6\}$, keeping all other parameters the same $(B = 128, \eta = 0.1)$.

\paragraph{Results.} We did not find evidence of such parallel speedups over $1000$ runs in each setting; see Figure~\ref{fig:width-extra}. This serves as further evidence that the mechanism by which standard training solves parity is best understood as deterministic and sequential, rather than behaving like random search over size-$k$ subsets. A benefit of width appears to be \emph{variance reduction}: the upper tail of long convergence times is mitigated by a large number of randomly-initialized neurons.

\begin{figure}
    \centering
    \includegraphics[width=0.98\linewidth]{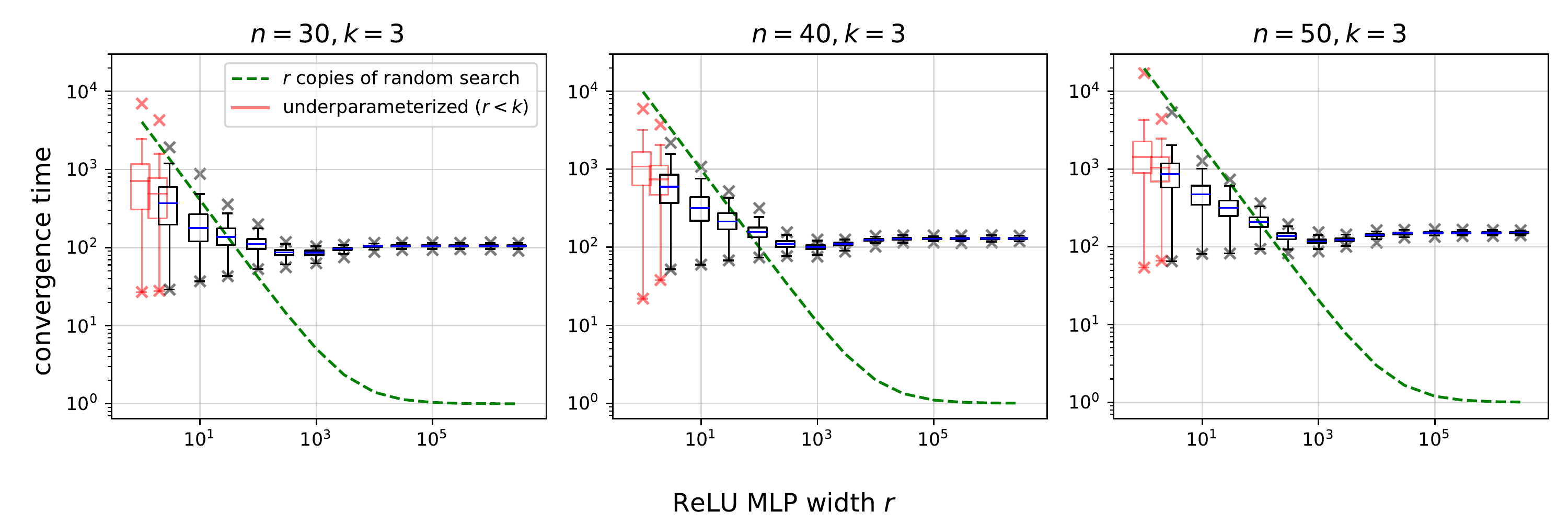}
    \caption{Number of iterations for MLPs (with standard initialization and training) to converge on sparse parity problems, in terms of width $r$. Boxes denote interquartile ranges over 1000 runs; whiskers denote $\pm 1.5 \cdot \mathrm{IQR}$; $\times$ markers denote minimum and maximum outliers. Underparameterized models ($r < k$) are shown in red, and considered ``converged'' at $55\%$ accuracy. Scaling $r$ this way does not lead to $r \times$ parallel speedups, like the expected success time for $r$ copies of random search (shown in green for comparison). }
    \label{fig:width-extra}
\end{figure}

\subsection{Learning and grokking in the finite-sample case}
\label{subsec:appendix-grok}

\begin{figure}
    \centering
    \includegraphics[width=0.98\linewidth]{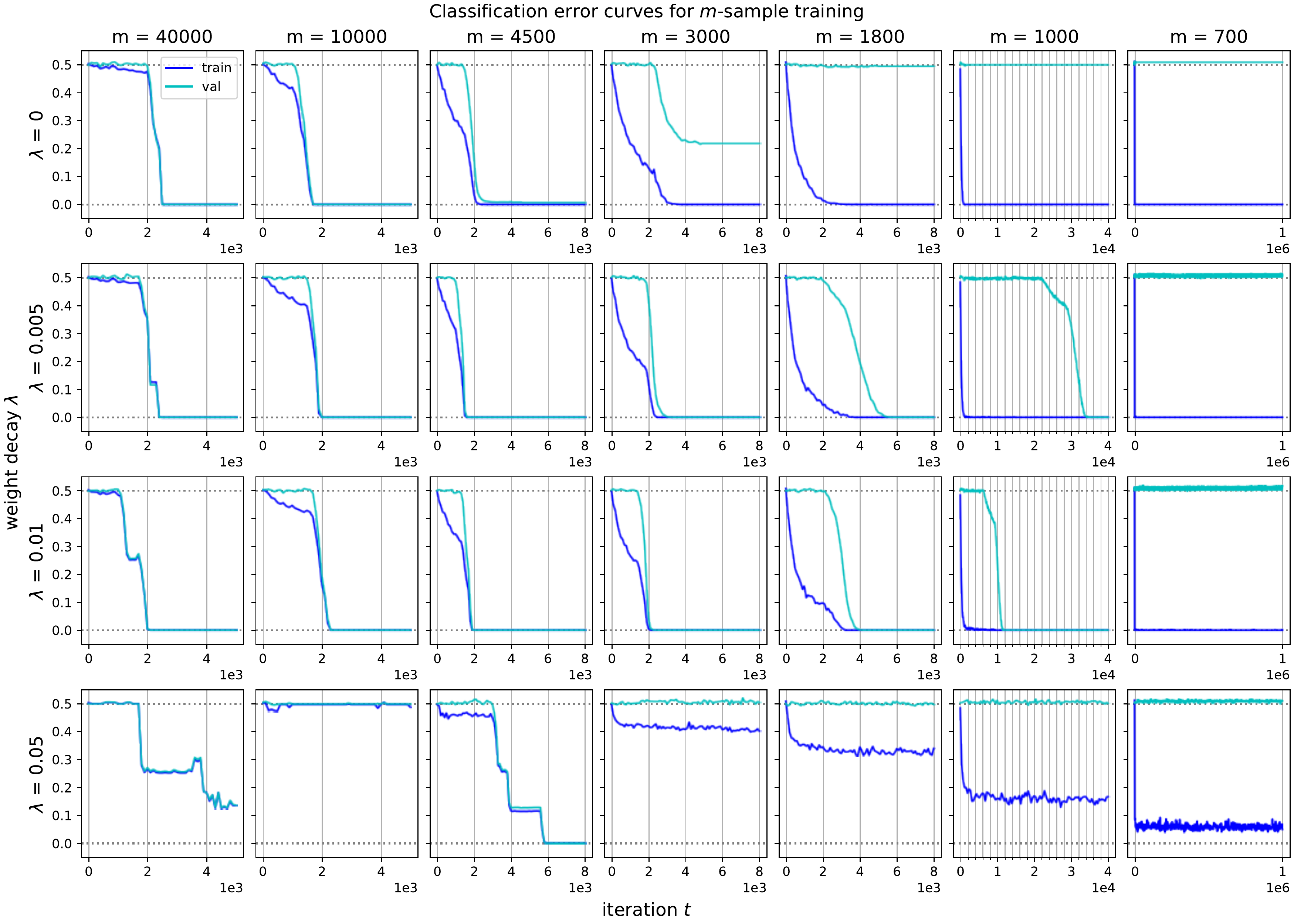}
    \caption{Supplementary plots for the finite-sample setting. The same configuration (width-$100$ ReLU MLP, $B = 32, \eta = 0.1$) varying the sample size $m$ (decreasing from left to right) and weight decay $\lambda$ (increasing from top to bottom). When $m$ is sufficiently large (much larger than the statistical threshold $\Theta(k \log n)$), generalization error is negligible. When $m$ is too small, the model fails to train. In between, we observe eventual convergence to the correct solution, with training curves exhibiting the \emph{grokking} phenomenon. Weight decay governs a statistical-computational tradeoff in this setting: larger $\lambda$ improves generalization, but can cause optimization to fail (bottom row).}
    \label{fig:grok-extra}
\end{figure}

We provide some supplementary plots for the experiments outlined in Figure~\ref{fig:extra-experiments} \emph{(right)}. In these settings, a fixed architecture (width-$100$ MLP with ReLU activations) is trained with minibatch SGD in an otherwise fixed configuration (hinge loss, learning rate $\eta = 0.1$, batches of size $B=32$) on a finite training sample of size $m$. We also vary a \emph{weight decay} parameter $\lambda$.

As shown in Figure~\ref{fig:grok-extra}, the weight decay parameter $\lambda$ modulates a delicate computational-statistical tradeoff: it improves generalization (expanding the range of $m$ for which training eventually finds the correct solution), but the model fails to train at large values of $\lambda$. For small $m$ and appropriately tuned $\lambda$, we observe \emph{grokking}: the model initially overfits the training data, but finds a classifier that generalizes after a large number of iterations.

\subsection{Learning noisy parities}
\label{subsec:noise-experiments}

The other empirical results in this work focus on noiseless parity distributions $\Dcal_S$, to reduce the number of sources of variance and degrees of freedom. However, the setting of \emph{random classification noise} is important for several reasons. In this section, we briefly demonstrate that our results extend to this case. Let $\Dcal_S^{(\eps)}$ denote the \emph{$(n,k,\eps)$-noisy parity distribution}, defined by flipping the labels in the $(n,k)$-parity distribution $\Dcal_S$ independently with probability $\frac{1}{2} - \eps$. Note that when $\eps = 0$, the labels are completely random (thus, $S$ cannot be learned). By a standard PAC-learning argument, when $0 < \eps \leq \frac{1}{2}$, the statistical limit for identifying $S$ from i.i.d. samples from $\Dcal_S$ scales as
$\Theta\left( \frac{k \log n}{\eps^2} \right)$.

\paragraph{Motivations.} First, learning parities from \emph{noisy} samples is the true ``emblematic computationally-hard distribution''. Without noise, there is a non-SQ algorithm which avoids the exponential-in-$k$ computational barrier: Gaussian elimination can identify $S$ in $O(n^3)$ time and $\Theta(n)$ samples. Second, viewing parities as an idealized setting in which to understand training dynamics, resource scaling, and emergence in deep learning, it is important to see that this phenomenon is robust to label noise.

\paragraph{Theory.} It is easy to incorporate label noise into the theoretical analysis, which works with correlations of the form $\E_{\Dcal_S}[y \, f(x)]$; each coordinate of the population gradient of the correlation loss is a quantity of this form. In the noisy case, these quantities are replaced with
\[\E_{(x,y) \sim \Dcal_S^{(\eps)}}[y \, f(x)] = \eps \cdot \E_{(x,y) \sim \Dcal_S}[y \, f(x)].\]
In particular, when architecture's population gradient has a Fourier gap with parameter $\gamma$ in the noiseless case implies a Fourier gap with parameter $\eps \cdot \gamma$.

\begin{figure}
    \centering
    \includegraphics[width=0.98\linewidth]{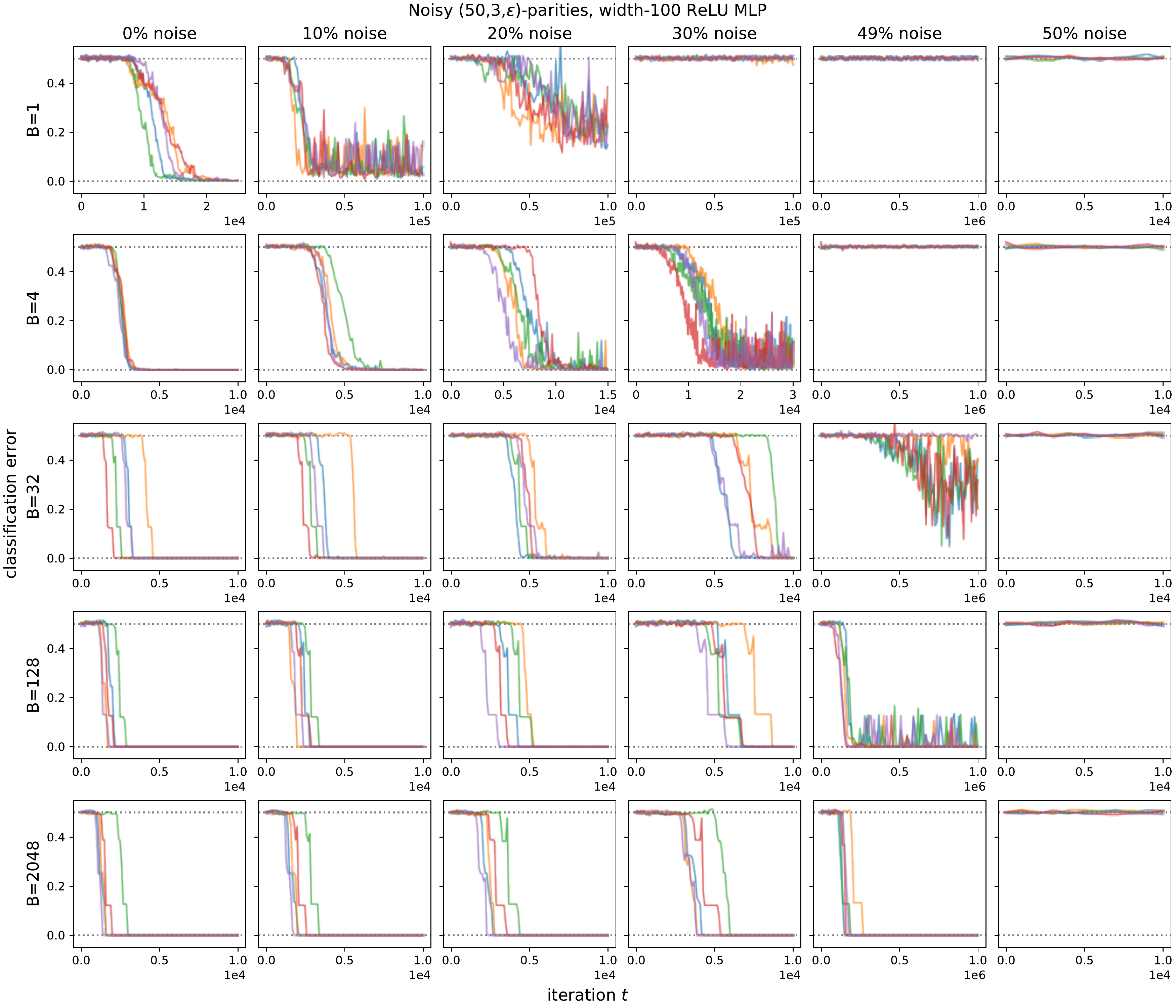}
    \caption{Training curves (over $5$ random seeds) for a width-$100$ ReLU MLP on noisy $(50,3,\eps)$-parity learning problems at various batch sizes $B$ and noise levels (flipping labels with probability $p$, so that $\eps = 1 - 2p$). The models learn the features and converge successfully (measured by accuracy on the noiseless distribution), even with $49\%$ of labels flipped randomly (i.e. $\eps = 0.01$). This is a preliminary illustration that the phenomena investigated in this paper are robust with respect to i.i.d. label noise. Note that the scale of $t$ is much larger for small batches and high noise. }
    \label{fig:noisy}
\end{figure}

\paragraph{Experiments.} We find that the experimental findings are robust to label noise, in the sense that models are able to obtain nontrivial (and sometimes $100\%$) accuracy;
see Figure~\ref{fig:noisy} for some training curves under various settings of $\eps$. This provides concrete evidence against the (already extremely dubious) hypothesis that neural networks, with standard initialization and training, learn noiseless parities by implicitly simulating an efficient algorithm such as Gaussian elimination. Note that with a constant learning rate (here, $\eta = 0.1$) and label noise, the iterates of SGD do not always converge to $100\%$ accurate solutions.

\subsection{Counterexample for layer-by-layer learning}
\label{subsec:deep-quad}

\paragraph{Notation.} Consider an $L$-layer MLP with activation $\sigma$, parameterized by weights and biases
\[\theta = (W_1, b_1, \ldots, W_{L-1}, b_{L-1}, u),\]
and defined by
\[ f_\mathsf{mlp}(x; \theta) := (f_L \circ f_{L-1} \circ \cdots \circ f_{2} \circ f_1)(x), \]
where $f_i$ denotes the function $z \mapsto \sigma(W_i z + b_i)$ for $1 \leq i \leq L-1$, and $f_L$ denotes $z \mapsto u^\top z$.
The shapes of the parameters $W_i, b_i, u$ are selected such that each function composition is well-defined.
Let the \emph{intermediate activations} at layer $i$ be denoted by
\[ z_i(x; \theta) := (f_i \circ \ldots \circ f_{1})(x). \]
Finally, $r_i$ (the \emph{width at layer $i$}) refers to the dimensionality of $z_i$ as defined above.

\paragraph{Construction where layer-by-layer learning is impossible.} Notice that when $\sigma$ is a degree-$2$ polynomial (say, $\sigma(z) = z^2$), an $L$-layer MLP can represent parities up to degree $2^{L-1}$-- for example, a $3$-layer MLP (which composes quadratic activations twice) can represent a $4$-sparse parity as a $2$-sparse parity of $2$-sparse parities. However, Equation~(\ref{eq:parity-orth}) implies the following:
\begin{itemize}[leftmargin=*]
    \item An individual layer cannot represent a parity of $k > 2$ inputs.
    \item The population gradient (as in Equation~(\ref{eq:pop-grad}) is zero (since every coordinate of the gradient is the correlation between a $k$-wise parity and a polynomial of degree $2$).
\end{itemize}
Thus, this setting serves as an idealized counterexample for layer-by-layer learning: if SGD succeeds on parities with higher degree than the architecture's polynomial activations, it must do so by an end-to-end mechanism. Intuitively, earlier layers can only make progress by \emph{knowing how their outputs will be used downstream}. Concretely, consider the population gradient of the correlation loss, with respect to a first-layer neuron's weights $w := (W_1)_{j,:}$. With layer-by-layer training, this gradient contains no information:
\[ \nabla_w \E[ -y x_i \, \underbrace{\sigma'(w^\top x)}_{\text{degree $1$}} \, u_j ] = 0.\]
However, in end-to-end training, the presence of downstream layers removes this barrier:
\[ \nabla_w \E\Big[ -y x_i \, \underbrace{\sigma'(w^\top x)}_{\text{degree $1$}} \, \underbrace{\frac{\partial f_L \circ \ldots \circ f_2}{\partial (z_1)_j}}_{\text{degree $2^{L-2} - 1$}} \Big], \]
giving the gradient greater representation capacity (in terms of polynomial degree). The question remains of whether end-to-end training works in this setting, which we resolve positively in small experiments.

\paragraph{Results: end-to-end training works empirically.} We empirically observed successful training (to $100\%$ accuracy) in a few settings (with SGD, learning rate $\eta = 0.01$, batch size $B=32$, and default uniform initialization as described in Appendix~\ref{subsec:arch-configs}):
\begin{itemize}[leftmargin=*]
\item $L=3, n \in \{10, 20, 30\}, k \in \{1, 2, 3, 4\}$. Small widths suffice: $(r_1, r_2) = (2, 1)$. Over 10 random seeds, all models converged within $20000$ iterations. 
\item $L=4, n \in \{10, 20, 30\}, k \in \{1, 2, 3, 4, 5, 6\}$. Widths were chosen to be slightly larger for stability: $(r_1, r_2, r_3) = (10, 10, 1)$. Over 10 random seeds, all models converged within $50000$ iterations. Additionally, models trained on $(n,k) \in \{ (10,7), (20, 7), (30, 7), (10, 8) \}$ converged within $500000$ iterations.
\end{itemize}
As a sanity check, the models failed to converge in experimental setups where $k > 2^{L-1}$: $(L = 2, k \geq 3)$ and $(L = 3, k \geq 5)$.

\paragraph{Discussion.} This construction serves as a simple counterexample to the \emph{``deep only works if shallow is good''} principle of \cite{malach2019deeper}, demonstrating a case where a deep network can get near-perfect accuracy even when greedy layerwise training (e.g. \citep{belilovsky2019greedy}) cannot beat trivial performance. It remains to characterize these positive empirical results theoretically, as well as to investigate whether there are pertinent analogues in real data distributions.

\subsection{Lack of plateaus for wide polynomial-activation MLPs}
\label{subsec:appendix-poly-no-plateau}

An interesting qualitative observation from the training curves in Figure~\ref{fig:extra-training-curves} is that the validation accuracy curves in setting (vi) (width-1000 polynomial-activation MLPs) do not follow the same ``plateau'' or ''staircase'' pattern as the others. Figure~\ref{fig:poly-no-plateau} shows a few additional examples of training curves for polynomial-activation MLPs, varying the width $r$ and batch size $B$. We find that the rate of descent of the validation error increases with both of these parameters; note that this does not occur with ReLU activations (where there are sharp phase transitions between plateaus at all batch sizes).

This constitutes an exception to this paper's theme of ``hidden progress'' behind flat loss (or error) curves: with enough overparameterization \emph{and} ``over-sampling'', the continuous progress of SGD in this setting is no longer hidden, and manifests in the training curves. This phenomenon seems to be specific to certain activation functions (i.e. $x^k$ but not ReLU); we leave it for future work to understand why and when it occurs, as well as potential practical implications.

\begin{figure}
    \centering
    \includegraphics[width=0.98\linewidth]{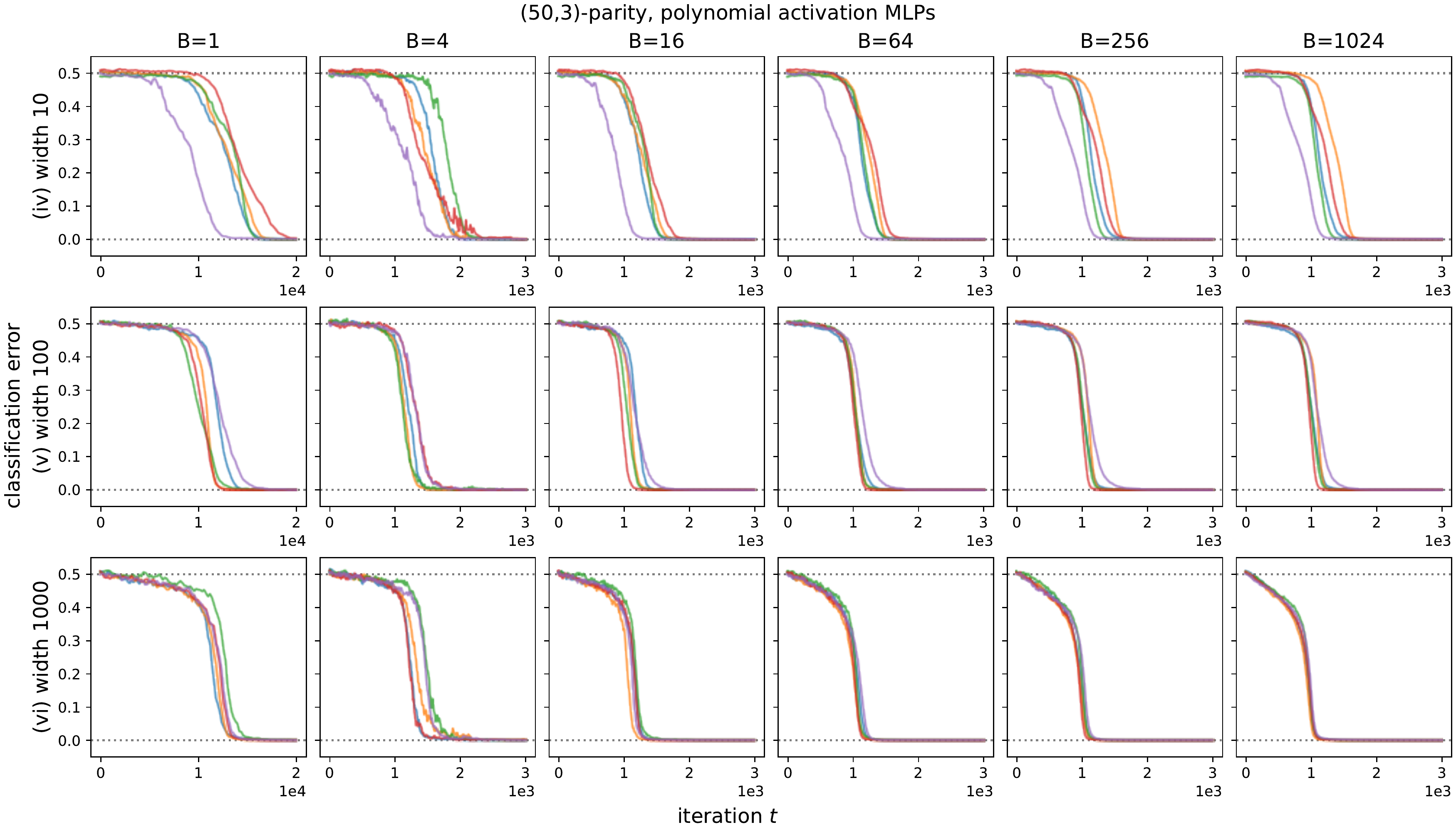}
    \caption{Additional training curves for polynomial-activation MLP architectures (iv), (v), (vi), on the (50, 3)-sparse parity problem. Unlike the other architectures, these settings exhibit continuous progress when the width $r$ and batch size $B$ are large.}
    \label{fig:poly-no-plateau}
\end{figure}
\section{Details for all experiments}
\label{sec:experiment-details}

\subsection{Deep learning configurations}
\label{subsec:arch-configs}

\paragraph{Losses.}
Our ``robust space'' of empirical results use the following loss functions:
\begin{itemize}[leftmargin=*]
    \item Hinge: $\ell(y, \hat y) := (1 - y\hat y)_+$.
    \item Square: $\ell(y, \hat y) := (y - \hat y)^2$.
    \item Cross entropy: $\ell(y, \hat y) := - \log \frac{e^{y \hat y}}{1 + e^{y \hat y}}$.
\end{itemize}
Additionally, the theoretical analysis considers the \emph{correlation loss} $\ell(y, \hat y) := - y\hat y$.

In the configurations corresponding to all of the figures and convergence time experiments, we used the hinge loss. This was a relatively arbitrary choice (i.e. they appeared to be interchangeable upon running small experiments); an advantage of the hinge and square losses over cross entropy is that for architectures that can realize the parity function, there is a zero-loss solution with finite weights.

\paragraph{Initializations.}
Our empirical results use the following i.i.d. weight initializations:
\begin{itemize}[leftmargin=*]
    \item Uniform on the interval $[-c, c]$, where the scale $c$ is chosen for all affine transformation parameters using the ``Xavier initialization'' convention \citep{glorot2010understanding}. The experiments are quite tolerant to the particular choice of $c$ (as these are not deep networks); this choice, which is the default in deep learning packages, emphasizes that our positive empirical results hold under a \emph{standard} initialization scheme.
    \item Gaussian with mean $0$ and variance $\sigma^2$, selected using the ``Kaiming initialization'' convention \citep{he2015delving}.
    \item Bernoulli (i.e. random sign) initialization: the discrete distribution $\mathrm{Unif}(\{-c, c\})$, for the same choice of $c$ as for the uniform distribution.
\end{itemize}

\subsubsection{2-layer MLPs}
We consider $2$-layer MLPs $f(x; W, b, u) = u^\top \sigma(Wx + b)$ for two choices of activations:
\begin{itemize}[leftmargin=*]
    \item ReLU: $\sigma(z) := (z)_+ = \max(0,z)$.
    \item Degree-$k$ polynomial: $\sigma(z) := z^k$.
\end{itemize}
In both cases, whenever $r \geq k$ (and, in the case of polynomial activations, choosing the degree to be $k$), there exists a width-$r$ MLP which can represent $k$-sparse parities: for all $(n,k)$ and $|S|=k$, there is a setting of $W, b, u$ such that $f(x; W, b, u) = \chi_S(x)$.

Note that if the output $f(x; \theta)$ is a degree-$k' < k$ polynomial in $x$ (e.g. an MLP with $\sigma(z) = z^{k'}$ activations), the architecture is incapable of representing a parity of $k$ inputs. In fact, it is incapable of representing any function that has a nonzero correlation with parity; this follows from orthogonality (Equation (\ref{eq:parity-orth})).

\subsubsection{Single neurons}
\label{subsubsec:width-1-details}

To explore the limits of concise parameterization for architectures capable of learning parities, we propose a variety of non-standard activation functions which allow a single neuron to learn sparse parities. These constructions leverage the fact that the parity is a nonlinear function of the sum of its inputs $w_S^\top x$, where $w_S := \sum_{i \in S} e_i$.

\begin{figure}
    \centering
    \includegraphics[width=0.16\linewidth]{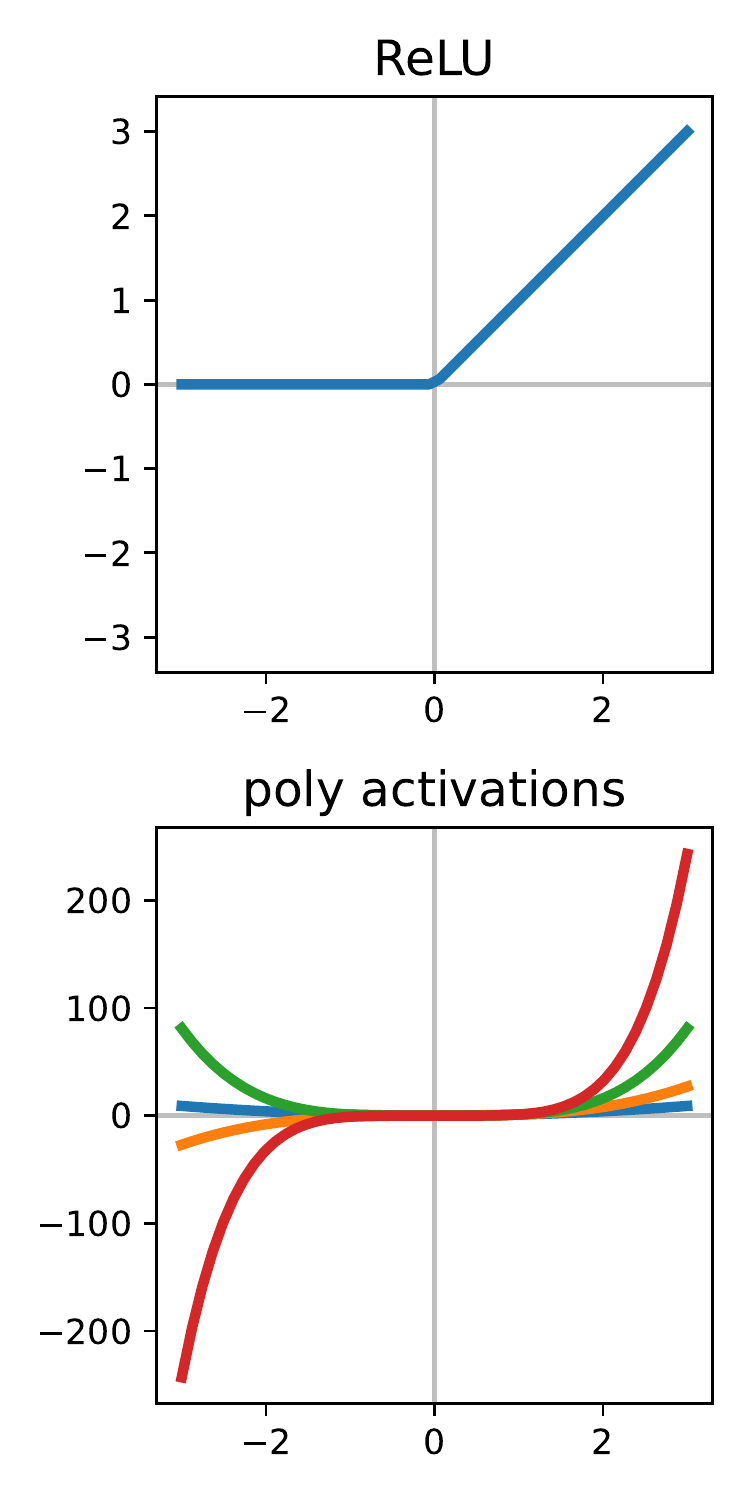}
    \includegraphics[width=0.64\linewidth]{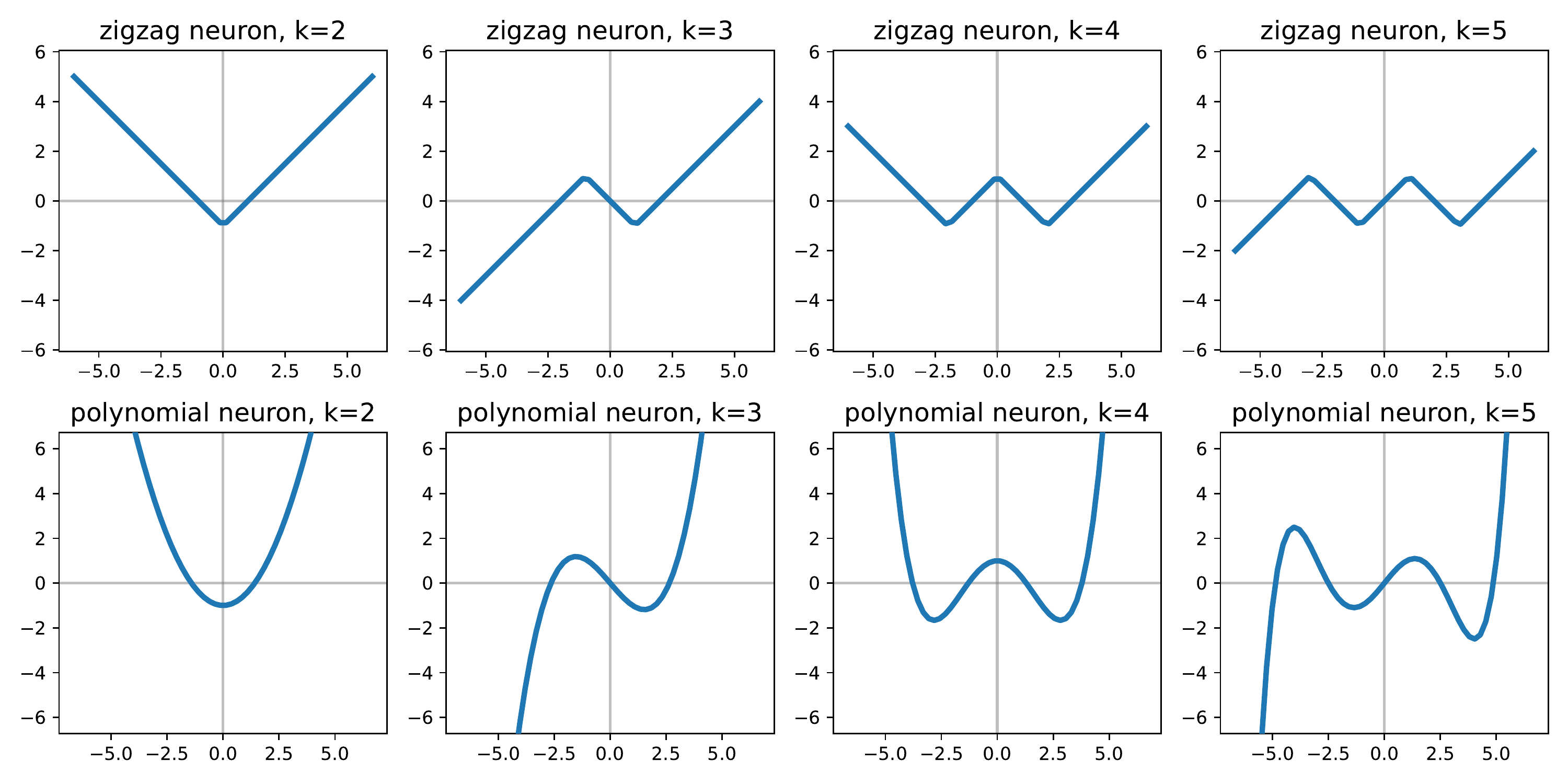}
    \includegraphics[width=0.16\linewidth]{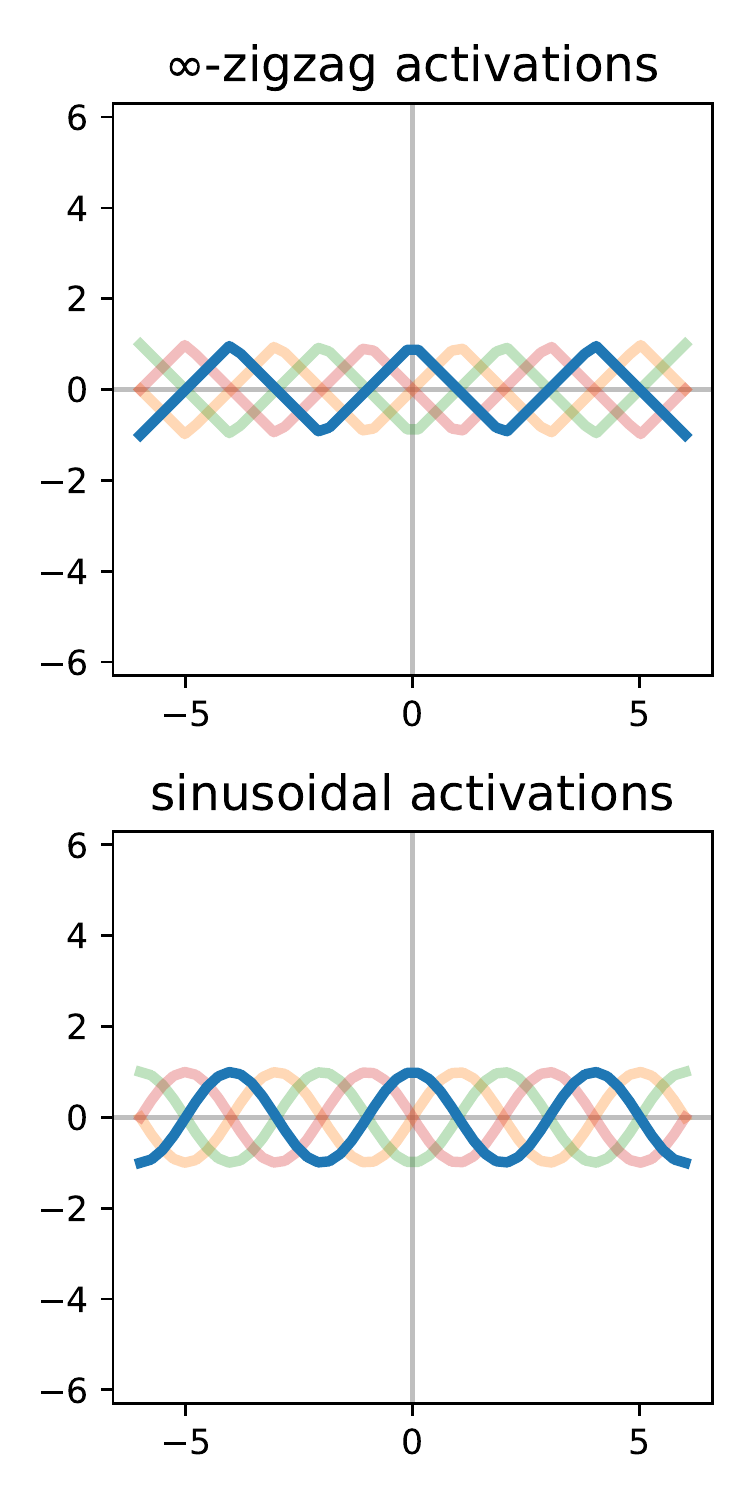}
    \caption{Visualizations of all activation functions considered in this work. As discussed in Section~\ref{subsubsec:width-1-details}, care must be taken to ensure that the architectures can realize sparse parities with the same bias $b$ (in particular, $b = 0$) across varying $k$. Multiple $k$-dependent displacements are shown for the $\infty$-zigzag and sinusoidal activations.}
    \label{fig:funny-activations}
\end{figure}

\paragraph{$k$-zigzag activation.} $\sigma(\cdot)$ is the piecewise linear function which interpolates the $k+1$ points $\{(k,+1), (k-2,-1), (k-4,+1), \ldots, (-k+2, \pm 1), (-k, \mp 1) \}$ with $k$ linear regions $\{(-\infty,-k], [-k,-k+2], \ldots, [k-2,k], [k, +\infty) \}$. Then, $\sigma(w_S^\top x) = \chi_S(x)$.
\paragraph{Oscillating polynomial activation.} $\sigma(\cdot)$ is the degree-$k$ polynomial which interpolates the same points as above.
\paragraph{$\infty$-zigzag activation.} The \emph{infinite} extension of the zigzag activation is the \emph{triangle wave} function $\sigma(\cdot)$ which linearly interpolates the infinite set of points $\bigcup_{i \in \ZZ} \{ (2i, +1), (2i+1, -1) \}$. This can express parities of \emph{arbitrary} degree. The $+1$ and $-1$ can be swapped (resulting in an activation which is equivalent when shifted by a bias term). However, in our experiments, we choose the sign convention depending on $k$ such that $\sigma(w_S^\top x) = +1$. This allows different convergence time curves to be more directly comparable across different $k$, since it removes the effects of the bias of the global minimizer alternating with $k$.
\paragraph{Sinusoidal activation.} $\sigma(z) := \sin(z)$. The sinusoidal neuron $\sin(w^\top x + b)$ can also express parities of arbitrary degree, since it can interpolate the same set of points as the $\infty$-zigzag activation. In the experiments, we pick a shift $\beta$ and use the activation $\sigma(z) := \sin(\frac{\pi}{2} z + \beta)$, such that $\sigma(z)$ interpolates the same points as the sign convention selected for the $\infty$-zigzag activation. In the experiments in Section~\ref{fig:hidden-progress}, the sinusoidal activation is additionally scaled by a factor of 2 ($z \mapsto \sigma(2z)$); this is interchangeable with scaling the learning rate and initialization, and is done to obtain more robust convergence in the particular setting of $(n,k)=(50,3)$.

Figure~\ref{fig:funny-activations} visualizes all families of activations considered in this paper.

\subsubsection{Parity Transformer}
\label{subsubsec:parity-transformer}

The Transformer experiments use a slightly simplified version of the architecture introduced in \citep{vaswani2017attention}. In particular, it omits dropout, layer normalization, tied input/output embedding weights, and a positional embedding on the special $\texttt{[CLS]}$ token. Including these does not change the results significantly; they are all present in the preliminary findings in \citep{edelman2021inductive} (in which an ``off-the-shelf'' Transformer implementation successfully learns sparse parities). We specify the architecture below.

Our \emph{Parity Transformer} has the following hyperparameters: sequence length $n$, token embedding dimension $d_\mathrm{emb}$, attention embedding dimension $d_\mathrm{attn}$, feedforward embedding dimension $d_\mathrm{mlp}$, and number of heads $H$. Its trainable parameters (together denoted by $\theta$) are:

\begin{itemize}[leftmargin=*]
    \item Token embeddings $E_{-1}, E_{+1}, E_{\texttt{[CLS]}} \in \RR^{d_\mathrm{emb}}$ and position embeddings $P_1, \ldots, P_n \in \RR^{d_\mathrm{emb}}$. Let $\theta_\mathsf{emb}$ denote this subset of parameters.
    \item Attention head matrices: $W_Q^{[h]}, W_K^{[h]}, W_V^{[h]} \in \RR^{d_\mathrm{emb} \times d_\mathrm{attn}}$ and $W_\mathrm{out}^{[h]} \in \RR^{d_\mathrm{attn} \times d_\mathrm{emb}}$, for $h = 1, \ldots, H$. Let $\theta_\mathsf{attn}$ denote this subset of parameters.
    \item MLP weights and biases: $W_1 \in \RR^{d_\mathrm{mlp} \times d_\mathrm{emb}}, b_1 \in \RR^{d_\mathrm{mlp}}, W_2 \in \RR^{d_\mathrm{mlp} \times d_\mathrm{emb}}, b_2 \in d_\mathrm{emb}$. Let $\theta_\mathsf{mlp}$ denote this subset of parameters.
    \item Classification head: $u \in \RR^{d_\mathrm{emb}}$.
\end{itemize}

Then,
\[ f_\mathsf{tf}(x ; \theta) := u^\top \left( f_\mathsf{mlp}( f_\mathsf{attn}( f_\mathsf{emb}(x; \theta_\mathsf{emb}) ; \theta_\mathsf{attn} ) ; \theta_\mathsf{mlp} ) \right), \]
where these submodules are defined by

\begin{itemize}[leftmargin=*]
    \item Embedding $f_\mathsf{emb} : \{\pm 1\}^n \rightarrow \RR^{(n+1) \times d_\mathrm{emb}}$: $f_\mathsf{emb}(x ; \theta_\mathsf{emb})_{i,:} := E_{x_i} + P_i$ for $i \in [n]$. We will include an the extra index $\texttt{[CLS]}$, for which $f_\mathsf{emb}(x ; \theta_\mathsf{emb})_{\texttt{[CLS]},:} := E_{\texttt{[CLS]}}$ (with no positional embedding). \texttt{[CLS]} stands for ``classification'', as in ``use the output at this position to classify the sequence''. This is a standard construction which makes the classifier permutation-invariant.
    \item Attention block $f_\mathsf{attn} : \RR^{(n+1) \times d_\mathrm{emb}} \rightarrow \RR^{d_\mathrm{emb}}$:
    \[f_\mathsf{attn}(X ; \theta_\mathsf{attn}) = X_{\texttt{[CLS]}, :} + \sum_{h=1}^H \mathsf{softmax}\left( \frac{1}{\sqrt{d_\mathrm{attn}}} X_{\texttt{[CLS]},:}W_Q^{[h]} (X W_K^{[h]})^\top \right) W_V^{[h]} W_\mathrm{out}^{[h]},\]
    where $\mathsf{softmax}(z) := \exp(z) / \mathbf{1}^\top\exp(z)$. Note that we have specialized this architecture to a single output, at the \texttt{[CLS]} position.
    \item MLP $f_\mathsf{mlp} : \RR^{d_\mathrm{emb}} \rightarrow \RR^{d_\mathrm{emb}}$:
    \[f_\mathsf{mlp}(z ; \theta_\mathsf{mlp}) := z + W_2 \sigma(W_1x + b_1) + b_2,\]
    where $\sigma(\cdot) = \mathsf{GeLU}(\cdot)$ (the Gaussian error linear unit) is the standard choice in Transformers.
\end{itemize}

\paragraph{Training.} Each matrix-shaped parameter was initialized using PyTorch's default ``Xavier uniform'' convention. Unlike the other settings considered in this paper, we were unable to observe successful convergence beyond a few small $(n,k)$ using standard SGD. As is common practice when training Transformers, we used Adam \citep{kingma2014adam} with default adaptive parameters $\beta_1 = 0.9, \beta_2 = 0.999, \eps = 10^{-8}$ in our experiments. While there are more fine-grained accounts of why Adam outperforms vanilla SGD \citep{zhang2020adaptive,agarwal2020disentangling}, finding the optimal optimizer configuration and investigating ablations of this optimizer are outside the scope of this work. In this work, we only tune Adam's learning rate $\eta$.

\subsubsection{PolyNet}
For positive integers $n, k$, the PolyNet architecture is parameterized by weights and biases $\theta := \{(w_i \in \RR^n, b_i \in \RR)\}_{i=1}^k$, and is defined by
\[f_\mathsf{PolyNet}(x; \theta) := \prod_{i=1}^k (w_i^\top x + b_i). \]
Even with all biases $b_i$ set to $0$, this architecture can realize a $k$-wise parity, by setting $\{w_i\} = \{e_j : j \in S\}$ in any permutation.

\subsubsection{Details for Figure~\ref{fig:intro} (left)}
\label{subsec:fig1-details}

Figure~\ref{fig:intro} (left) shows training curves from 8 representative configurations, with online i.i.d. samples from the same distribution, corresponding to the $(n=50, k=3)$-sparse parity problem. The first row encompasses various MLP settings with standard activations:

\begin{itemize}[leftmargin=*]
    \item Setting (i): width-$10$ MLP with ReLU activation ($B = 32, \eta = 0.5$).
    \item Setting (i): width-$10$ MLP with ReLU activation, with large batches ($B = 1024, \eta = 0.05$).
    \item Setting (ii): width-$100$ MLP with ReLU activation, with tiny batches ($B = 1, \eta = 0.05$).
    \item Setting (iv): width-$10$ MLP with polynomial $\sigma(z) = z^3$ activation ($B = 32, \eta = 0.05$).
\end{itemize}

The second row shows other settings:
\begin{itemize}[leftmargin=*]
    \item Setting (vii): width-$1$ MLP with a piecewise linear $k$-zigzag activation ($B = 32, \eta = 0.2$).
    \item Setting (x): width-$1$ MLP with a sinusoidal activation (scaled and shifted for $k = 3$; see the discussion in Section~\ref{subsubsec:width-1-details}) ($B = 32, \eta = 0.05$).
    \item Setting (*iii): Parity Transformer, with $d_\mathrm{emb} = 1024, d_\mathrm{attn} = 8, H = 128$ ($B = 32, \eta = 5 \times 10^{-4}$).
    \item Setting (xv): degree-$3$ PolyNet ($B = 32, \eta = 0.07$).
\end{itemize}

\subsubsection{Details for Figure~\ref{fig:extra-training-curves}}
\label{subsec:extra-training-curves-details}

The first row uses the width-$10$ ReLU MLP configuration $(ii)$, holding $B = 32$ and $\eta = 0.1$ while varying the task difficulty across 6 settings: $(n,k) \in \{(30,3),(60,3),(90,3),(30,4),(30,5),(30,6)\}$. The remaining plots are all for the $(50,3)$ setting.

The second row uses the $k=3$ PolyNet configuration $(xv)$, varying $(B,\eta) \in \{(1,0.005), (4,0.01), (16,0.1), \allowbreak (64,0.1), (256,0.1), (1024,0.1)\}$.

The third row uses the minimally-wide configurations (*i), (*ii), (vii), (viii), (xi), (xii) (thus presenting an example for each non-standard activation), holding batch size $B=1$. $\eta = 0.1$ in each of the cases except (*ii), where $\eta = 0.01$.

The fourth row uses three large architectures: settings (iii), (vi), and (*iii), with $(B,\eta) \in \{(1,0.1), (1024,0.1), \allowbreak (1,0.001), (1024,0.01), (32,0.0003), (1024,0.0003).\}$ (*iii) uses the Adam optimizer instead of SGD.

\subsubsection{Details for Figure~\ref{fig:extra-scaling-curves}}
\label{subsec:scaling-details}

Figure~\ref{fig:extra-scaling-curves} contains scaling plots in various settings for the median convergence time $t_c$. Below, we give comprehensive details about these settings. For each of these runs, we chose $B = 32$ (settings with smaller batch sizes exhibited additional variance; with larger batch sizes, the models were slower to converge), as well as the hinge loss. We used SGD with constant learning rate $\eta$ (enumerated below), except in setting (*iii).

The top row shows MLP settings (i) through (vi). From left to right:
\begin{itemize}[leftmargin=*]
    \item Setting (i): width-$10$ MLP with ReLU activation ($\eta = 1$).
    \item Setting (ii): width-$100$ MLP with ReLU activation ($\eta = 1$).
    \item Setting (iii): width-$1000$ MLP with ReLU activation ($\eta = 1$).
    \item Setting (iv): width-$10$ MLP with $\sigma(z) = z^k$ activation ($\eta = 0.01$).
    \item Setting (v): width-$100$ MLP with $\sigma(z) = z^k$ activation ($\eta = 0.01$).
    \item Setting (vi): width-$1000$ MLP with $\sigma(z) = z^k$ activation ($\eta = 0.01$).
\end{itemize}

The bottom row shows miscellaneous settings. From left to right:
\begin{itemize}[leftmargin=*]
    \item Setting (vii): width-$1$ MLP with degree-$k$ oscillating polynomial activation interpolating the parity function ($\eta = 0.01$).
    \item Setting (xiv): single sinusoidal neuron with no second layer ($\eta = 0.01$).
    \item Setting (*iii): Parity Transformer, with $d_\mathrm{emb} = 1024, d_\mathrm{attn} = 8, H = 128$ ($B = 32, \eta = 3 \times 10^{-4}$).
    \item Setting (iv): degree-$k$ PolyNet ($\eta = 0.05$).
    \item Setting (ii): width-$100$ MLP with ReLU activation ($\eta = 1$), showing an expanded range of $n$ for smaller $k$.
    \item Setting (xv): width-$100$ MLP with $\sigma(z) = z^k$ activation ($\eta = 1$), showing an expanded range of $n$ for smaller $k$.
\end{itemize}

\subsection{Training curves and convergence time plots}
For all example training curves in all figures (in Sections~\ref{sec:empirics} and \ref{sec:discussion}, as well as the appendix), population losses and accuracies are approximated using a batch of size $8192$, sampled once at the beginning of training from the same distribution $\Dcal_S$. All plots of single representative training runs use a fixed random seed (\texttt{torch.manual\_seed(0)}); when $R$ training runs are shown, seeds $0, \ldots, R-1$ are used.

In Figures~\ref{fig:converge-time-table} and \ref{fig:converge-prob-table}, validation accuracies were recorded every $10$ iterations, and a run was recorded as converged if it reached $100\%$ accuracy within $10^5$ iterations; we report the $10^\text{th}$ percentile over 25 random seeds, to reduce variance arising from the more initialization-sensitive settings. In Figure~\ref{fig:scaling-table}, coarse-grained scaling estimates for the ($10^\text{th}$ percentile) convergence time are computed as follows: for $n \in \Ncal := \{10, 20, 30\}$, the smallest $\alpha$ is chosen such that $t_c \leq c \cdot (n - n_0)^{\alpha}$, choosing $n_0 = \min \Ncal - 1 = 9$, so that $c = t_c$ at $n = 10$. These estimates are calculated to give quantitative order-of-magnitude upper bounds for the convergence time. Indeed, the power-law convergence times do not extrapolate at a constant learning rate; see Figure~\ref{fig:scaling-laws} \emph{(right)}, the ``larger $n$'' plots in Figure~\ref{fig:extra-scaling-curves}, and the discussion on batch sizes and learning rates in Appendix~\ref{subsec:building-blocks}.

To reduce computational load, for the larger-scale probes of convergence times $t_c$, validation accuracies were instead checked on a sample of size $128$. For the underparameterized networks (i.e. unable to represent parity, but can still get a meaningful gradient signal), this threshold was changed to $10$ consecutive batches with accuracy at least $55\%$.
Note that for parity learning in particular, a weak learner can be converted into a strong learner: there is an efficient algorithm \citep{goldreich1989hard,kushilevitz1993learning} which, given a classifier which achieves $1/2 + \eps$ accuracy on $\Dcal_S$ for a constant $\eps > 0$, outputs $S$ with high probability.

In the median convergence time plots in Figure~\ref{fig:intro} \emph{(right)}, Figure~\ref{fig:scaling-laws} \emph{(right)}, and Figure~\ref{fig:extra-scaling-curves}, error bars for median convergence times in all plots are $95\%$ confidence intervals, computed from $100$ bootstrap samples. Each point on the each curve corresponds to $1000$ random trials. Halted curves signify more than $50\%$ of runs failing to converge within $T=10^5$ iterations (hence, infinite medians).

\subsection{Implementation, hardware, and compute time}
All training experiments were implemented using PyTorch \citep{paszke2019pytorch}.

Although most of the networks considered in the main empirical results are relatively small, a large ($\sim 10^8$) total number of models were trained to certify the ``robust space'' of results and obtain precise scaling curves. These individual experiments were not large enough to benefit from GPU acceleration; on an internal cluster, the CPU compute expenditure totaled approximately $1500$ CPU hours.

A subset of these experiments stood to benefit from GPU acceleration: width $r \geq 100$ MLPs; scaling behaviors for $n \geq 100$; all experiments involving Transformers. These were performed with NVIDIA Tesla P100, Tesla P40, and RTX A6000 GPUs on an internal cluster, consuming a total of approximately $200$ GPU hours.

\end{document}